\documentclass[journal]{IEEEtran}
\usepackage[utf8]{inputenc}

\usepackage{cite}
\usepackage{upgreek,bm}
\usepackage{amsmath, amssymb, amsfonts}
\usepackage[dvipsnames]{xcolor}
\usepackage{amsthm}
\newtheorem{proposition}{Proposition}

\usepackage{amsthm}
\usepackage{graphicx}
\usepackage{todonotes}
\usepackage[colorlinks=true]{hyperref}
\usepackage{float} 
\usepackage{xcolor} 
\usepackage{forest} 
\usepackage{listings} 
\usepackage{pdflscape}
\usepackage{adjustbox}
\usepackage{framed} 
\colorlet{shadecolor}{green!5} 
\usepackage{paralist, tabularx}

\lstset{
basicstyle=\small,
}
\usepackage{multirow}
\usepackage{booktabs} 
\usepackage{colortbl} 
\usepackage{algorithm}
\usepackage{algpseudocode}

\newcommand\scalemath[2]{\scalebox{#1}{\mbox{\ensuremath{\displaystyle #2}}}}



\usepackage{array}
\newcommand{\PreserveBackslash}[1]{\let\temp=\\#1\let\\=\temp}
\newcolumntype{C}[1]{>{\PreserveBackslash\centering}m{#1}}
\newcolumntype{R}[1]{>{\PreserveBackslash\raggedleft}p{#1}}
\newcolumntype{L}[1]{>{\PreserveBackslash\raggedright}p{#1}}

\usepackage{tikz}
\usetikzlibrary{calc}
\usepackage{xparse}
\DeclareDocumentCommand{\hcancel}{mO{0pt}O{0pt}O{0pt}O{0pt}}{%
\tikz[baseline=(tocancel.base)]{
\node[inner sep=0pt,outer sep=0pt] (tocancel) {$#1$};
\draw[red] ($(tocancel.south west)+(#2,#3)$) -- ($(tocancel.north east)+(#4,#5)$);
}%
}%

\usepackage{mathtools}
\DeclarePairedDelimiter\abs{\lvert}{\rvert}%
\DeclarePairedDelimiterX{\norm}[1]{\lVert}{\rVert}{#1}

\makeatletter
\let\oldabs\abs
\def\abs{\@ifstar{\oldabs}{\oldabs*}}
\let\oldnorm\norm
\def\norm{\@ifstar{\oldnorm}{\oldnorm*}}

\usepackage{xcolor}


\allowdisplaybreaks

\addtolength{\abovedisplayskip}{-1mm}
\addtolength{\belowdisplayskip}{-1mm}

\makeatletter
\def\ps@IEEEtranplain{%
  \def\@oddfoot{\normalfont\hfil\thepage\hfil}%
  \def\@evenfoot{\normalfont\hfil\thepage\hfil}%
  \def\@oddhead{\hbox{}\hfil\normalfont\footnotesize 
    IEEE TRANSACTIONS ON ROBOTICS (T-RO) - PREPRINT VERSION. ACCEPTED SEPTEMBER 2025\hfil\hbox{}}%
  \def\@evenhead{\hbox{}\hfil\normalfont\footnotesize 
    IEEE TRANSACTIONS ON ROBOTICS (T-RO) - PREPRINT VERSION. ACCEPTED SEPTEMBER 2025\hfil\hbox{}}%
}
\pagestyle{IEEEtranplain}
\def\ps@IEEEtitlepagestyle{\ps@IEEEtranplain}
\makeatother

\makeatletter
\def\ps@IEEEtranplain{%
  \def\@oddfoot{\normalfont\hfil\thepage\hfil}
  \def\@evenfoot{\normalfont\hfil\thepage\hfil}%
  \def\@oddhead{\hbox{}\hfil\normalfont\footnotesize 
    IEEE TRANSACTIONS ON ROBOTICS (T-RO) - PREPRINT VERSION. ACCEPTED SEPTEMBER 2025\hfil\hbox{}}%
  \def\@evenhead{\hbox{}\hfil\normalfont\footnotesize 
    IEEE TRANSACTIONS ON ROBOTICS (T-RO) - PREPRINT VERSION. ACCEPTED SEPTEMBER 2025\hfil\hbox{}}%
}
\makeatother

\newcommand{\customcopyrightpage}{
    \begin{titlepage}
        \pagestyle{IEEEtranplain} 
        \parbox{1.0\textwidth}{ 
  This paper has been accepted for publication in IEEE Transactions on Robotics (T-RO). The final version will be available via IEEE Xplore. \\
  \\
            \textcopyright~2025 IEEE.
            Personal use of this material is permitted. Permission from IEEE must be obtained for all other uses, in any
            current or future media, including reprinting/republishing this material for advertising or promotional purposes, creating new
            collective works, for resale or redistribution to servers or lists, or reuse of any copyrighted component of this work in other
            works.
        }\\[1cm]
    \end{titlepage}
}

\begin{document}

\customcopyrightpage


\pagestyle{IEEEtranplain}
\def\ps@IEEEtitlepagestyle{\ps@IEEEtranplain}

\markboth{IEEE TRANSACTIONS ON ROBOTICS,~Vol.~x, No.~x, ~2019}%
{Peng \MakeLowercase{\textit{et al.}}: IEEE TRANSACTIONS ON ROBOTICS}

\title{
$\sqrt{\mathbf{VINS}}$: Robust and Ultrafast Square-Root Filter-based 3D Motion Tracking
}
\author{
Yuxiang Peng,~\IEEEmembership{Student Member,~IEEE,} 
Chuchu Chen,~\IEEEmembership{Member,~IEEE,}
Kejian Wu,~\IEEEmembership{Member,~IEEE,} and Guoquan Huang,~\IEEEmembership{Senior Member,~IEEE}
\thanks{This work was partially supported by 
the University of Delaware (UD) College of Engineering, 
Delaware NASA/EPSCoR Seed Grant, 
NSF (SCH-2014264, MRI-2018905, IIS-2410019),  
Meta Reality Labs, and Google.
Chen was also  supported by the UD Doctoral Fellowship.
}
\thanks{Peng and Huang are with the Department of Mechanical Engineering, University of Delaware, Newark, DE, USA. {\tt\small \{yxpeng,ghuang\}@udel.edu}.}
\thanks{Chen is with the Department of Department of Mechanical and Aerospace Engineering, George Washington University, Washington, DC, USA. {\tt\small chuchu.chen@gwu.edu}.} 
\thanks{Wu is with XREAL Inc, Beijing, China. {\tt\small kejian@xreal.com}.}
}

\maketitle

\IEEEpubid{\begin{minipage}{\columnwidth}\footnotesize
This paper has been accepted for publication in the Special Issue on Visual SLAM of \textbf{IEEE Transactions on Robotics (T-RO)}.\\
The final version will be available via IEEE Xplore. DOI: \textbf{10.1109/TRO.2025.3619051}.\\
\textcopyright~2025 IEEE. Personal use of this material is permitted. Permission from IEEE must be obtained for all other uses, in any\\
current or future media, including reprinting/republishing this material for advertising or promotional purposes, creating new\\
collective works, for resale or redistribution to servers or lists, or reuse of any copyrighted component of this work in other\\
works.
\end{minipage}}

\newtheorem{lemma}{Lemma}


\begin{abstract}

In this paper, we develop and open-source, for the first time, 
a robust and efficient square-root filter (SRF)-based visual-inertial navigation system (VINS), termed $\sqrt{\mathrm{VINS}}$, 
which is ultra-fast, numerically stable, and capable of dynamic initialization even under extreme conditions 
(i.e., extremely small time window).   
Despite recent advancements in VINS, resource constraints and numerical instability on embedded (robotic) systems with limited precision remain critical challenges. 
A square-root covariance-based filter offers a promising solution by providing numerical stability, efficient memory usage, and guaranteed positive semi-definiteness. 
However, canonical SRFs suffer from inefficiencies caused by disruptions in the triangular structure of the covariance matrix during updates.
The proposed method significantly improves VINS efficiency with a novel Cholesky decomposition (LLT)-based SRF update, by fully exploiting the system structure and the SRF to preserve the upper triangular structure of square-root covariance.
Moreover, we design a fast,
robust, dynamic initialization method,
which first quickly recovers the minimal states without triangulating 3D features and then efficiently performs  {\em iterative} SRF update to refine the full states, enabling seamless VINS operation even in challenging scenarios.
The proposed LLT-based SRF is extensively verified through numerical studies, demonstrating superior numerical stability under challenging conditions and achieving robust efficient performance on 32-bit single-precision floats, operating at {\em twice} the speed of state-of-the-art (SOTA) methods.
Our initialization method, tested on both mobile workstations and Jetson Nano computers
achieving a high success rate of initialization even within a $100ms$ window under minimal conditions.
%
\textcolor{black}{
Finally, the proposed $\sqrt{\mathrm{VINS}}$ is extensively validated across diverse scenarios, demonstrating strong efficiency, robustness, and reliability.
The full open-source implementation is released to support future research and applications.
}

%

\begin{center} 
$\sqrt{\mathrm{VINS}}$: \url{https://github.com/rpng/sqrtVINS}
\end{center}

\end{abstract}

\begin{IEEEkeywords}
Visual-Inertial Systems,  State Estimation, SLAM, VIO, Motion Tracking, Initialization, Sensor Fusion
\end{IEEEkeywords}
\IEEEpeerreviewmaketitle


\section{Introduction}

Visual-inertial navigation systems (VINS), which utilize a (single) camera and an inertial measurement unit (IMU) for 3D motion tracking, hold significant potential and are widely used in autonomous robots and mobile devices (e.g., see~\cite{Wu2017ICRA,Chen2022IROS,Chen2023ICRA,Peng2024ICRAb}).
%
%
Numerous VINS algorithms have been developed in recent years~\cite{Huang2019ICRA} and can be broadly categorized into covariance-based and information-based methods. 
The former, such as the multi-state constraint Kalman filter (MSCKF)~\cite{Mourikis2007ICRA}, update estimates using a dense covariance matrix but can lose positive definiteness, causing instability and divergence.
The latter, such as the sliding-window optimization~\cite{Leutenegger2015IJRR},  exploit the sparse structure of the information matrix for efficiency, which, however, may become ill-conditioned. While double-precision arithmetic could mitigate the numerical error, this problem is especially pronounced on embedded platforms, where single-precision floating-point arithmetic is only available on the computation unit or is required to speed up and achieve real-time performance.

The square-root filter (SRF), which tracks the square-root covariance matrix, offers promising benefits such as numerical stability, guaranteed positive-definiteness, and reduced memory requirements~\cite{Bierman2006Book}. However, applying SRF to VINS is challenging due to inefficiencies in maintaining the triangular structure of the covariance matrix during updates, particularly with large measurement sizes. This limitation has prevented SRF from being widely used in VINS.
In the recent work~\cite{Peng2024ICRA},
a novel permuted-QR (P-QR) decomposition of SRF was proposed to efficiently utilize the upper-triangular structure during matrix factorization, 
which has enabled seamless integration of SRF into VINS.
%
In this work, building upon~\cite{Peng2024ICRA}, we develop an enhanced Cholesky decomposition (LLT)-based SRF update to further improve computational efficiency while maintaining numerical stability.

Another critical module for VINS is dynamic initialization, which plays a vital role in recovering the initial states \emph{on the fly} to enable seamless and continuous operation.
Existing methods typically construct a linear system using image features and inertial measurements to obtain a closed-form solution, followed by nonlinear optimization to refine the estimated states.
Although the minimal case of dynamic initialization requires 3 frames and 2 features, corresponding to only $100ms$ with a 20Hz camera~\cite{Martinelli2014IJCV}, existing methods often require over 1 second to achieve robust initialization~\cite{Kneip2011IROS, Mur2017RAL, Qin2017IROS, Qin2018TRO, Von2018ICRA, Campos2020ICRA, Zuniga2021RAL, Wei2022TIM, He2023CVPR, He2024arXiv, Wang2024arXiv, Martinelli2011thesis, Martinelli2014IJCV, Dong2012IROS, Li2014RSS, Kaiser2016RAL, Dominguez2018ISMAR, Campos2019ICRA, Evangelidis2021RAL, Zhou2022ECCV, Merrill2023RSS}.
In this work, we develop a fast initialization method that, for the first time, demonstrates the ability to robustly initialize the system, even in the minimal case.

In particular, the main contributions of this paper are summarized as follows:
\begin{itemize}
    \item 
    We propose square-root VINS ($\sqrt{\mathrm{VINS}}$), achieving exceptional efficiency and remarkable numerical stability on 32-bit computing platforms, with performance more than twice as fast as state-of-the-art (SOTA) algorithms for 3D motion tracking.    
    %
    \item Our $\sqrt{\mathrm{VINS}}$ introduces a novel Cholesky decomposition (LLT)-based SRF update method, which significantly outperforms existing approaches by fully leveraging and preserving the upper-triangular structure of the SRF.
    \item Our $\sqrt{\mathrm{VINS}}$ introduces a novel dynamic initialization module that, for the first time, robustly achieves initialization under the theoretical minimal case ($100ms$ with just 3 keyframes).
    \item We perform extensive numerical studies to highlight potential numerical challenges in VINS and underscore the advantages of the proposed $\sqrt{\mathrm{VINS}}$. 
    Comprehensive real-world experiments validate the notable efficiency boost of the proposed method while maintaining high accuracy.
\end{itemize}

The rest of the paper is organized as follows:
After reviewing the literature focusing on VINS estimation and dynamic initialization in Section~\ref{sec:related}, 
we describe the proposed improvement on the SRF efficiency in Section~\ref{sec:srf}.
Section~\ref{sec:vins} presents the proposed $\sqrt{\mathrm{VINS}}$ while 
Section~\ref{sec:dy_init} details the proposed ultrafast dynamic initialization.
The comprehensive evaluation of the proposed methods is conducted in Sections~\ref{sec:exp_srf}, \ref{sec:exp_init}, and \ref{sec:exp_vins}.
Finally, the paper concludes in Section~\ref{sec:conclu}.



\section{Related Work}
\label{sec:related}

\subsection{Visual-Inertial State Estimation}
An accurate, efficient and reliable state estimation algorithm is the foundation of successful VINS.
From the perspective of iterative update and relinearization, VINS can be categorized into optimization-based method~\cite{Qin2018TRO,Leutenegger2015IJRR,Campos2021TRO,Usenko2019RAL,Chen2023IROS}, and filter-based method~\cite{Bloesch2017IJRR,Huai2019IJRR,Geneva2019ICRA,Huai2021ICRA,Van2021ICRA,Van2023TRO,Chen2024WAFR}.
The former performs iterative update and relinearization to solve the non-linear optimization problems in VINS and can achieve potential accuracy gain at the cost of more required computation, while the latter only performs one-time linearization and achieves superior efficiency, especially desirable for low-end platforms, where computation power is in severe constraint.

From the other perspective, VINS state estimation algorithms can also be categorized into covariance and information forms.
In the former such as the extended Kalman filter (EKF) and its variants, the estimator keeps tracking the dense covariance matrix to update the estimate~\cite{Mourikis2007ICRA, Li2013IJRR, Geneva2019CVPR,Van2021ICRA,Yang2022RAL,Chen2022ICRA,Van2023TRO}.
In contrast, the information estimators such as extended Information filters (EIF)~\cite{Huang2011IROS} or optimization-based methods~\cite{Leutenegger2015IJRR, Qin2018TRO, Von2018ICRA, Campos2021TRO,Chen2024ICRA}, 
maintain the information (Hessian) matrix and exploit its sparse structure in solving for estimates.
However, both covariance and information filters face challenging numerical issues, in particular on resource-constrained edge platforms \cite{Kottas2015ICRA,Wu2015RSS}, when limited word length (32-bit float, instead of 64-bit double) is available or it is required to achieve real-time performance.
In the covariance form, the covariance matrix tends to lose its positive definiteness and cause the estimator to diverge. 
In the information form, as the information matrix can easily become ill-conditioned (e.g., condition number larger than $10^{9}$~\cite{Kottas2015ICRA}), naively inverting it during optimization would lead to large numerical errors (see Chapter 3.5.1 in \cite{Golub2013Book}).

To address this numerical instability, there exist methods that use the square root of the information matrix instead of its full matrix and were shown to be effective to some extent in VINS \cite{Wu2015RSS, Wu2016TR, Caruso2017ICIPIN, Ke2019IROS, Huai2021ICRA, Huai2022RAL, Dellaert2006IJRR, Kaess2007ICRA, Kaess2012IJRR, Demmel2021ICCV, Givens2023JGCD, Wu2024Thesis}.
For example, the method in \cite{Wu2015RSS} maintains an upper triangular square root of prior information and uses QR-decomposition to incorporate new measurements into the prior, then invert it to solve for the state update.
While this estimator achieves the same accuracy with the half of the word length, 
it still has the concerning numerical issue with a relatively high condition number ($10^{5}$) over time, 
especially when paired with a high-precision IMU~\cite{Chauchat2020TCST, Chauchat2022CDC}, 
lead to substantial numerical inaccuracies for long-term operations. 
This numerical issue was addressed recently in~\cite{Ke2024aXiv} by preconditioning with a square root information filter.  

On the contrary, VINS estimators in the covariance form tend to offer better numerical stability.
For instance, in the EKF-based VINS, the only matrix that typically requires inversion, the innovation covariance,
usually possesses a good condition number~\cite{Kottas2015ICRA}. 
This makes the use of the square-root covariance matrix highly appealing for VINS, as it combines the advantages of the covariance form with the benefits of square-root properties.
Surprisingly, this idea of SRF remains largely unexplored in VINS.
Looking into history, the SRF has undergone significant improvements over the decades. 
Back in the 1960s, the initial SRF formulation was proposed by Potter and played a significant role in the Apollo project's success \cite{Potter1963GCC, Battin1964Book}, which has been extended to account for propagation (process) noise~\cite{Dyer1969JOTA}.
While several update methods have been developed to enhance SRF efficiency~\cite{Bellantoni1967AIAA, Andrews1968AIAA, Kaminski1971TAC}, these methods are generally less efficient than conventional KFs due to the disruption of the square-root covariance’s triangular structure during updates. 
Agee~\cite{Agee1972WSMRT} and Carlson~\cite{Carlson1973AIAA} proposed methods that preserve the triangular structure and offer similar efficiency to KF, but these are limited to sequential updates.
Modern systems, however, prefer batch updates with vector operations that enable more efficient level-3 BLAS operations~\cite{Dongarra1990TOMS}. 
These limitations of the SRF make it unsuitable for VINS applications that demand quick, real-time estimates from large-size measurements.


To address the aforementioned issues and fully harness the benefits of the square-root covariance, the recent work~\cite{Peng2024ICRA} introduced a novel (P-)QR-based SRF update and applied it to VINS, gaining computational efficiency and numerical stability.
In this work, we significantly extend~\cite{Peng2024ICRA} by designing a novel LLT-based update method for SRF to achieve even greater efficiency and an ultrafast dynamic initialization module, as well as performing substantially more comprehensive experimental evaluations.

\subsection{Visual-Inertial System Initialization}
In addition to the state estimation algorithm, VINS relies on accurate initial conditions (e.g., velocity and gravity) for a successful operation. 
Although initial conditions can be recovered by assuming static motion, this often involves risky assumptions in practical systems. 
Dynamic initialization, on the other hand, enables VINS to recover key parameters while in motion, eliminating the need for the platform to remain stationary at startup. 
\color{black}
This is also crucial for practical applications, as it enables the system to re-initialize after failures and significantly enhances robustness
\color{black}

Dynamic initialization methods are broadly categorized into tightly coupled and loosely coupled approaches. 
Tightly coupled methods (closed-form solutions) recover initial states directly by integrating both visual and inertial measurements~\cite{Martinelli2011TRO,Martinelli2011thesis,Martinelli2014IJCV,Dong2012IROS,Li2014RSS,Kaiser2016RAL,Dominguez2018ISMAR,Campos2019ICRA,Evangelidis2021RAL,Zhou2022ECCV,Merrill2023RSS}. 
The first closed-form solution for gravity direction, velocity, scale, and accelerometer bias was introduced in~\cite{Martinelli2011TRO} and has been extended to include unknown camera-IMU calibration~\cite{Dong2012IROS} with the minimal case and degenerate motion analysis~\cite{Martinelli2014IJCV}.  
In contrast to formulating a linear system, \cite{Li2014RSS} formulates a MAP (Maximum A Posteriori) problem that incorporates sensor noise to further improve accuracy.
Later on, \cite{Kaiser2016RAL} demonstrated the impact of gyro bias on initialization accuracy and proposed its estimation via nonlinear optimization. 
This was later improved by using a rotation-only constraint \cite{Kneip2013ICCV} and adding line constraints \cite{Wang2024arXiv, He2024arXiv}.

On the other hand, loosely coupled methods first perform visual structure-from-motion (SfM), followed by visual-inertial alignment to match IMU measurements and recover the initial states~\cite{Kneip2011IROS,Mur2017RAL,Qin2017IROS,Qin2018TRO,Von2018ICRA,Campos2020ICRA,Zuniga2021RAL,Wei2022TIM,He2023CVPR,He2024arXiv,Wang2024arXiv}.
%
For example, Qin et al.~\cite{Qin2017IROS,Qin2018TRO} leverage a simplified SfM pipeline to obtain the up-to-scale trajectory, and then formulate a linear system that recovers scale, gravity, and velocity.
The IMU uncertainty is further being considered in visual-inertial alignment to improve performance~\cite{Campos2021TRO}.
A more recent method showed that up-to-scale SfM can be formulated in a maximum-likelihood framework and constrain gravity magnitude, improving accuracy and avoiding issues with iterative solutions~\cite{Zuniga2021RAL}.
%

A key challenge for the aforementioned initialization methods is their requirement to solve for 3D feature positions during the initialization process. Given their cubic complexity with respect to the number of features, this can significantly impact efficiency.
Various approaches have been proposed to address this issue.
For example, \cite{Evangelidis2021RAL} enhances efficiency in tightly coupled initialization by marginalizing (projecting) the depth of each feature bearing and the redundant 3DoF features in a reference frame. 
However, this still requires triangulating the 3D positions of features, which can make the marginalization numerically unstable under low parallax conditions, where some of the features are close to rank deficiency.
Recently, \cite{Merrill2023RSS, Merrill2024IJRR} incorporated learned image depth and leverage affine-invariant single-image depth to reduce feature parameters but relies on a depth network.
The most related work is by He et al.~\cite{He2023CVPR}, which employs the LiGT constraint~\cite{Cai2021TPAMI}, a linear constraint that uses known rotation and camera measurements to formulate the linear system without the need to estimate 3D features.
However, their tightly coupled approach incorporates all visual measurements, resulting in a large measurement size. Their loosely coupled method solves the LiGT constraints for positions up to scale before solving for inertial states with additional velocity parameters, increasing the state size.

In contrast, we introduce a novel linear system formulation using inertial and bearing measurements to solve for initial velocity and gravity. 
We neither recover 3D feature positions nor retain unnecessary unknown scales, making our solution both highly efficient and robust.
To ensure successful VINS operation after initialization, we also introduce an SRF-based refinement that improves state accuracy and efficiently computes the initial covariance. Unlike the \color{black}Visual-inertial Bundle Adjustment (VI-BA)\color{black} approach, which relies on potentially unstable matrix inversion, our method directly computes the covariance, ensuring robustness for VINS.
Given these benefits, our initialization method is the first among all existing approaches to achieving the theoretical minimum condition, requiring only 3 sequential frames in just $100~ms$ initialization window.

\begin{figure*}
\centering
\includegraphics[width=0.9\linewidth]{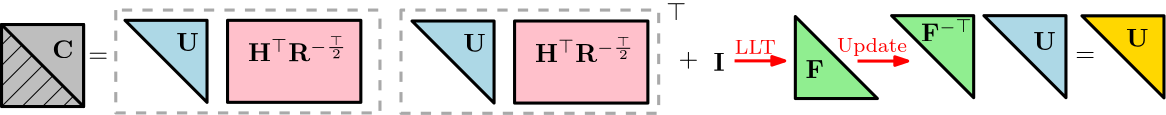}
\caption{Visualization of the matrix computation and its structure evolution during the proposed LLT-based SRF update.
}
\label{fig:srf_update}
\end{figure*}

\section{Improving SRF Computational Efficiency}
\label{sec:srf}

%

The SRF \color{black}has existed \color{black} for decades and potentially has better numerical stability and efficiency~\cite{Maybeck1982Book}. Specifically, unlike a standard EKF (and its variants) that estimates the (dense) covariance matrix $\mathbf{P}$, 
the SRF propagates and updates the corresponding {upper-triangular} square-root matrix $\mathbf{U}$ which is given by~\cite{Maybeck1982Book}:
\begin{align} \label{eq:UtU}
    \mathbf{U}^{\top}\mathbf{U} = \mathbf{P}
\end{align}
%

Because of this special structure, the SRF can represent a much broader dynamic range of numbers and reduce the condition numbers of the involved matrices (especially, the condition number of $\mathbf{U}$ as compared to that of  $\mathbf{P}$).
It also implicitly guarantees the symmetry and positive semi-definiteness of the covariance matrix.
These numerical advantages of the SRF ensure better numerical stability (over the EKF).
Furthermore, the SRF can potentially be more efficient in computation and memory usage, as it can operate with shorter word lengths (e.g., 32-bit floating-point instead of 64-bit) without compromising accuracy.

However, we have not seen its wide adoption in VINS, because it is challenging to fully capitalize on these benefits in \color{black}practice\color{black}.
%
In particular, the canonical update form in the SRF is computationally expensive. 
For example, the Potter SRF~\cite{Potter1963GCC}, the earliest SRF method, destroys the upper triangular structure of $\mathbf{U}$ during the update, necessitating an additional costly matrix triangulation step to restore the structure. 
The Carlson SRF~\cite{Carlson1973AIAA} improved upon the Potter form by preserving the upper triangular structure during updates, resulting in significant speedups.
However, both the Potter and Carlson forms are derived for scalar measurements.
While they can be extended to vector measurements through sequential scalar updates~\cite{Andrews1968AIAA,Kaminski1971TAC}, they fail to fully leverage Single Instruction/Multiple Data (SIMD) in modern computers to accelerate matrix operations.
There are SRF update forms that support direct updates for vector measurements~\cite{Andrews1968AIAA,Kaminski1971TAC,Wu2024Thesis}.
However, they either fail to maintain upper-triangular $\mathbf{U}$ or require a theoretically higher number of floating-point operations during updates, which diminish the benefit of vectorization.
To address these limitations in existing SRFs, we propose a novel Cholesky decomposition (LLT)-based SRF update method. 
This approach overcomes computational bottlenecks by doing vector updates and properly maintaining upper-triangular $\mathbf{U}$ during the update, and also has a smaller number of operations when the measurement size is large.
Moreover, we also fully exploit the upper-triangular structure of the SRF and further optimize its computation in both propagation and update. 
\color{black}

%


\textcolor{black}{In the following section, we first briefly introduce the standard SRF propagation step for completeness. We then present our novel update formulation, which is the core contribution of this work.}

\subsection{QR-based SRF Propagation}
\label{sec:sr-prop}

As the SRF propagates and updates the state estimate (mean) in the same way as the EKF~\cite{Maybeck1982Book},
we here will focus on deriving EKF-equivalent propagation and update of square-root covariance matrix $\mathbf U$.
In particular, given $\mathbf P_{k-1|k-1}=\mathbf{U}_{k-1|k-1}^\top\mathbf{U}_{k-1|k-1}$
leveraging QR decomposition, we can derive the covariance  and its square root propagation as follows:

\begin{align}
\scalemath{.93}{\mathbf{P}_{k|k-1}} &=  \scalemath{.93}{\bm{\Phi}_{k-1}\mathbf{P}_{k-1|k-1}  \bm{\Phi}_{k-1}^{\top} +   \mathbf{W}_{k-1}} \label{eq:cov111}\\
    &= \scalemath{.93}{\bm{\Phi}_{k-1}\mathbf{U}_{k-1|k-1}^\top\mathbf{U}_{k-1|k-1}\bm{\Phi}_{k-1}^{\top} + \mathbf{W}_{k-1}^{\frac{\top} 2}\ \mathbf{W}_{k-1}^{\frac{1}{2}}}\\
    &= \scalemath{.93}{\begin{bmatrix}
          \label{eq:srf_prop_cov_pf1}
        \mathbf{W}_{k-1}^{\frac{\top}{2}}
        &
        \bm{\Phi}_{k-1}
        \mathbf{U}_{k-1|k-1}^\top
    \end{bmatrix}
    \begin{bmatrix}
        \mathbf{W}_{k-1}^{\frac{1}{2}}
        \\
        \mathbf{U}_{k-1|k-1}
        \bm{\Phi}_{k-1}^{\top}         
    \end{bmatrix}}\\
    & \scalemath{.93}{\overset{\mathrm{QR}}{=:   }
    \begin{bmatrix}
        \mathbf{U}_{k|k-1}^\top
        &
        \mathbf{0}
    \end{bmatrix} \mathbf{Q}_{k-1}^\top 
     \mathbf{Q}_{k-1} 
    \begin{bmatrix}
        \mathbf{U}_{k|k-1}
        \\
        \mathbf{0}
    \end{bmatrix}}\\
    &= \mathbf{U}_{k|k-1}^\top \mathbf{U}_{k|k-1}
\end{align}
where $\bm\Phi_{k-1}$ is the state transition or system Jacobian matrix,
and we have employed the Cholesky decomposition (LLT) on the system noise covariance $\mathbf{W}_{k-1}$, i.e., 
$\mathbf{W}_{k-1} \overset{\mathrm{LLT}}{=} \mathbf{W}_{k-1}^{\frac{\top}{2}}\mathbf{W}_{k-1}^{\frac{1}{2}}$,
as well as the fact that $\mathbf Q_{k-1}$ is orthonormal.
It is clear from the above derivations that the square root covariance $\mathbf U_{k|k-1}$ propagates from time $t_{k-1}$ to $t_{k}$  via efficient QR decomposition:
\begin{align}
    \begin{bmatrix}
          \label{eq:srf_prop_cov}
        \mathbf{W}_{k-1}^{\frac{1}{2}}
        \\
        \mathbf{U}_{k-1|k-1}
        \bm{\Phi}_{k-1}^{\top}
    \end{bmatrix}
    \stackrel{\text{QR}}{=} 
    \mathbf{Q}_{k-1} 
    \begin{bmatrix}
        \mathbf{U}_{k|k-1}
        \\
        \mathbf{0}
    \end{bmatrix}
\end{align}

\subsection{LLT-based SRF Update}
As discussed earlier, the update is a computational bottleneck if a canonical SRF form is used.
Although a recent work~\cite{Peng2024ICRA} introduced the permuted-QR (P-QR)-based square-root update to mitigate this issue, there is still room to further improve efficiency with Cholesky decomposition,
which leads to our novel LLT-based efficient SRF update algorithm:
\begin{lemma}
\color{black}{Let the measurement model be given by} $\mathbf z_k = \mathbf h(\mathbf x_k) + \mathbf n_k$, with measurement residual as:\footnote{
Throughout the paper $\hat{\mathbf{x}}$ is used to denote the  estimate of a random variable~$\mathbf{x}$, while $\tilde{\mathbf{x}}$
is the corresponding error state.
The subscript $a|b$ denotes the estimate at time $t_a$ by fusing all the measurements up to time $t_b$.
}
\begin{align}\label{eq:general-meas}
{\mathbf{r}}_{k} := \mathbf z_k -\hat{\mathbf z}_{k|k-1} \simeq \mathbf{H}_{k} \tilde{\mathbf{x}}_{k|k-1}+\mathbf{n}_{k}
\end{align}
where the measurement noise assumes $\mathbf n_k \sim \mathcal N(\mathbf 0, \mathbf R_k)$
and $\mathbf{H}_{k}$ is the measurement Jacobian, the state and square-root covariance update of the SRF can be written as:
\begin{align}
      \label{eq:srf_up_cov}
       \mathbf{U}_{k|k} & = \mathbf{F}_{k}^{-\top}  \mathbf{U}_{k|k-1}       \\
        \hat{\mathbf{x}}_{k|k}    &=        \hat{\mathbf{x}}_{k|k-1}        +
       \mathbf{U}_{k|k}^{\top} \mathbf{U}_{k|k} \mathbf{H}_{k}^{\top}\mathbf{R}_{k}^{-1} {\mathbf{r}}_{k}        \label{eq:srf_up_mean}
\end{align}
where $\mathbf{F}_k$ is computed via Cholesky decomposition of:
\begin{align}
    \label{eq:update}
    \mathbf{C}_k
    & :=
    \mathbf{I} +  \mathbf{U}_{k|k-1}  \mathbf{H}_{k}^{\top} \mathbf{R}_{k}^{-1}\mathbf{H}_{k}\mathbf{U}_{k|k-1}^{\top}
    \stackrel{\mathrm{LLT}}{=}
    \mathbf{F}_{k}^{\top}\mathbf{F}_{k}   
\end{align}
Note that a similar permutation is applied prior to the Cholesky decomposition, following the approach in~\cite{Peng2024ICRA}, to ensure that the resulting matrix $\mathbf{F}_k$ is lower triangular. Consequently, $\mathbf{F}_k^{-\top}$ is upper triangular, as required in the update.
The SRF update proceeds by first computing $\mathbf{U}_{k|k}$ in~\eqref{eq:srf_up_cov}, followed by the state update in~\eqref{eq:srf_up_mean}.
\color{black}
\end{lemma}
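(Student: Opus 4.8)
The plan is to show that the proposed update is algebraically equivalent to the standard EKF update, so that it suffices to verify two things: that $\mathbf{U}_{k|k}^{\top}\mathbf{U}_{k|k}=\mathbf{P}_{k|k}$ reproduces the EKF posterior covariance, and that the mean correction in~\eqref{eq:srf_up_mean} coincides with the usual Kalman gain correction $\mathbf{K}_k\mathbf{r}_k$, where $\mathbf{K}_k=\mathbf{P}_{k|k-1}\mathbf{H}_k^{\top}\mathbf{S}_k^{-1}$ and $\mathbf{S}_k=\mathbf{H}_k\mathbf{P}_{k|k-1}\mathbf{H}_k^{\top}+\mathbf{R}_k$. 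I would begin from the standard covariance update and rewrite it in information form via the matrix inversion lemma (Woodbury identity), obtaining $\mathbf{P}_{k|k}=(\mathbf{P}_{k|k-1}^{-1}+\mathbf{H}_k^{\top}\mathbf{R}_k^{-1}\mathbf{H}_k)^{-1}$, which is the natural bridge to the bracketed matrix $\mathbf{C}_k$ in~\eqref{eq:update}.

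For the covariance, I would substitute $\mathbf{P}_{k|k-1}=\mathbf{U}_{k|k-1}^{\top}\mathbf{U}_{k|k-1}$ (invertible since the prior covariance is positive definite) and factor $\mathbf{U}_{k|k-1}^{-1}$ on the left and $\mathbf{U}_{k|k-1}^{-\top}$ on the right inside the inverse. This exposes precisely $\mathbf{C}_k=\mathbf{I}+\mathbf{U}_{k|k-1}\mathbf{H}_k^{\top}\mathbf{R}_k^{-1}\mathbf{H}_k\mathbf{U}_{k|k-1}^{\top}$, yielding $\mathbf{P}_{k|k}=\mathbf{U}_{k|k-1}^{\top}\mathbf{C}_k^{-1}\mathbf{U}_{k|k-1}$. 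Substituting the Cholesky factorization $\mathbf{C}_k=\mathbf{F}_k^{\top}\mathbf{F}_k$ and using $\mathbf{C}_k^{-1}=\mathbf{F}_k^{-1}\mathbf{F}_k^{-\top}$ gives $\mathbf{P}_{k|k}=(\mathbf{F}_k^{-\top}\mathbf{U}_{k|k-1})^{\top}(\mathbf{F}_k^{-\top}\mathbf{U}_{k|k-1})$, from which reading off one factor delivers the claimed square-root update $\mathbf{U}_{k|k}=\mathbf{F}_k^{-\top}\mathbf{U}_{k|k-1}$ in~\eqref{eq:srf_up_cov}.

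For the mean, I would establish the standard gain identity $\mathbf{P}_{k|k}\mathbf{H}_k^{\top}\mathbf{R}_k^{-1}=\mathbf{K}_k$ by substituting the posterior covariance and collecting the resulting terms over $\mathbf{S}_k^{-1}$, which simplifies to $\mathbf{P}_{k|k-1}\mathbf{H}_k^{\top}\mathbf{S}_k^{-1}=\mathbf{K}_k$. Combined with the already-established $\mathbf{U}_{k|k}^{\top}\mathbf{U}_{k|k}=\mathbf{P}_{k|k}$, the correction term in~\eqref{eq:srf_up_mean} equals $\mathbf{P}_{k|k}\mathbf{H}_k^{\top}\mathbf{R}_k^{-1}\mathbf{r}_k=\mathbf{K}_k\mathbf{r}_k$, matching the EKF mean update exactly.

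The algebra above is routine, so I expect the main obstacle to lie in justifying the triangular-structure claim. Here I would argue that $\mathbf{C}_k$ is symmetric positive definite (an identity plus a positive semi-definite term), so its Cholesky factor exists and is unique; the permutation inherited from~\cite{Peng2024ICRA} is what forces $\mathbf{F}_k$ to be lower triangular, making $\mathbf{F}_k^{-\top}$ upper triangular. Since the product of two upper-triangular matrices is again upper triangular and $\mathbf{U}_{k|k-1}$ is upper triangular by construction, $\mathbf{U}_{k|k}=\mathbf{F}_k^{-\top}\mathbf{U}_{k|k-1}$ inherits the upper-triangular form required of a valid square-root covariance, which closes the argument.
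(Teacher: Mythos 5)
Your proposal is correct and follows essentially the same route as the paper: both verify EKF equivalence by exposing $\mathbf{C}_k = \mathbf{I} + \mathbf{U}_{k|k-1}\mathbf{H}_k^{\top}\mathbf{R}_k^{-1}\mathbf{H}_k\mathbf{U}_{k|k-1}^{\top}$, writing $\mathbf{P}_{k|k} = \mathbf{U}_{k|k-1}^{\top}\mathbf{C}_k^{-1}\mathbf{U}_{k|k-1}$, reading off the Cholesky factor to get $\mathbf{U}_{k|k} = \mathbf{F}_k^{-\top}\mathbf{U}_{k|k-1}$, and then establishing the mean update via the gain identity $\mathbf{K}_k = \mathbf{P}_{k|k}\mathbf{H}_k^{\top}\mathbf{R}_k^{-1}$. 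The only (immaterial) variation is that you reach $\mathbf{C}_k$ by passing through the information form $\mathbf{P}_{k|k} = (\mathbf{P}_{k|k-1}^{-1}+\mathbf{H}_k^{\top}\mathbf{R}_k^{-1}\mathbf{H}_k)^{-1}$, which tacitly requires the prior covariance to be invertible, whereas the paper applies the Woodbury-type identity directly to the covariance-form update and never inverts $\mathbf{P}_{k|k-1}$.
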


\begin{proof}
We now show the proposed LLT-based SRF update is equivalent to the EKF:
\begin{align}
    & 
    \scalemath{.9}{  \mathbf{P}_{k|k} = 
    \mathbf{P}_{k|k-1} - \mathbf{P}_{k|k-1}\mathbf{H}_{k}^{\top}\left(\mathbf H_k\mathbf P_{k|k-1}\mathbf H_k^\top +\mathbf R_k \right)^{-1}\mathbf{H}_{k}\mathbf{P}_{k|k-1} } \notag
    \\
    &=
    \scalemath{.725}{
    \mathbf{U}_{k|k-1}^{\top}\left(
    \mathbf{I}- \mathbf{U}_{k|k-1} \mathbf{H}_{k}^{\top}
    \left(
    \mathbf{H}_{k} \mathbf{U}_{k|k-1}^{\top}\mathbf{U}_{k|k-1} \mathbf{H}_{k}^{\top}+\mathbf{R}_{k} 
    \right)^{-1}
     \mathbf{H}_{k} \mathbf{U}_{k|k-1}^{\top}
    \right)\mathbf{U}_{k|k-1}
    } \notag
    \\
    &=
    \mathbf{U}_{k|k-1}^{\top}
    \left(
    \underbrace{
    \mathbf{I} +  \mathbf{U}_{k|k-1}  \mathbf{H}_{k}^{\top} \mathbf{R}_{k}^{-1}\mathbf{H}_{k}\mathbf{U}_{k|k-1}^{\top}
    }_{\mathbf{C}_k = \mathbf{F}_{k}^{\top}\mathbf{F}_{k}}
    \right)^{-1}
    \mathbf{U}_{k|k-1} \notag
    \\
    \notag
    &=
    \mathbf{U}_{k|k-1}^{\top}\mathbf{F}_{k}^{-1}
    \mathbf{F}_{k}^{-\top}\mathbf{U}_{k|k-1}
    =:
    \mathbf{U}_{k|k}^{\top} \mathbf{U}_{k|k} \notag
\\
\notag
\\
 & \mathbf{P}_{k|k-1}\mathbf{H}_{k}^{\top}\left(\mathbf H_k\mathbf P_{k|k-1}\mathbf H_k^\top +\mathbf R_k \right)^{-1} {\mathbf{r}}_{k}
\notag
\\
\notag
&= 
\mathbf{P}_{k|k-1}\mathbf{H}_{k}^{\top}\left(\mathbf H_k\mathbf P_{k|k-1}\mathbf H_k^\top +\mathbf R_k \right)^{-1}\mathbf R_k \mathbf{R}_{k}^{-1} {\mathbf{r}}_{k}
\\
\notag
&= 
\scalemath{0.8}{(\mathbf{P}_{k|k-1}\mathbf{H}_{k}^{\top} - \mathbf{P}_{k|k-1}\mathbf{H}_{k}^{\top}\left(\mathbf H_k\mathbf P_{k|k-1}\mathbf H_k^\top +\mathbf R_k \right)^{-1}\mathbf{H}_{k}\mathbf{P}_{k|k-1}\mathbf{H}_{k}^{\top}) \mathbf{R}_{k}^{-1} {\mathbf{r}}_{k}}
\\
\notag
&= 
\scalemath{0.8}{(\mathbf{P}_{k|k-1} - \mathbf{P}_{k|k-1}\mathbf{H}_{k}^{\top}\left(\mathbf H_k\mathbf P_{k|k-1}\mathbf H_k^\top +\mathbf R_k \right)^{-1}\mathbf{H}_{k}\mathbf{P}_{k|k-1})\mathbf{H}_{k}^{\top} \mathbf{R}_{k}^{-1} {\mathbf{r}}_{k}}
\\
&= \mathbf{P}_{k|k} \mathbf{H}_{k}^{\top}\mathbf{R}_{k}^{-1} {\mathbf{r}}_{k}
\notag
\\
&= \mathbf{U}_{k|k}^\top \mathbf{U}_{k|k} \mathbf{H}_{k}^{\top}\mathbf{R}_{k}^{-1} {\mathbf{r}}_{k}
\notag
\end{align}
\color{blue}\end{proof}
\color{black}
\textcolor{black}{
In the above derivations, we fully exploit the \textit{upper triangular} structure of $\mathbf{F}_k^{-\top}$. Figure~\ref{fig:srf_update} illustrates the matrix operations and the structural evolution throughout this process. While algebraically equivalent to the EKF, the proposed LLT-based SRF update is significantly more efficient than other square-root update methods. It avoids redundant factorizations and explicitly leverages the triangular structure for faster and more scalable computation. A detailed discussion of these efficiency gains is provided below.
}

\begin{table} 
\centering
\caption{
Flops of the different SRF measurement update forms assuming uncorrelated measurements and considering only the dominant computation terms
($m$ measurements, $n$ states).
}
\resizebox{0.99\linewidth}{!}{%
\begin{tabular}{@{}ccccccc@{}}
\toprule
\textbf{Methods} & \textbf{Potter}\cite{Potter1963GCC} & \textbf{Carlson}\cite{Carlson1973AIAA}   & 
\textbf{Kaminski}\cite{Kaminski1971TAC}\cite{Wu2024Thesis} & 
\textbf{P-QR}\cite{Peng2024ICRA}  
& \textbf{Proposed}      \\ \midrule
\textbf{Flops}   & $6mn^2$ & $\frac{7}{2}mn^2$ & $2m^2n+5mn^2+\frac{4}{3}n^3$ & $3mn^2+\frac{1}{3}n^3$ & $2mn^2+\frac{2}{3}n^3$ \\ \bottomrule  \\ 
\end{tabular}\label{tab:srf-flops}
}
\end{table}



We stress that efficient computation of the lower-triangular matrix $\mathbf{F}_k$ is critical to enable efficient update of the square-root covariance~\eqref{eq:srf_up_cov}, because $\mathbf F_k^{-\top}$ and $\mathbf U_{k|k-1}$  are both upper triangular and their product~\eqref{eq:srf_up_cov} would be trivial and become upper-triangular by structure.
To this end,  when computing $\mathbf{C}_k$~\eqref{eq:update}, we exploit the upper-triangular structure of $\mathbf{U}_{k|k}$ and the symmetry of $\mathbf{C}_k$.
When computing $\mathbf{U}_{k|k}$~\eqref{eq:srf_up_cov}, 
we utilize the upper triangular structure of $\mathbf{F}_k^{\top}$ and  $\mathbf{U}_{k|k-1}$ 
and efficiently solve $\mathbf{F}_k^{\top} \mathbf{U}_{k|k} = \mathbf{U}_{k|k-1}$ via back substitution.
%
To quantify the computational cost,
we calculate the number of arithmetic operations (i.e., floating point operations or FLOPs) required in the update under the common assumptions:
(i) measurements are uncorrelated, and (ii) only the dominant (highest) order of computation is considered.
%
Table~\ref{tab:srf-flops} summarizes the computational complexity of various SRF update methods for $m$ measurements and $n$
states. 
As evident, the proposed LLT-based update method requires fewer operations when $m > \frac{4}{9}n$, as compared to the Carlson update~\cite{Carlson1973AIAA}.
In contrast to the P-QR-based method~\cite{Peng2024ICRA}, this new update has a smaller coefficient for the term involving $mn^2$ while a slightly larger coefficient for the term involving $n^3$. 
As a result, the proposed method is more efficient when $m > \frac{1}{3}n$,
which typically holds true for VINS, where the measurement numbers are significantly larger than the state dimensions.
Theoretically, when decomposing a matrix to obtain its square root, the Cholesky-based method requires fewer FLOPs than the QR-based method, albeit with slightly reduced numerical stability~\cite{Golub2013Book}.
However, due to the relatively small condition number of $\mathbf{C}$  in practical VINS, Cholesky-based decomposition can be employed here without encountering numerical issues.

\section{Square-Root VINS}
\label{sec:vins}

We now apply the proposed SRF presented in the preceding section to the VINS problem and design an ultrafast and numerically stable visual-inertial state estimator, termed {\em Square-Root VINS (sqrt-VINS)} or $\sqrt{\mathrm{VINS}}$. 
The proposed $\sqrt{\mathrm{VINS}}$ is developed within an efficient sliding-window filtering framework by fully exploiting the specific block triangular structure of the system to achieve {\em faster-than-ever} performance.

First of all, we propose a special ordering of state vector to avoid unnecessary computations in $\sqrt{\mathrm{VINS}}$.
Specifically, at time $t_{k}$, the state vector $\mathbf{x}_k$ consists of the current navigation states $\mathbf{x}_{I_k}$, \textcolor{black}{the calibration state $\mathbf{x}_{cb}$}, historical IMU pose clones $\mathbf{x}_{C}$, and a set of 3D environmental (SLAM) features $\mathbf{x}_f$, in the following order:
\begin{align}
    \mathbf{x}_k
    &= 
    \label{eq:state}
    \begin{bmatrix}
    \mathbf{x}_{I_k}^{\top}
    ~
    \mathbf{x}_{cb}^{\top}
    ~|~
    \mathbf{x}_{C}^{\top}
    ~
    \mathbf{x}_{f}^{\top}
    \end{bmatrix}^{\top}           
    =: \begin{bmatrix} \mathbf{x}_{x_k}^{\top} & | & \mathbf{x}_{C}^{\top} &   \mathbf{x}_{f}^{\top}\end{bmatrix}^\top
    \\
      \mathbf{x}_{I_k}
    &=
    \begin{bmatrix}
    \label{eq:state_imu}
    {}_{G}^{I_k}\Bar{q} ^{\top} &
    {}^{G}\mathbf{p}_{I_k} ^{\top} &
    {}^{G}\mathbf{v}_{I_k} ^{\top} &
    \mathbf{b}_{g} ^{\top}&
    \mathbf{b}_{a} ^{\top} 
    \end{bmatrix}^{\top}
    \\
    \mathbf{x}_{cb} &=
    \begin{bmatrix}
        t_d & {}_{C}^{I}\bar{q}^{\top} & {}^{C}\mathbf{p}_I^{\top} & \boldsymbol{\zeta}^{\top}
    \end{bmatrix}
    \\
    \mathbf{x}_{C}
    &=
    \begin{bmatrix}
    \mathbf{x}_{T_{k}}^{\top} 
    \dots
    \mathbf{x}_{T_{k-c}}^{\top}
    \end{bmatrix}^{\top} 
    \\
    \mathbf{x}_f
    \label{eq:state_feat}
    & =
    \begin{bmatrix}
    {}^{G}\mathbf{f}_{1}^{\top}
    \dots
   {}^{G}\mathbf{f}_{L}^{\top}
    \end{bmatrix}^{\top}
\end{align}
where $ {}_{G}^{I}\Bar{q}$ is the JPL unit quaternion corresponding to rotation matrix ${}_{G}^{I}\mathbf{R}$ that represents the rotation from the global $\{G\}$ to the IMU frame $\{I\}$; 
\textcolor{black}{We use the JPL convention to maintain consistency with the majority of prior work in filter-based VIO. For a detailed comparison between the JPL and Hamiltonian conventions, we refer readers to \cite{sola2017quaternion}.
}
${}^{G}\mathbf{p}_{I}$, ${}^{G}\mathbf{v}_{I}$, and ${}^{G}\mathbf{f}_i$ are the IMU position, velocity, and $i$-th feature position in $\{G\}$;
$\mathbf{b}_{g}$ and $\mathbf{b}_a$ are the gyroscope and accelerometer biases; 
$\mathbf{x}_{T_i}=[{}_{G}^{I_{i}}\Bar{q}^{\top}~ {}^{G}\mathbf{p}_{I_{i}} ^{\top} ]^\top$ is the $i$-th cloned pose;
$t_d$  and $\{{}_{C}^{I}\bar{q}^{\top} , {}^{C}\mathbf{p}_I^{\top}\}$ are respectively the time offset and extrinsic calibration between the camera and IMU, while $\boldsymbol{\zeta}$ is the camera intrinsic parameters.

Inspired by~\cite{Wu2024Thesis}, the special ordering of the state variables~\eqref{eq:state} is discovered based on our computational complexity analysis and follows the principle: 
The states to be frequently marginalized are placed at the end (i.e., IMU clones and feature positions),
while the variables that will be kept in the state vector (such as the current navigation states and calibration parameters) are stored in the front.
By doing so, we can leverage the triangular structure of the SRF to efficiently insert and delete (marginalize) variables from the state vector without expensive re-ordering, which will become clear in the ensuing $\sqrt{\mathrm{VINS}}$ operations (see Sections~\ref{sec:clone} and \ref{sec:cam_meas}).


\subsection{IMU Propagation and Integration}
\label{sec:vins_prop}

A standard 6-axis IMU provides local linear acceleration and angular velocity measurements
$\mathbf a_m$ and $\bm\omega_m$ at time $t_k$:
\begin{align}
    \label{eq:imu_mea_a}
     \mathbf{a}_m(t_k) & = \mathbf{a}(t_k) - {}_{G}^{I}\mathbf{R}(t_k){}^{G}\mathbf{g} + \mathbf{b}_a(t_k) + \mathbf{n}_a(t_k)
    \\
      \label{eq:imu_mea_b}
    \boldsymbol{\omega}_m(t_k) & = \boldsymbol{\omega}(t_k) + \mathbf{b}_g(t_k) + \mathbf{n}_g(t_k)
\end{align}
where $\boldsymbol{\omega}(t_k)$
and $\mathbf{a}(t_k)$ are the true angular velocity and linear acceleration in the IMU local frame $\{I\}$, 
${}^G\mathbf{g} \simeq [0,0,-9.8]^{\top}$ is the known global gravitational acceleration,
and  $\mathbf{n}_g$ and $\mathbf{n}_a$ are the zero-mean white Gaussian noise. 
These IMU measurements are used to drive the inertial navigation system (INS) 
whose continuous-time kinematics is given by~\cite{Chatfield1997}:
\begin{align}
    {}_{G}^{I}\dot{\bar{q}}(t) 
    & =\frac{1}{2}\boldsymbol{\Omega}(\boldsymbol{\omega}(t)) {}_{G}^{I}\bar{q}(t)
    \label{eq:imumodel_q}
    \\
    {}^G\dot{\mathbf{p}}_{I}(t) & = {}^G\mathbf{v}_{I}(t)
     \label{eq:imumodel_p}
    \\
    {}^G\dot{\mathbf{v}}_{I}(t) & = {}^{I(t)}_G\mathbf{R}^{\top} \mathbf{a}(t)
     \label{eq:imumodel_v}
    \\
    \dot{\mathbf{b}}_g(t) & = \mathbf{n}_{wg} (t)
     \label{eq:imumodel_bg}
    \\
    \dot{\mathbf{b}}_a(t) & = \mathbf{n}_{wa}(t)
    \label{eq:imumodel_ba}
\end{align}
where
$\boldsymbol{\Omega(\boldsymbol \omega}) = 
\begin{bmatrix}
    -\lfloor\boldsymbol{\omega}\rfloor & \boldsymbol{\omega} \\
    -\boldsymbol{\omega}^{\top} & 0
\end{bmatrix}$ and here $\lfloor \cdot \rfloor$ is the skew-symmetric matrix.
We have modeled the gyroscope and accelerometer biases as random walk, with their time derivatives represented as white Gaussian processes, denoted by $\mathbf{n}_{wg}$ and $\mathbf{n}_{wa}$, respectively.
Integration of the above continuous-time inertial kinematics from $t_k$ to $t_{k+1}$ yields the following discrete-time motion model:
%
%
\begin{align}
{}^{I_{k+1}}_G \mathbf{R} &= {}^{I_{k+1}}_{I_k} \Delta\mathbf{R}~{}^{I_k}_G \mathbf{R} \label{eq:RRR}\\
{}^G \mathbf{p}_{I_{k+1}}
&= \scalemath{0.9}{
{}^G \mathbf{p}_{I_k}+{}^G \mathbf{v}_{I_k}\Delta T +\frac{1}{2}{}^G \mathbf{g} \Delta T^2 + {}^{I_k}_{G} \mathbf{R}^\top {}^{I_k}\boldsymbol{\alpha}_{I_{k+1}}} \label{eq:dp}
\\
{}^G \mathbf{v}_{I_{k+1}} &= {}^G \mathbf{v}_{I_k}+{}^G \mathbf{g} \Delta T + {}^{I_k}_{G} \mathbf{R}^\top  {}^{I_k} \bm \beta_{I_{k+1}} 
\label{eq:dv}
\\
\mathbf{b}_{g_{k+1}} &= \mathbf{b}_{{g_k}} + \mathbf n_{g_k}\\
\mathbf{b}_{{a_{k+1}}} &= \mathbf{b}_{a_k} + \mathbf n_{a_k} \label{eq:bak1}
\end{align}
where 
$\Delta T = t_{k+1}- t_k$, 
$\mathbf n_{g_k}=\int_{t_k}^{t_{k+1}} \mathbf{n}_{w g}(u) ~du$, 
and $\mathbf n_{a_k} = \int_{t_k}^{t_{k+1}} \mathbf{n}_{w a}(u) ~du$.
Importantly, ${}^{I_{k+1}}_{I_k} \Delta\mathbf{R}$ is the gyro preintegration in $[t_k,t_{k+1}]$,
while ${}^{I_k} \bm \alpha_{k+1}$ and ${}^{I_k} \bm \beta_{k+1} $ are the preintegration of acceleration measurements.
\textcolor{black}{Due to space limitations, we do not provide full derivations of the preintegration terms. However, readers can refer to related works on IMU preintegration theory (see \cite{Lupton2012TRO, Forster2015RSS, Eckenhoff2019IJRR}). In particular, \cite{Eckenhoff2019IJRR} and its associated technical report offer detailed derivations of these terms}:
\begin{align}
\nonumber
{}^{I_{k+1}}_{I_k} \Delta\mathbf{R} 
&
= 
-\int_{t_k}^{t_{k+1}} {}^{I}\boldsymbol{\omega} (t_\tau) d\tau
\\
{}^{I_k} \bm \alpha_{I_{k+1}} &=
\int_{t_k}^{t_{k+1}} \int_{t_k}^{s} {}^k_{u}\Delta\mathbf{R}\left(\mathbf{a}_m(u)- \mathbf{b}_a(u)-\mathbf{n}_a(u)\right) du ds \notag \\
{}^{I_k} \bm \beta_{I_{k+1}} &=
\int_{t_k}^{t_{k+1}} {}^k_{u}\Delta\mathbf{R}\left(\mathbf{a}_m(u)- \mathbf{b}_a(u) -\mathbf{n}_a(u)\right) du \notag
\end{align}
With the above model model~\eqref{eq:RRR}-\eqref{eq:bak1},
we can perform the SRF propagation as described in Section~\ref{sec:sr-prop}.


\begin{figure*}
\centering
\includegraphics[width=0.8\linewidth]{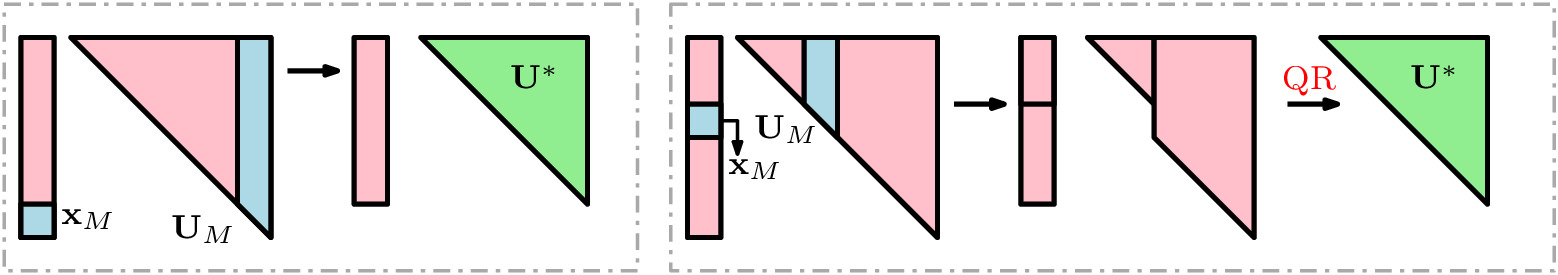}
\caption{
An illustration of the marginalization process of 
$
\sqrt{\mathrm{VINS}}
$ with different state ordering.
The blue blocks represent the marginalized state $\mathbf{x}_M$ and its covariance block $\mathbf{U}_M$, while the pink blocks represent the other states. 
The left side shows an extreme case where the marginalized state is at the bottom, allowing direct extraction of the upper-triangular square-root covariance $\mathbf{U}^*$. The right side shows the marginalized state in the middle, requiring an additional QR operation. Placing marginalized states further toward the bottom improves structure preservation and reduces the QR operation cost.
}
\label{fig:marg}
\end{figure*}

\subsection{Stochastic Cloning and Marginalization}
\label{sec:clone}

The proposed $\sqrt{\mathrm{VINS}}$ adopts the sliding-window filtering methodology to control the ever-growing computational cost over time while (sub-)optimally fusing all the measurements available in the current window.
This is achieved by performing efficient stochastic cloning~\cite{Roumeliotis2002ICRAa} and state marginalization,
which is akin to that in the multi-state constraint Kalman filter (MSCKF)~\cite{Mourikis2007ICRA,Geneva2020ICRA} {\em but} operating on the square-root covariance matrix $\mathbf U$ [see \eqref{eq:UtU}].

To that end, we first partition the state vector~\eqref{eq:state} into 
the most recent navigation state $\mathbf{x}_I$, the state to be marginalized ${\mathbf{x}}_M$, and the remaining state ${\mathbf{x}}_R$,
and their corresponding columns of the square-root covariance matrix are $\mathbf{U}_I$, $\mathbf{U}_M$ and $\mathbf{U}_R$, respectively.
%
As the system propagates forward, we perform stochastic cloning~\cite{Roumeliotis2002ICRAa} to augment the state and square-root covariance  with those  of the most recent or cloned camera/IMU pose in order to better process its corresponding measurements at a later time.
%
%
As the square-root covariance for each state corresponds to the respective column blocks in $\mathbf{U}$~\cite{Peng2023TR,Wu2024Thesis}, we specifically copy the cloned pose $\mathbf{x}_T$ and its corresponding column $\mathbf{U}_T$, to get the augmented  state and square-root covariance as follows:
\begin{align}
    \begin{bmatrix}
    \mathbf{x}_I^\top & \mathbf{x}_M^\top & \mathbf{x}_R^\top
\end{bmatrix}
  ~~ 
   & \xrightarrow{\text{cloning}}
    ~~
\begin{bmatrix}
    \mathbf{x}_I^\top & \mathbf{x}_T^\top & \mathbf{x}_M^\top & \mathbf{x}_R^\top 
\end{bmatrix} \label{eq:clone-x}
\\
    \begin{bmatrix}
         \mathbf{U}_I &  \mathbf{U}_M &  \mathbf{U}_R
    \end{bmatrix}
        ~~ 
   & \xrightarrow{\text{cloning}}
    ~~
      \begin{bmatrix}
         \mathbf{U}_I &  \mathbf{U}_T & \mathbf{U}_M &  \mathbf{U}_R
    \end{bmatrix} \label{eq:clone-U}
\end{align}
%
This can be proven by demonstrating that $\mathbf{U}^\top\mathbf{U} = \mathbf{P}$ through the aforementioned process for $\mathbf{U}$ and the covariance augmentation in the conventional MSCKF. 
The detailed proof can be found in Section 2.3 of \cite{Peng2023TR}.

%
To bound the size of the state so as to control computational cost, 
we marginalize and remove the unwanted state $\mathbf{x}_M$ from \eqref{eq:clone-x} and its corresponding block $\mathbf{U}_M$ from~\eqref{eq:clone-U}.
At the same time, because this marginalization destroys the upper triangular structure of the square-root covariance,
we perform QR decomposition to ensure the resulting upper triangular $\mathbf U^*$:
\begin{align}
\nonumber
    &\begin{bmatrix}
         \mathbf{U}_I & \mathbf{U}_T &  \mathbf{U}_M &  \mathbf{U}_R
    \end{bmatrix}
       \xrightarrow{\text{marginalization}} \\
    &
    \label{eq:marg}
    \begin{bmatrix}
         \mathbf{U}_I &  \mathbf{U}_T &   \mathbf{U}_R
    \end{bmatrix}
     \stackrel{\text{QR}}{=} 
    \mathbf{Q}^*
    \begin{bmatrix}
        \mathbf{U}^*
        \\
        \mathbf{0}
    \end{bmatrix}
\end{align}



At this point, we stress the significance of the proposed special ordering of the state [see \eqref{eq:state}].
During marginalization, by ordering the state with non-marginalized variables (e.g., $\mathbf{x}_I$ and $\mathbf{x}_{cb}$) at the top and marginalized variables (e.g., features) at the bottom, the resulting square-root covariance remains close to upper-triangular, leading to significant computational savings. 
Figure~\ref{fig:marg} illustrates this process: The left side shows an extreme and most ideal case where the marginalized state $\mathbf{x}_M$ is at the end of the state vector. After removing its corresponding column $\mathbf{U}_M$ from the square-root covariance, we directly obtain the upper-triangular square root covariance $\mathbf{U}^*$ without further operations. 
By contrast, the right side of Figure~\ref{fig:marg} shows a general case where the marginalized state in the middle of the state vector, requiring an additional QR operation to restore the upper-triangular structure.
Note that, although we would like to place the marginalized state at the very bottom so as to completely avoid QR operations, in practice, this is hard to guarantee -- for example, when dealing with lost track features, the marginalized state might not always be at the very end -- and QR is still necessary. 
However, we only need to perform QR on a sub-block of the square root covariance matrix. In comparison to the square-root information filter~\cite{Wu2015RSS} that requires processing the entire matrix, our approach remains significantly more efficient thanks to the nature of SRF marginalization.

\subsection{Structure-Aware Efficient Measurement Update}
\label{sec:cam_meas}

Assume that at time $t_k$  we have established a visual feature track within the current sliding window
in which we assume $M$ features in total being detected and tracked in the time window of $[t_{k-c},\cdots, t_k]$.
A feature $\mathbf f_i$ ($i=1,\cdots, M$) observed at time $t_\kappa$ has the following nonlinear bearing measurement model 
by projecting its 3D position onto the 2D image plane, which is further linearized for SRF update [see~\eqref{eq:state}]:
\begin{align}    \label{eq:cam_linear_full}
\mathbf{z}_{i,\kappa} &=  h_d(h_p ({}^{C_\kappa}\mathbf f_i),\boldsymbol{\zeta}) + \mathbf{n}_{i,\kappa} ~~\Rightarrow \\
\mathbf {r}_{i,\kappa}      
    &\simeq
    \mathbf{H}_{I,\kappa} \tilde{\mathbf{x}}_{I,\kappa|\kappa-1}
    +
    \mathbf{H}_{cb,\kappa}\Tilde{\mathbf{x}}_{cb,\kappa|\kappa-1} + 
    \mathbf{H}_{f_i,\kappa}\Tilde{\mathbf{f}}_{i,\kappa|\kappa-1} + \mathbf n_{i,\kappa}
    \notag \\
    & =
     \label{eq:cam_linear}
    \mathbf{H}_{x,\kappa}\Tilde{\mathbf{x}}_{x,\kappa|\kappa-1} +  \mathbf{H}_{f,\kappa}\Tilde{\mathbf{x}}_{f,\kappa|\kappa-1} + \mathbf n_{i,\kappa}
\end{align}
where $h_d$ and $h_p$ are the intrinsic distortion and projection functions, respectively. 
Note that $h_d$ is general and can support any camera model (e.g., radial-tangential and equidistant~\cite{Hartley2003book}).
In a typical visual tracking scenario, $M$ can be large (in the order of thousands) and easily result in prohibitive computational cost if processing them without discrimination.
As such, we categorize them into the SLAM features that are included in the state and the MSCKF features that will not be kept,
and accordingly, partition the measurement set into different subsets depending on the observed features' categories similar as~\cite{Li2013IJRR,Wu2024Thesis}:
\begin{align} \label{eq:meas-subsets}
\{\mathbf z_{i,\kappa}\}_{i=1,\cdots,M}^{\kappa=k-c,\cdots, k}    
= \underbrace{\mathcal Z_N \cup \mathcal Z_S}_{\mathcal Z_{SLAM}} \cup
\underbrace{\mathcal Z_M \cup \mathcal Z_U}_{\mathcal Z_{VIO}}
\end{align}
where $\mathcal Z_{SLAM}$ includes all the measurements available in the window related to the SLAM features that either are already in the state vector (with respect to $\mathcal Z_{S}$) or are to be initialized into the state (using $\mathcal Z_{N}$),
and $\mathcal Z_{VIO}$ contains the remaining measurements related to the MSCKF features, some of which are to be processed at the current time ($\mathcal Z_{M}$) while the others are delayed for future update ($\mathcal Z_{U}$).
Except $\mathcal Z_{U}$, 
we perform different update strategies for these measurement subsets to ensure efficiency and accuracy.

Without loss of generality (w.l.o.g), we first consider using the measurement set $\mathcal Z_N$ [see \eqref{eq:meas-subsets}] to initialize a new SLAM feature $\mathbf{f}_{N}$ into the state vector~\eqref{eq:state}.
Although we can use the delayed initialization to efficiently estimate the feature position $\hat{\mathbf f}_{N,\kappa|\kappa}$, its square-root covariance would be more challenging.
To this end, we first stack all the linearized measurement residuals of $\mathcal Z_N$ [see \eqref{eq:cam_linear}] and have:\footnote{We here have dropped off the time index for brevity and employed the subscript symbol ``$N$'' to refer to the association with $\mathcal Z_N$.
Note that similar notations are used for the other measurement sets.}
\begin{align} \label{eq:rN}
    \mathbf{r}_{N} = 
    \mathbf{H}_{N,x}\tilde{\mathbf x}_x
    +    \mathbf{H}_{N,f}\tilde{\mathbf{f}}_N
    +
    \mathbf{n}_{N}
\end{align}
Note that $\mathcal Z_N$ typically has more than the minimum required measurements for feature initialization in order to obtain better accuracy; that is, the initialization is over constrained and $\mathbf{H}_{N,f}$ is tall.
This implies that over-constrained $\mathcal Z_N$ also encompasses information that constrains the existing state $\mathbf x_x$ (not just the new feature $\mathbf f_N$), which we seek to optimally utilize too.
As evident from \eqref{eq:rN} that the range space of $\mathbf{H}_{N,f}$ essentially constrains the new feature while its nullspace constrains the existing state, we perform QR decomposition on $\mathbf{H}_{N,f}$ in order to initialize the square-root covariance
as well as extract constraints on $\mathbf x_x$ from $\mathcal Z_N$:
   \begin{align}
   \label{eq:slam_init_pqr}
        \mathbf{H}_{N,f}
       &  \stackrel{\text{QR}}{=} 
       \begin{bmatrix}
           \mathbf{Q}_1 & \mathbf{Q}_2
       \end{bmatrix}
        \begin{bmatrix}
           \mathbf{0} 
            \\
         \mathbf J_{N,f} 
        \end{bmatrix}
   \end{align}
where $\mathbf{Q}_1$ spans the nullspace of $\mathbf{H}_{N,f}$ and $\mathbf{Q}_2$ spans the range space. 
It is important to note that during the above QR decomposition, we use the P-QR decomposition introduced in~\cite{Peng2024ICRA} to permute $\mathbf J_{N,f} $ into a lower triangular matrix, instead of upper triangular.
This is different from the traditional SLAM initialization~\cite{Li2014} and can better preserve the structure of the SRF to improve efficiency. 
Left multiplication~\eqref{eq:cam_linear} by $[\mathbf{Q}_1 ~ \mathbf{Q}_2]^{\top}$ yields:
   \begin{align}
           \begin{bmatrix}
           \mathbf{Q}_1^{\top} \\ \mathbf{Q}_2^{\top}
       \end{bmatrix}
       \mathbf{r}_N
       &=
       \begin{bmatrix}
           \mathbf{Q}_1^{\top} \\ \mathbf{Q}_2^{\top}
       \end{bmatrix} 
       \begin{bmatrix}
           \mathbf H_{N,f} & \mathbf H_{N,x}
       \end{bmatrix}
       \begin{bmatrix}
       \tilde{\mathbf{f}}_{N}
        \\
       \tilde{\mathbf{x}}_x
       \end{bmatrix}
       +  \begin{bmatrix}
           \mathbf{Q}_1^{\top} \\ \mathbf{Q}_2^{\top}
       \end{bmatrix}
       \mathbf{n}_N 
       \notag
       \\ 
       \Rightarrow
      \begin{bmatrix}
           \bm\gamma_N \\ \bm\varsigma_N
       \end{bmatrix} 
       &:=
       \begin{bmatrix}
           \mathbf{0} & \bm\Gamma_{N,x}
            \\
           \mathbf J_{N,f} & \mathbf J_{N,x}
        \end{bmatrix}
        \begin{bmatrix}
       \tilde{\mathbf{f}}_{N}
        \\
       \tilde{\mathbf{x}}_x
       \end{bmatrix}
       +  \begin{bmatrix}
           \bm\eta_N \\ \bm\xi_N
       \end{bmatrix}
 \label{eq:slam_init}
   \end{align}
where $\begin{bmatrix} \bm\Gamma_{N,x} \\ \mathbf J_{N,x}\end{bmatrix} := \begin{bmatrix} \mathbf Q_1^\top \\ \mathbf Q_2^\top \end{bmatrix} \mathbf H_{N,x}$.
Clearly, as the top linear system: $\bm\gamma_N=\bm\Gamma_{N,x} \Tilde{\mathbf{x}}_x + \bm\eta_N$, 
depends only on the existing state, it should be utilized as normal measurements like $\mathcal Z_S$ to update the state.
On the other hand, the bottom linear system: $\bm\varsigma_N=\mathbf J_{N,f} \tilde{\mathbf{f}}_{N} + \mathbf J_{N,x} \Tilde{\mathbf{x}}_x + \bm\xi_N$, $\bm\xi_N \sim \mathcal N(\mathbf 0, \bm\Omega)$, is used to efficiently initialize the square-root covariance with the new feature being included in the state as follows:
   \begin{align}
   \label{eq:slam_init_cov}
       \mathbf{U}'
       & =
       \begin{bmatrix}
           \mathbf{U} & -\mathbf{U}\mathbf J_{N,x}^{-\top}\mathbf{J}_{N,f}^{-\top}
           \\
           \mathbf{0} & {\bm\Omega}^\frac{1}{2} \mathbf{J}_{N,f}^{-\top}
       \end{bmatrix}
   \end{align}
   This can easily be proved by performing $\mathbf{U}'^\top\mathbf{U}'=\mathbf{P}'$ to show its equivalence to the SLAM feature initialization in EKF~\cite{Li2014thesis}.
   
   


For $\mathcal Z_S$, as all the measurements relate to SLAM features~$\mathbf{x}_f$ that are already in the state vector, we simply stack all the measurement residuals~\eqref{eq:cam_linear} for batch update:
\begin{align}
\label{eq:slam_up}
\mathbf{r}_{S} &= \mathbf{H}_{S,x} \tilde{\mathbf{x}}_x +  \mathbf{H}_{S,f} \tilde{\mathbf x}_{f} + \mathbf{n}_{S}
\end{align} 

In contrast,  $\mathcal Z_M$ correspond to the MSCKF features $\mathbf{f}_M$ which are not in the state but we still want to utilize their constraints on the state.
To this end, we project the linearized measurement residual onto the left nullspace $\mathbf{N}$ of the feature Jacobian $\mathbf{H}_{M,f}$, effectively eliminating the feature dependency and improving efficiency by avoiding the need to keep these features in the state vector~\cite{Mourikis2007ICRA}.
\begin{align}
\label{eq:nullspace1}
    \mathbf{r}_{M} & = 
        \mathbf{H}_{M,x} \tilde{\mathbf{x}}_x 
        +
         \mathbf{H}_{M,f}  \tilde{\mathbf{f}}_M
        +
        \mathbf{n}_{M}
\\
\label{eq:nullspace2}
\mathbf{N}^{\top} \mathbf{r}_{M} &=  \mathbf{N}^{\top}\mathbf{H}_{M,x} \tilde{\mathbf{x}}_x +
\mathbf{N}^{\top}\mathbf{n}_{M}
\\
\Rightarrow ~
\label{eq:msckf_up}
\bm\gamma_{M} &:= \bm\Gamma_{M,x}\tilde{\mathbf{x}}_x  + \bm\eta_{M}
\end{align} 


We have thus far built the linearized measurement residuals [see \eqref{eq:slam_init}, \eqref{eq:slam_up} and \eqref{eq:msckf_up}] for different measurement sets~\eqref{eq:meas-subsets} available in the current sliding window, which are stacked in a compact form for the SRF batch update:
%
\begin{align}
    \begin{bmatrix}
        \bm\gamma_{M}
        \\
        \bm\gamma_N
        \\
        \mathbf{r}_{S}
    \end{bmatrix}
    &= 
    \begin{bmatrix}
        \bm\Gamma_{M,x} & \mathbf{0} &  \mathbf{0} \\
        \bm\Gamma_{N,x} &  \mathbf{0} &  \mathbf{0}  \\
        \mathbf{H}_{S,x} & \mathbf{H}_{S,f}  &  \mathbf{0}\\
    \end{bmatrix}
    \begin{bmatrix}
        \Tilde{\mathbf{x}}_{x} \\
        \tilde{\mathbf{x}}_{f}\\
        \tilde{\mathbf{f}}_{N}
    \end{bmatrix}
    + \begin{bmatrix}
        \bm\eta_M
        \\
        \bm\eta_N
        \\
        \mathbf{n}_S
    \end{bmatrix} 
    \label{eq:stacked-res}
    %
\end{align}
Using \eqref{eq:stacked-res}, we perform the efficient LLT-based SRF update [see \eqref{eq:srf_up_cov} and \eqref{eq:srf_up_mean}] for the proposed $\sqrt{\mathrm{VINS}}$.

In particular, we  take full advantage of the sparse structure of the measurement Jacobian matrix in~\eqref{eq:stacked-res} when computing the matrix $\mathbf{C}$ and its Cholesky decomposition to obtain $\mathbf F$ [see \eqref{eq:update}].
%
%
%
Specifically, as the Jacobian of \eqref{eq:stacked-res} has zeros (the third block column) corresponding to  the new SLAM feature $\mathbf f_N$, we group the first two block columns corresponding to $\mathbf x_x$ and $\mathbf x_f$ as $\mathbf H_{xf}$.
In computing $\mathbf{C}$, as $\mathbf{U}^{\top}$ is lower-triangular, we {\em only} need to multiply the non-zero blocks of $\mathbf{H}_{xf}$  with its corresponding $\mathbf{U}_{xf}^{\top}$,
which naturally leads to {\em sparse} 
$\mathbf{C} = 
\begin{bmatrix}
        \mathbf{C}_{xf} & \mathbf{0} 
        \\
        \mathbf{0} & \mathbf{I}
    \end{bmatrix}
$, where $
\mathbf{C}_{xf} = 
\mathbf{U}_{xf}\mathbf{H}^{\top}_{xf}
\mathbf{R}^{-1} \mathbf{H}_{xf}\mathbf{U}_{xf}^{\top}
+\mathbf{I}
$.
As a result, we perform LLT {\em only} on $\mathbf{C}_{xf}$ rather than the full matrix, 
allowing us to efficiently solve for $\mathbf{F}_{xf}$ and thus {\em sparse} 
$\mathbf{F} = 
\begin{bmatrix}
        \mathbf{F}_{xf} & \mathbf{0} 
        \\
        \mathbf{0} & \mathbf{I}
    \end{bmatrix}
$.
We also exploit this special structure of $\mathbf{F}$ when performing the SRF update [see \eqref{eq:srf_up_cov}]; 
that is, we only need to compute $\mathbf{F}_{xf}^{-\top}$ along with its corresponding upper triangular matrix, avoiding processing the entire matrix.
Note that this saving can be significant in particular when multiple new SLAM features are initialized in the current window which is often the case in practice.



\color{black}


%

\subsubsection{Outlier Rejection}
\label{sec:chi2}

Measurement outliers are inevitable in practice.
Compared with an information-form estimator, we have covariance available at each update, thus, we employ the standard $\chi^2$ test to reject them by computing the following Mahalanobis distance:
\begin{align} \label{eq:dm}
    d_m =
     \mathbf{r}^{\top}
     \left(
     \mathbf{H}\mathbf{U}^{\top}\mathbf{U}\mathbf{H}^{\top} + \mathbf{R}
     \right)^{-1}
     \mathbf{r} 
\end{align}
where $\mathbf r$ and $\mathbf R$  generically refer to the measurement residual and noise covariance [see \eqref{eq:general-meas}].
Thanks to the special structure of the measurement Jacobian $\mathbf H$ in our case [e.g., see \eqref{eq:stacked-res}], we are able to compute the Mahalanobis distance~\eqref{eq:dm} very efficiently.
In particular, as the measurement residual of the MSCKF features $\bm\gamma_M$ is not related to features [see \eqref{eq:stacked-res}],
the following most expensive operation is carried out as:
\begin{align}
\label{eq:UH}
    \mathbf{U}\mathbf{H}^{{\top}}
    &=
    \begin{bmatrix}
    \mathbf{U}_1 & \mathbf{U}_2
    \\
    \mathbf{0} & \mathbf{U}_3
\end{bmatrix}
\begin{bmatrix}
   \bm\Gamma_{M,x}^\top 
    \\ 
    \mathbf{0} 
\end{bmatrix}
=
\begin{bmatrix}
      \mathbf{U}_1 \bm\Gamma_{M,x}^\top 
      \\
      \mathbf{0} 
\end{bmatrix}
\end{align}
Clearly, given the upper-triangular structure of $\mathbf{U}$ and the unique structure of the measurement Jacobian $\bm\Gamma_{M,x}$, we only need to compute $\mathbf{U}_1\bm\Gamma_{M,x}^\top$, instead of multiplying the measurement Jacobian  with the full $\mathbf{U}$.
This holds exactly the same for the measurement residual of the new SLAM features $\bm\gamma_N$ [see \eqref{eq:stacked-res}].
For the SLAM feature measurement residual $\mathbf r_S$ \eqref{eq:slam_up},
the sparsity of the measurement Jacobian which only relates to the corresponding camera pose and the observed features, also allows us to leverage the upper-triangular structure of $\mathbf{U}$ to compute the Mahalanobis distance $d_m$~\eqref{eq:dm} very efficiently.
Note that the computed $\mathbf{U}\mathbf{H}^{{\top}}$~\eqref{eq:UH} can also be {\em re-used} in the proposed SRF update to avoid repeated computation.


\subsubsection{SLAM Feature Propagation}

Anchor feature representation, which parameterizes features in a moving anchor frame $\{A\}$, has  demonstrated robust performance in VINS~\cite{Qin2018TRO, Geneva2020ICRA, Huai2021ICRA}, and thus is also used in this work.
The anchor frame in principle can be selected as any camera frame within the sliding window that observes the feature.
When the anchor frame is moving out of the current sliding window, a new anchor frame is selected and the features are transformed or anchored to this new frame.  
Though it appears to be straightforward, proper handling of the feature's covariance in the new anchor frame is often overlooked.
To this end, 
we build the relation between the feature represented in the \textit{old} anchor frame $\{A_1\}$, denoted by ${}^{A_1}\mathbf{f}$, and in the new anchor frame $ \{A_2\}$, denoted by ${}^{A_2}\mathbf{f}$.
This can be derived based on the fact that the global position of the static feature, ${}^{G}\mathbf{f}$, remains unchanged regardless of the choice of anchor frame:
\begin{align}
    {}^{G}\mathbf{f} = {}_{G}^{A_1}\mathbf{R}^{\top}{}^{A_1}\mathbf{f}
    + {}^{G}\mathbf{p}_{A_1}
    =
     {}_{G}^{A_2}\mathbf{R}^{\top}{}^{A_2}\mathbf{f}
    + {}^{G}\mathbf{p}_{A_2}
    \label{eq:anchor_change}
\end{align}
where 
$ \mathbf x_{A_i} =
\{{}_{A_i}^{G}\mathbf{R}, {}^{G}\mathbf{p}_{A_i}\} 
$ ($i=1,2$)
denotes the orientation and the position of the $i$-th anchor frame in the global frame.
Linearization of the above equations yields:
\begin{align}
     {}^{G}\Tilde{\mathbf{f}} 
     & =
     \nonumber
     \mathbf{H}_{A_1}\tilde{\mathbf{x}}_{A_1} + \mathbf{H}_{f_{A_1}}
     {}^{A_1}\tilde{\mathbf{f}}
     =
     \mathbf{H}_{A_2}\tilde{\mathbf{x}}_{A_2} + \mathbf{H}_{f_{A_2}}
     {}^{A_2}\tilde{\mathbf{f}}
     \\
     \Rightarrow ~
    {}^{A_2}\tilde{\mathbf{f}}
    \label{eq:anchor_change}
    &=
    \mathbf{H}_{f_{A_2}}^{-1}(\mathbf{H}_{f_{A_1}}
     {}^{A_1}\tilde{\mathbf{f}}
     +
      \mathbf{H}_{A_1}\tilde{\mathbf{x}}_{A_1}
      -
      \mathbf{H}_{A_2}\tilde{\mathbf{x}}_{A_2})
\end{align}
where $\mathbf{H}_{A_i}$ denotes the Jacobians with respect to anchor poses and  $\mathbf{H}_{f_{A_i}}$ denotes the Jacobians with respect to anchor feature represented in different anchors.
\textcolor{black}{Leveraging the Jacobians in equation above and covariance of the old anchor feature, old anchor pose and new anchor pose, we perform QR-based covariance propagation introduced in Section~\ref{sec:sr-prop}} to obtain the covariance matrix of the new anchor feature ${}^{A_2}\mathbf{f}$.

\subsection{Online Calibration}
To make the proposed $\sqrt{\mathrm{VINS}}$ more robust and easy-to-use,
we also perform online calibration of the camera-IMU spatiotemporal parameters and camera intrinsics.
To calibrate the extrinsic and intrinsic parameters, we include them in the state vector and build the measurement model that also depends on them~\eqref{eq:cam_linear_full}. With that, we perform SRF update of  these parameters along with the other states (see~\cite{Geneva2020ICRA}).
However, calibrating the time offset $t_d$ between the IMU and camera is not that straightforward and requires more care~\cite{Li2014IJRR}.
Specifically, when a new image is available at time $t_k$, 
given the prior estimate of the time offset $\hat{t}_d$, 
we propagate the IMU/camera pose up to time $(t_k+\hat{t}_d)$ with the IMU readings and obtain the prior pose estimate $\hat{\mathbf x}_T(t_k+\hat t_d)$ [see \eqref{eq:state}].
%
Note that the proposed $\sqrt{\mathrm{VINS}}$  clones the {\em true} pose ${\mathbf x}_T(t_k+t_d)$ in order to process the visual measurements of this image at a later time (along with the other measurements in the current window).
It is clear that the estimation error $\tilde{t}_d$ of the time offset contributes to the error of this cloned pose $\tilde{\mathbf x}_T(t_k+t_d)$,
which should be carefully compensated in its covariance via SRF propagation (which is often overlooked in practice). 
\textcolor{black}{As this is not the main contribution of our work, we refer the reader to \cite{Li2014IJRR} and our open-source implementation for further details. 
}

\subsection{Remarks}

\begin{algorithm} [b]
\small
\raggedright
\caption{$\sqrt{\mathrm{VINS}}$}
\label{algo1}
\textbf{Propagation:}
\begin{compactitem}
\item Propagate the state to $\mathbf{x}(t+t_d)$ by Eq.~(\ref{eq:srf_prop_cov}))~ (\textit{skip QR})
\end{compactitem}
\textbf{Clone and Marginalization:}
\begin{compactitem}
\item 
Cloning the latest IMU pose [Section~\ref{sec:clone}] (\textit{skip QR}) 
\item Anchor change for SLAM features by Eq.~(\ref{eq:srf_prop_cov})~(\ref{eq:anchor_change}) (\textit{skip QR})
\item 
Marginalize oldest clone and lost tracked SLAM features [Eq.~(\ref{eq:marg})] (\textcolor{black}{\textbf{QR}}) 
\end{compactitem}
\textbf{Measurements Formulation:} 
\\
Using the tracked features to formulate measurements and perform outlier rejection [Section~\ref{sec:chi2}]
to prepare for updates.
\begin{compactitem}
\item MSCKF features via nullspace projection [Eq.~\eqref{eq:msckf_up}]
\item SLAM feature initialization [Eq.~\eqref{eq:slam_init_pqr},\eqref{eq:slam_init},\eqref{eq:slam_init_cov}]
\item SLAM features re-observation [Eq.~\eqref{eq:slam_up}]
\end{compactitem}
\textbf{SRF update:}
\begin{compactitem}
\item Stack meas. [Eq.~\eqref{eq:stacked-res}] and do SRF update
for both the state and the square-root covariance
[Eq. \eqref{eq:srf_up_cov},\eqref{eq:srf_up_mean}]  (\textcolor{black}{\textbf{LLT}}).
\end{compactitem}
\end{algorithm}

At this point, we have presented the proposed $\sqrt{\mathrm{VINS}}$
and its main steps are summarized in Algorithm~\ref{algo1}. 
We here highlight a few design choices to take advantage of the structure of the system so as to improve the efficiency.

\begin{itemize}
    \item 
    \textit{Special state ordering:}
   The order of the state variables [see \eqref{eq:state}] is especially designed to speed up update and marginalization process.
   For example, $\mathbf{x}_I$ and $\mathbf{x}_{cb}$ are prioritized at the top as they would not be marginalized, while the clones  $\mathbf{x}_C$ are ordered from the latest to oldest for easy marginalization of the oldest one.
    The feature state, $\mathbf{x}_f$, is placed at the end, because: (i) SLAM features are marginalized frequently,
(ii) this ordering allows better sparsity when computing $\mathbf{C}$ in Eq.~\eqref{eq:update} and 
(iii) it ensures $\mathbf{U}$ is still upper-triangular after initializing a new SLAM feature, thereby preserving the structure and improving efficiency.

\item 
\textit{Delayed QR and LLT operation:} 
As introduced in previous sections, QR decomposition is used during propagation, state cloning, marginalization, and anchor changes when an anchor feature is used.
The efficiency can be further improved by skipping the QR step in propagation and cloning, with QR being performed only once during marginalization
before update to maintain an upper triangular square root covariance to facilitate the subsequent update process.
Meanwhile, measurements from different feature types (e.g., MSCKF, old SLAM, and new SLAM features) are processed individually and then stacked for a single, efficient LLT-based square-root update, as shown in Algorithm~\ref{algo1}.
%

\item \textit{Avoiding repeated computation:}
When performing SRF update, the computing of $\mathbf{C}$ is needed.
As mentioned in Section~\ref{sec:chi2}, the computation of $\mathbf{U}\mathbf{H}^\top$ during outlier rejection can be reused to enhance efficiency and avoid redundant calculations.
In certain cases where outlier rejection is not performed by $\chi^2$ test, we first stack all the measurements Jacobians and then fully leverage the sparsity to perform measurement compression~\cite{Mourikis2007ICRA}.
The triangular structure and symmetry are then exploited to efficiently compute $\mathbf{U}\mathbf{H}^\top$ and $\mathbf{C}$.
\end{itemize}

\section{Robust and Fast Dynamic Initialization}
\label{sec:dy_init}

Dynamic initialization is critical to ensure robust and smooth VINS operation without discontinuity. 
Successful initialization  not only requires an initial solution but also ensures stable and continuous motion tracking afterwards. 
However, fast initialization with minimal data is challenging due to limited visual parallax and low signal-to-noise ratios. 
To address this challenge, we propose a novel dynamic initialization method, tightly integrated with the proposed SRF, providing robust and efficient initialization, even in minimal conditions.

Specifically, our initialization consists of two main steps:
(i) \textit{Feature-less initialization} efficiently recovers the initial velocity and gravity without recovering 3D feature positions to improve both robustness and efficiency;
and (ii) \textit{SRF refinement} optimizes the IMU state, features, and covariance through efficient iterative SRF updates, ensuring smooth and robust VINS operation.
Note that in our initialization, as common practice, we  assume the IMU biases are reasonably accurate (e.g., obtained from prior calibration) and 
the camera-IMU calibration and time offset are known.

\begin{figure}
\centering
\includegraphics[width=0.99\linewidth]{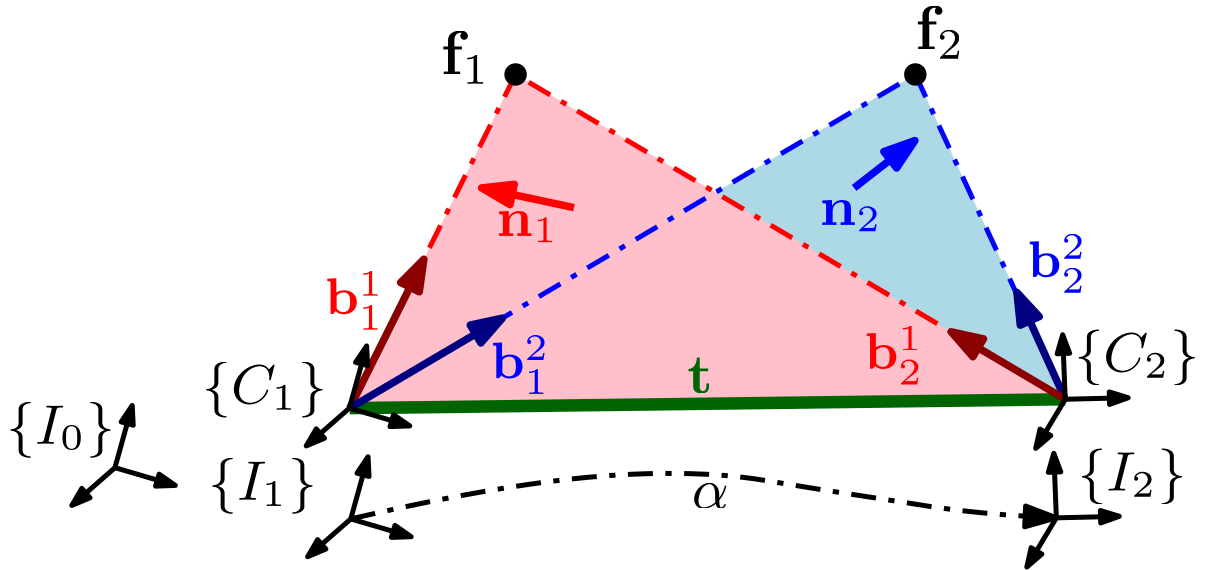}
\caption{
An illustration of two-view geometry for featureless initialization. 
The pink and blue planes represent two epipolar planes formed by the features $\mathbf{f}_1$ and $\mathbf{f}_2$, and their corresponding camera frames $\{C_1\}$ and $\{C_2\}$. 
The bearing observations are denoted by $\mathbf{b}$, while $\mathbf{n}_1$ and $\mathbf{n}_2$ represent the normal directions of the two epipolar planes. 
$\{I_0\}$ indicates the initial reference frame and $\mathbf{t}$ is the direction of the relative pose.
}
\label{fig:linear_init}
\end{figure}

\subsection{Feature-less Initialization}

To maximize efficiency, we first recover {\em only} the minimal states (i.e., without unobservable global position and yaw),
including the local velocity (${}^{I_0}\mathbf{v}_{I_0}$) and  gravity (${}^{I_0}\mathbf{g}$):
\begin{align}\label{eq:init-states}
    {}^{I_0}\mathbf{x} = 
    \begin{bmatrix}
        {}^{I_0}\mathbf{v}_{I_0}^{\top}
        &
        {}^{I_0}\mathbf{g}^{\top}
    \end{bmatrix}^{\top}
\end{align}
where $\{{I_0}\}$ is the initial IMU frame of reference. 
We parameterize the gravity with the minimal two parameters $(\alpha, \beta)$:
${}^{I_0}\mathbf{g}^\top =  |\mathbf{g}|\mathbf{\begin{bmatrix}
     \cos\alpha\sin\beta & \sin\alpha\sin\beta & \cos\beta
\end{bmatrix}}^\top$.
We now explain how to formulate the linear system to solve for ${}^{I_0}\mathbf{x}$.

\subsubsection{IMU-induced local motion}
\label{sec:init_imu}


Leveraging the inertial preintegration,
we integrate the IMU measurements in the time interval $[t_0,t_k]$ to compute the relative motion in the local, instead of the global, frame of reference $\{{I}_0\}$ [see \eqref{eq:RRR}, \eqref{eq:dp} and \eqref{eq:dv}]:
\begin{align}
{}^{I_{k}}_{I_0} \mathbf{R}
\label{eq:dR}
&:=  {}^{I_{k}}_{I_0} \Delta\mathbf{R}\\
\label{eq:dP}
{}^{I_0} \mathbf{p}_{I_{k}}
&:= 
{}^{I_0}\mathbf{v}_{I_0}\Delta T_k + \frac{1}{2}{}^{I_0}\mathbf{g} \Delta T_k^2 + {}^{I_0} \bm \alpha_{I_{k}}  
\\
{}^{I_0} \mathbf{v}_{I_{k}}
&:= 
\label{eq:dV}
{}^{I_0}\mathbf{v}_{I_0} + {}^{I_0}\mathbf{g} \Delta T_k + {}^{I_0} \bm \beta_{I_k} 
\end{align}
where $\Delta T_k = (t_k- t_0)$ is the time span for integration,
${}^{I_{k}}_{I_0} \Delta\mathbf{R},{}^{I_0} \bm \alpha_{I_{k}}$ and ${}^{I_0} \bm \beta_{I_k}$ are obtained by IMU preintegration~\cite{Eckenhoff2019IJRR}.
These can also be computed by rotating the orientation and velocity with ${}^{I_0}_G \mathbf{R}$ and computing the relative position change ${}^{I_0}\mathbf{p}_{I_{k}} = {}^{I_0}_{G}\mathbf{R}({}^G \mathbf{p}_{I_{k}} - {}^G \mathbf{p}_{I_0})$.

\subsubsection{Camera-induced up-to-scale relative motion}
\label{sec:init_cam}
\color{black}
With the visual measurements available in the time window $[t_o,t_k]$, we efficiently derive the relative motion direction, {\em without} relying on 3D features.
The result is summarized as follows:
\begin{proposition}
Given camera bearing measurements 
$\{\mathbf b_k^i\}$ of environmental features with known camera–IMU extrinsics, 
the up-to-scale relative translation direction $\mathbf t$ of the platform 
is obtained as the eigenvector corresponding to the smallest eigenvalue of
\begin{align}
\label{eq:init-eig-decom}
     \mathbf M = \sum_{i=1}^M \mathbf n_i \mathbf n_i^\top
\end{align}   
where $\mathbf n_i = \lfloor {}^{I_0}\mathbf b_1^i \rfloor \, {}^{I_0}\mathbf b_2^i$ 
denotes the normal vector of the epipolar plane formed by two camera observations represented in first IMU frame $\{I_0\}$.
\end{proposition}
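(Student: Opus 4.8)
The plan is to reduce the claim to the classical epipolar (coplanarity) constraint and then to a Rayleigh-quotient minimization. First I would express, for each static feature $\mathbf f_i$, its position ${}^{I_0}\mathbf f_i$ in the common reference frame $\{I_0\}$ along each of the two viewing rays. Writing the two camera centers as ${}^{I_0}\mathbf p_{C_1}$ and ${}^{I_0}\mathbf p_{C_2}$ and the (positive) depths as $d_1^i,\,d_2^i$, the feature admits the two representations ${}^{I_0}\mathbf f_i = {}^{I_0}\mathbf p_{C_1} + d_1^i\,{}^{I_0}\mathbf b_1^i = {}^{I_0}\mathbf p_{C_2} + d_2^i\,{}^{I_0}\mathbf b_2^i$, where ${}^{I_0}\mathbf b_j^i = {}^{I_0}_{C_j}\mathbf R\,{}^{C_j}\mathbf b_j^i$ are the measured bearings rotated into $\{I_0\}$ using the known camera--IMU extrinsic rotation and the IMU orientation recovered by gyro preintegration [see~\eqref{eq:dR}]. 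Subtracting the two expressions gives ${}^{I_0}\mathbf p_{C_2}-{}^{I_0}\mathbf p_{C_1} = d_1^i\,{}^{I_0}\mathbf b_1^i - d_2^i\,{}^{I_0}\mathbf b_2^i$, so the baseline---and hence the unit translation direction $\mathbf t$---lies in the plane spanned by the two bearings. Consequently $\mathbf t$ is orthogonal to the epipolar-plane normal $\mathbf n_i = \lfloor {}^{I_0}\mathbf b_1^i \rfloor\,{}^{I_0}\mathbf b_2^i$, i.e. $\mathbf n_i^\top \mathbf t = 0$, which is the key identity.

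Next I would stack these constraints over all $M$ features. In the noise-free case every $\mathbf n_i$ is orthogonal to $\mathbf t$, so $\mathbf t$ lies in the (generically one-dimensional) null space of $[\mathbf n_1\,\cdots\,\mathbf n_M]^\top$ and is therefore the eigenvector of $\mathbf M=\sum_i \mathbf n_i\mathbf n_i^\top$ with eigenvalue zero. In the presence of measurement noise the constraints hold only approximately, so I would instead pose the least-squares problem $\min_{\norm{\mathbf t}=1}\sum_i(\mathbf n_i^\top \mathbf t)^2 = \min_{\norm{\mathbf t}=1}\mathbf t^\top \mathbf M \mathbf t$, where the unit-norm constraint removes the trivial solution and fixes the otherwise-unrecoverable scale. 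Since $\mathbf M$ is symmetric positive semidefinite, the spectral theorem together with the Courant--Fischer (Rayleigh-quotient) characterization identifies the minimizer as the eigenvector associated with the smallest eigenvalue of $\mathbf M$, which is exactly the claim.

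The main obstacle is not the algebra but justifying the ingredients that make the constraint exact and the minimizer well defined. I would need to verify that all bearings can be consistently rotated into the common frame $\{I_0\}$, which relies on the known camera--IMU extrinsic rotation and on the relative rotations from gyro preintegration being accurate (the initialization assumes calibrated extrinsics and small bias, as stated above); any rotation error perturbs $\mathbf n_i$ and biases $\mathbf M$. I would also establish observability: at least two features generating non-collinear normals are required for $\mathbf M$ to have a one-dimensional smallest-eigenvalue subspace, so that $\mathbf t$ is unique up to sign, with the residual sign resolved by enforcing positive triangulated depths $d_1^i,\,d_2^i>0$. Finally I would flag the degenerate geometries---pure rotation (zero baseline, where $\mathbf t$ is undefined) and features collinear with the baseline---under which the smallest eigenvalue is not isolated and the direction cannot be recovered reliably.
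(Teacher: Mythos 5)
Your proposal is correct and follows essentially the same route as the paper's proof: both establish that each epipolar-plane normal $\mathbf n_i$ satisfies $\mathbf n_i^\top \mathbf t = 0$, stack these constraints into $\mathbf M = \sum_i \mathbf n_i \mathbf n_i^\top$, and identify $\mathbf t$ as the null vector (noise-free) or smallest-eigenvalue eigenvector (noisy) of $\mathbf M$. Your version is in fact somewhat more rigorous than the paper's, which argues the orthogonality purely geometrically ($\mathbf t$ is the intersection of all epipolar planes) rather than via your explicit ray-parameterization, and which asserts rather than proves the Rayleigh-quotient optimality of the smallest eigenvector; your added remarks on non-collinear normals, sign disambiguation, and degenerate motions go beyond what the paper addresses in its proof.
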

\color{black}

\color{black}

\begin{proof}
As shown in Figure~\ref{fig:linear_init}, consider an environmental feature $\mathbf{f}_i$ observed by two camera frames $\{C_1\}$ and $\{C_2\}$. 
Each observation $\mathbf{b}_{k}^{i}$  is a 2D bearing measurement from the $k$-th camera frame to the $i$-th feature.
Based on the two-view geometry~\cite{Hartley2004},
an epipolar plane is thus formed with the two bearings and the relative pose $\mathbf{t}$ (e.g., see $\triangle C_1C_2\mathbf f_1$ in Figure~\ref{fig:linear_init}). 
To solve for the relative motion, we exploit the fact that the normal of \textit{any} epipolar plane is perpendicular to~$\mathbf{t}$, which allows us to formulate an eigenvalue problem of $\mathbf{M}$ as defined in Eq.~\ref{eq:init-eig-decom}.
Note that in this example, $\mathbf{n}_i$ is the normal direction of the epipolar plane $\triangle C_1C_2\mathbf f_i$. The direction $\mathbf{t}$ corresponds to the eigenvector associated with the smallest eigenvalue of $\mathbf{M}$. 
\color{black}
%
Specifically,  the normal direction $\mathbf{n}_i$ of the epipolar plane $\triangle C_1C_2\mathbf f_i$
can be computed as (see Figure~\ref{fig:linear_init}):
\begin{align}
    \mathbf{n}_i = 
    \lfloor
    {}^{I_0}\mathbf{b}_1^{i}
    \rfloor
    {}^{I_0}\mathbf{b}_2^{i}
\end{align}
where
$ {}^{I_0}\mathbf{b}_k^{i} = 
{}_{I_i}^{I_0}\mathbf{R}{}_{C}^{I}\mathbf{R}
\mathbf{b}_k^{i}
$
is the bearing measurement of the feature $\mathbf f_i$ rotated to the local IMU frame $\{{I}_0\}$.
Note that the extrinsic calibration between the IMU and camera is assumed to be known.
Geometrically, as the relative translation $\mathbf{t}$ is the intersection of all the epipolar planes with its corresponding camera frames,
all the normals  of these epipolar planes are perpendicular to  $\mathbf{t}$, such that:
\begin{align}
\mathbf{n}_i^{\top}\mathbf{t}=0 ~\Rightarrow~ \mathbf{n}_i\mathbf{n}_i^{\top}\mathbf{t}=\mathbf{0} 
~\Rightarrow~ \underbrace{\sum_{i=1}^{M} \mathbf{n}_i\mathbf{n}_i^{\top}}_{\mathbf M} ~\mathbf{t}=\mathbf{0} 
\end{align}
which implies that the relative motion $\mathbf{t}$ is an eigenvector corresponding to the zero eigenvalue (or null vector) of matrix $\mathbf{n}_i\mathbf{n}_i^\top$ and thus matrix $\mathbf{M}$, if noise free~\cite{Kneip2013ICCV}.
%
Although 
the $3\times 3$ matrix $\mathbf{M}$ ideally is rank-2   
with the null space spanned by $\mathbf{t}$,
given noisy measurements in practice, $\mathbf{M}$ would become full-rank and thus $\mathbf{t}$ should be the eigenvector corresponding to the smallest eigenvalue.
%
As such, determining the relative translation direction becomes an eigenvalue problem. 
Performing eigenvalue decomposition of   $\mathbf{M}$ yields:
\begin{align} \label{eq:init-eig-decom}
    \mathbf{M} \overset{eig.}{=} \begin{bmatrix} \mathbf{t} & \mathbf{e}_1 & \mathbf{e}_2 \end{bmatrix} 
   \mathbf{Diag} (\lambda_t, \lambda_1, \lambda_2)
   \begin{bmatrix} \mathbf{t} & \mathbf{e}_1 & \mathbf{e}_2 \end{bmatrix}^\top 
\end{align}
where
the diagonal matrix of eigenvalues arranged in an ascending order.
We thus have computed the up-to-scale translation (relative motion direction) $\mathbf t$, 
as well as the eigenvectors $\mathbf{e}_1$ and $\mathbf{e}_2$ corresponding to the larger eigenvalues $\lambda_1$ and $\lambda_2$.
\end{proof}

For convenience, we define $\mathbf{e} := [\mathbf{e}_1, \mathbf{e}_2]$, which lies in the plane orthogonal to  $\mathbf{t}$ (i.e., $\mathbf t \perp \mathbf e$) and will be used in the subsequent linear system formulation [see \eqref{eq:linear_final}].


\subsubsection{Linear system}
\label{sec:init_linear}
\color{black}
By combining the IMU-induced relative motion (with scale) and the 
camera-induced direction (without scale), we can then eliminate the unknown 
scale and recover the initial state through a linear system. 
The result is summarized below:
\begin{proposition}
Given IMU preintegration over $[t_0,t_k]$, camera bearings with known camera–IMU extrinsics, 
and the relative translation direction $\mathbf t$ (with its orthogonal basis $\mathbf e$, such that $\mathbf e^\top \mathbf t = 0$),  
the initial state ${}^{I_0}\mathbf{x}$ can be obtained by solving the linear system
\[
    \mathbf e^\top \mathbf A \, {}^{I_0}\mathbf x = \mathbf e^\top \mathbf b ,
\]
where $\mathbf A$ and $\mathbf b$ are constructed from IMU integration and extrinsics.
\end{proposition}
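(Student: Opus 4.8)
The plan is to eliminate the unobservable scale by projecting the IMU-predicted inter-frame displacement onto the subspace orthogonal to the camera-derived direction $\mathbf t$, reducing the problem to a linear system in ${}^{I_0}\mathbf x$. First I would write the IMU-induced relative displacement from \eqref{eq:dP} as an affine function of the unknowns: collecting the velocity and gravity coefficients into the block row $\mathbf A_k := [\,\Delta T_k\mathbf I_3 \;\; \tfrac12\Delta T_k^2\mathbf I_3\,]$ gives ${}^{I_0}\mathbf p_{I_k} = \mathbf A_k\,{}^{I_0}\mathbf x + {}^{I_0}\bm\alpha_{I_k}$, where ${}^{I_0}\bm\alpha_{I_k}$ is known once the biases are fixed. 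Because the camera centers differ from the IMU centers by the known lever-arm, the camera-frame displacement for a pair $(j,k)$ equals $({}^{I_0}\mathbf p_{I_k}-{}^{I_0}\mathbf p_{I_j}) + ({}^{I_0}_{I_k}\mathbf R - {}^{I_0}_{I_j}\mathbf R)\,{}^{I}\mathbf p_C$, which is still affine in ${}^{I_0}\mathbf x$ with all rotation and extrinsic terms known; this is where the extrinsics enter the construction of $\mathbf A$ and $\mathbf b$.

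Next I would invoke the preceding proposition: this displacement is parallel to the estimated direction $\mathbf t$, whose orthogonal complement is spanned by $\mathbf e = [\mathbf e_1 \; \mathbf e_2]$ from the eigendecomposition \eqref{eq:init-eig-decom}. Parallelism is equivalent to the vanishing of the two orthogonal components, $\mathbf e^\top(\cdot)=\mathbf 0$, and this single projection is the conceptual heart of the argument: it removes the unknown magnitude of the motion without ever triangulating a 3D feature. Substituting the affine expression yields, per pair, $\mathbf e^\top(\mathbf A_k-\mathbf A_j)\,{}^{I_0}\mathbf x = \mathbf e^\top\mathbf b_{jk}$ with $\mathbf b_{jk}$ assembled from the known preintegration difference and lever-arm term; stacking over all available frame pairs produces exactly the claimed system $\mathbf e^\top\mathbf A\,{}^{I_0}\mathbf x = \mathbf e^\top\mathbf b$.

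It remains to argue solvability, which I expect to be the real obstacle rather than the algebra. Each frame pair contributes two scalar equations (one per column of $\mathbf e$), so three independent relative poses give six equations for the six entries of ${}^{I_0}\mathbf x$, matching the minimal three-frame window; with more measurements the system is solved in the least-squares sense, and the known gravity magnitude is enforced afterward by rescaling the gravity block to $|\mathbf g|$ (or, equivalently, re-solving in the minimal $(\alpha,\beta)$ parameterization). The crux is to show that the stacked matrix $\mathbf e^\top\mathbf A$ has full column rank, so that ${}^{I_0}\mathbf x$ is uniquely determined; this rank drops under degenerate excitation -- for instance (near-)constant acceleration or collinear translation directions across the window, where the velocity and gravity columns become linearly dependent -- so a clean statement requires an explicit sufficient-excitation assumption on the motion, which I would tie to the standard observability conditions for visual--inertial initialization.
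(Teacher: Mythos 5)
Your proposal is correct and follows essentially the same route as the paper: write the IMU-predicted displacement between camera centers as an affine function of $[{}^{I_0}\mathbf v_{I_0}^\top\;{}^{I_0}\mathbf g^\top]^\top$ (including the known lever-arm term $({}_{I_2}^{I_0}\mathbf R - {}_{I_1}^{I_0}\mathbf R)\,{}^{I}\mathbf p_C$), equate it to $s\,\mathbf t$, and left-multiply by $\mathbf e^\top$ to annihilate the unknown scale, exactly as in Eqs.~\eqref{eq:constraint}--\eqref{eq:linear_final}. Your added rank/excitation discussion goes beyond what the paper proves (its proof stops at deriving the system), and is a worthwhile observation, though for multiple frame pairs note that each pair has its own direction $\mathbf t_{jk}$ and hence its own projector $\mathbf e_{jk}$, not a shared $\mathbf e$.
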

\color{black}
\begin{proof}


Note that in the following, we use the two keyframes $\{C_1\}$ and $\{C_2\}$ in Figure~\ref{fig:linear_init} to illustrate our derivations:
\begin{align}
    s  \mathbf{t} 
    &=
    {}^{I_0}\mathbf{p}_{C_2} - {}^{I_0}\mathbf{p}_{C_1}\\
&=
{}^{I_0}\mathbf{p}_{I_2} - {}^{I_0}\mathbf{p}_{I_1}
+
\left(
{}_{I_2}^{I_0}\mathbf{R} - {}_{I_1}^{I_0}\mathbf{R}
\right) 
{}^{I}\mathbf{p}_C 
\label{eq:constraint}
\end{align}
where $s$ is the unknown scale, and 
${}^{I_0}\mathbf{p}_{I_2}$ and ${}^{I_0}\mathbf{p}_{I_1}$ are computed by integration with IMU measurements [see~\eqref{eq:dp}]:
\begin{align}
{}^{I_0} \mathbf{p}_{I_{1}}
&= 
{}^G \mathbf{p}_{I_0}+{}^{I_0} \mathbf{v}_{I_0}
\Delta T_1 +\frac{1}{2}{}^{I_0} \mathbf{g} \Delta T_1^2 + {}^{I_0}
\boldsymbol{\alpha}_{I_{1}}
\\
{}^{I_0} \mathbf{p}_{I_{2}}
&= 
{}^G \mathbf{p}_{I_0}+{}^{I_0} \mathbf{v}_{I_0}
\Delta T_2 +\frac{1}{2}{}^{I_0} \mathbf{g} \Delta T_2^2 + {}^{I_0}
\boldsymbol{\alpha}_{I_{2}}
\end{align}
Substituting these into \eqref{eq:constraint}, we have:
\begin{align}
    s  \mathbf{t} &=
\underbrace{ 
\begin{bmatrix}
\Delta{T}_2 - 
\Delta{T}_1 
&
\frac{1}{2}
(
\Delta{T}^2_2 - \Delta{T}^2_1 
)
\end{bmatrix}
}_{-\mathbf A}
    \begin{bmatrix}
        {}^{I_0}\mathbf{v}_{I_0} \\ 
        {}^{I_0}\mathbf{g}
    \end{bmatrix} \nonumber \\
    &\quad + 
    \underbrace{   {}^{I_0}\boldsymbol{\alpha}_{I_2} - {}^{I_0}\boldsymbol{\alpha}_{I_1}
        + \left({}_{I_2}^{I_0}\mathbf{R} - {}_{I_1}^{I_0}\mathbf{R}\right) {}^{I}\mathbf{p}_C
        }_{\mathbf b}
    \label{eq:linear_derive} \\
    \Rightarrow~& \quad
    s  \mathbf{t}
    +
    \mathbf{A} {}^{I_0}\mathbf{x}
    =
    \mathbf{b}
    \label{eq:scale_linear}
\end{align}
%


To further improve the efficiency in solving the above linear system~\eqref{eq:scale_linear}, 
we remove the dependency of the unknown scale~$s$ 
by projecting~\eqref{eq:scale_linear} onto the nullspace of $\mathbf{t}$, such that:
\begin{align}
    \mathbf{e}^{\top} 
    \mathbf{A} {}^{I_0}\mathbf{x}
    =
    \mathbf{e}^{\top} 
    \mathbf{b}
    \label{eq:linear_final}
\end{align}
where we have employed the fact that 
$\mathbf{e}^{\top} \mathbf{t} = \mathbf{0}$.
\end{proof}
Interestingly, as the eigenvectors $\mathbf{e}$ have already been computed during the eigenvalue decomposition of $\mathbf{M}$~\eqref{eq:init-eig-decom}, we effectively eliminate the need for redundant computations.
\color{black}
It is worth mentioning that the proposed method is naturally applicable to static motion, as $s$ will be close to zero, the ambiguity of translation direction does not negatively impact the system.
\color{black}
Importantly, as compared to traditional initialization methods (e.g., \cite{Dong2012IROS,Qin2018TRO}), the proposed linear system~\eqref{eq:scale_linear} does {\em not} recover 3D feature positions, offering the following key advantages:
\begin{itemize}
    \item {\em Robust to outliers:} Without estimating 3D features, the influence of outliers on the solution is minimized.
    \item {\em Robust to small parallax due to low excitation:} Since feature positions are not explicitly recovered, even if some features are near rank-deficient, it does not lead to degeneracy.
    \item {\em Better efficiency:} Without including 3D features, the problem reduces to a 6$\times$6 linear system which is solved in constant time. The complexity with respect to the number of keyframes, $k$, is $\mathcal{O}(k^2)$, but since $k$ is typically small (e.g., 3-5 frames), this is marginal. The overall complexity is dominated by the linear dependence on the number of features, $\mathcal{O}(M)$. In contrast, methods that recover 3D features  have the complexity of $\mathcal{O}(M^3)$, making our approach significantly more efficient.
\end{itemize}

\subsection{SRF Refinement}
\label{sec:init_refine}

Due to measurement noise, the linear system solution~\eqref{eq:linear_final} -- though fast -- inevitably would be inaccurate in practice
especially given an extremely small initialization window. 
We thus perform iterative updates with SRF to refine the {\em full} (not minimal) state and the corresponding covariance, ensuring the successful operation of subsequent VINS.
%
The full state vector of initialization is defined as:
\begin{align}
    \mathbf{x}_\text{full}
    &= 
    \begin{bmatrix}
        \mathbf{x}_{K}^{\top}
        &
         \mathbf{x}_{F}^{\top}
    \end{bmatrix}^{\top}
    \\
    \mathbf{x}_{K} & = 
    \begin{bmatrix}
    \mathbf{x}_{I_0}^{\top} & \dots & \mathbf{x}_{I_N}^{\top} 
    \end{bmatrix}^{\top}
    \\
      \mathbf{x}_{F} & = \begin{bmatrix}
    {}^{G}\mathbf{f}_{1}^{\top}
    & \dots & 
    {}^{G}\mathbf{f}_{M}^{\top}
      \end{bmatrix}^{\top}
    \\
    \mathbf{x}_{I_k} &=
    \begin{bmatrix}
    {}^{I_k}_{G}\bar{q}^\top &
    {}^{G}\mathbf{p}_{I_k}^\top &
    {}^{G}\mathbf{v}_{I_k}^\top &
    \mathbf{b}_{g,k}^\top &
    \mathbf{b}_{a,k}^\top
    \end{bmatrix}^\top \label{eq:state_imu}
\end{align}
where $ \mathbf{x}_{K}$ denotes the keyframe states and $\mathbf{x}_{F}$ is the key features during initialization.
In contrast to the commonly used dynamic initialization pipeline (e.g., \cite{Qin2018TRO}), which solves the VI-BA problem using an optimization solver in the information form, our approach leverages {\em iterative} SRF update and offers several advantages: (i) seamless integration with the SRF-based VINS, (ii) improved computational efficiency and reduced numerical issues while enabling iterative error refinement from relinearization, and (iii) direct covariance computation without the need for matrix inversion.

In particular, we first use the IMU readings to propagate the keyframe states and their corresponding square-root covariance~\eqref{eq:cov111}, and \color{black} use the obtained pose to triangulate the features, \color{black}then use the feature-bearing measurements to iteratively update the states. 
For clarity, we use a single key feature, ${}^{G}\mathbf{f}_j$, as an example. Its measurements are stacked and linearized as [see \eqref{eq:cam_linear_full} and \eqref{eq:cam_linear}]:
\begin{align}
\mathbf{z} &= 
\mathbf{h}(
\mathbf{x}_{K},{}^{G}\mathbf{f}_{j}
) + \mathbf{n}
\\
\Rightarrow ~~
    \mathbf{r}^{(l)} &\simeq
    \mathbf{H}_{K}^{(l)}
    \tilde{\mathbf{x}}_{K}^{(l)}
    +
    \mathbf{H}_{F}^{(l)}
    {}^{G}\tilde{\mathbf{f}}_{j}^{(l)}
    +
    \mathbf{n}
\end{align}
where 
$\mathbf{H}_{K}^{(l)}$ and $\mathbf{H}_{F}^{(l)}$ are the Jacobians, and $\tilde{\mathbf{x}}_{K}^{(l)}$ and ${}^{G}\tilde{\mathbf{f}}_{j}^{(l)}$ are the error states for keyframes and key feature at the $l$-th iteration.
\textcolor{black}{
We perform a QR decomposition on the feature Jacobian $\mathbf{H}_F$, followed by a left multiplication of the full system, which results in two decoupled systems. This step is analogous to the process shown in Eqs.~\eqref{eq:slam_init_pqr} and \eqref{eq:slam_init}, where we apply the same technique. For brevity, we omit the detailed derivation here and refer readers to that earlier discussion.
}
\begin{align} \label{eq:two-sys}
     \begin{bmatrix}
        \mathbf{r}_1^{(l)} \\
        \mathbf{r}_2^{(l)}
     \end{bmatrix}
     = \begin{bmatrix}
             \mathbf{H}_{K,1}^{(l)}
\\
 \mathbf{H}_{K,2}^{(l)}
     \end{bmatrix}
    \tilde{\mathbf{x}}_{K}^{(l)}
    +
    \begin{bmatrix}
      \mathbf{H}_{F,1}^{(l)}
      \\
    \mathbf{0}
    \end{bmatrix}
    {}^{G}\tilde{\mathbf{f}}_{j}^{(l)}
    +
    \mathbf{n}
\end{align}
For each key feature, we construct the linear systems as described above. Thus, with all the features, we stack its corresponding bottom system \eqref{eq:two-sys} and perform the proposed LLT-based SRF update for the key state (i.e., $\mathbf{x}_K^{(l+1)}
= \mathbf{x}_K^{(l)} + d\mathbf{x}^{(l)}_K
$ [see~\eqref{eq:update}]). 
Then we can update each key feature estimate with the top system of \eqref{eq:two-sys} as:
\begin{align}
    {}^{G}\hat{\mathbf{f}}_{j}^{(l+1)}
    = 
      {}^{G}\hat{\mathbf{f}}_{j}^{(l)}
      + 
           \mathbf{H}_{F,1}^{{(l)}^{-1}}
           (
           \mathbf{r}_1^{(l)} -
           \mathbf{H}_{K,1}^{(l)} 
           d{\mathbf{x}}_{K}^{(l)}
           )
\end{align}
This process of linearization and update of state estimate is repeated until convergence,
while we update the square-root covariance only after being converged.
\color{black}

Note that this iterative SRF refinement allows us to directly obtain the initial covariance for use in the subsequent $\sqrt{\mathrm{VINS}}$. 
We have chosen to initialize not only the IMU states but also retain key features $\mathbf{x}_F$ within the initialization window, along with their covariance. 
This ensures smooth, tightly-coupled integration into $\sqrt{\mathrm{VINS}}$, enhancing its robustness,
as discussed in the following.


We first stress that our initialization consists of two main steps, \textit{without} and \textit{with} 3D features involved:
\begin{enumerate}
    \item In the feature-less initialization step, our primary focus is to estimate the initial velocity and gravity as quickly as possible, \textit{without} recovering 3D feature positions. 
    \item During SRF refinement step, we seamlessly transition to $\sqrt{\mathrm{VINS}}$, by initializing the inertial state \textit{with} the SLAM features and their corresponding covariance. 
These features can be immediately tracked in the subsequent VINS process and used for instant updates, preserving all information within the initialization window.
\end{enumerate}
\color{black}
Note that traditional methods take a completely opposite approach: they first recover the initial state using 3D feature positions and then refine only the IMU states in a second step.
In contrast, our proposed feature-less linear system offers robustness to outliers and small parallax, while significantly improving efficiency.
For the refinement step, our approach retains features from the initialization window, enabling immediate updates when new measurements of the same features become available.
This ensures that all information is preserved, tightly coupled, and seamlessly transferred to the VINS module.
Traditional methods, by comparison, solve only for the 15-DOF IMU state during refinement, which requires additional time and frames for VINS to initialize new features and operate effectively.
Furthermore, SRF directly accesses the state covariance without needing to invert the information matrix—a process commonly required in BA-based initialization methods—thereby avoiding potential numerical instability, particularly in challenging initialization scenarios.
As a result, by leveraging these advantages, our initialization method achieves high robustness and efficiency.
\color{black}

\begin{table}
\centering
\caption{
Simulation parameters and prior standard deviations for measurement perturbations.
}
\label{tab:sim_params}
\begin{adjustbox}{width=0.99\columnwidth,center}
\begin{tabular}{ccccc} \toprule
\textbf{Parameter} & \textbf{Value} & \textbf{Parameter} & \textbf{Value} \\ \midrule
Gyro. White Noise & 2.0e-4 & Gyro. Rand. Walk & 2.0e-5 \\
Accel. White Noise & 5.0e-4 & Accel. Rand. Walk & 4.0e-4 \\
Cam Freq. (Hz) & 10 & IMU Freq. (Hz) & 400 \\
Num. Clones & 11 & Tracked Feat. & 100 \\
Max. MSCKF Feat. & 40  & Max. SLAM Feat. & 50  \\\bottomrule
\end{tabular}
\end{adjustbox}
\end{table}

\begin{table}
\centering
\caption{RMSE values for orientation (deg.) and position (m) based on 200 runs on UD-ARL with different estimators.
}
\resizebox{0.99\columnwidth}{!}{%
\begin{tabular}{@{}ccccc@{}}
\toprule
\textbf{Methods} & \textbf{EKF}                      & \textbf{SRF (QR)}   & \textbf{SRF (LLT)}                      & \textbf{SRIF}            \\ \midrule
\textbf{double}  & 0.957 / 0.146                     & 0.957 / 0.146    & 0.956 / 0.146                     & 0.957 / 0.146                                                                                       \\
\textbf{float}   &  0.960 / 0.146 & 0.959 / 0.146  & 0.958 / 0.146 & \color{red}1.045 / 0.174\color{black} \\ \bottomrule
\end{tabular}
}
\label{tab:sim}
\end{table}

\begin{figure}
\centering
\includegraphics[width=.95\linewidth]{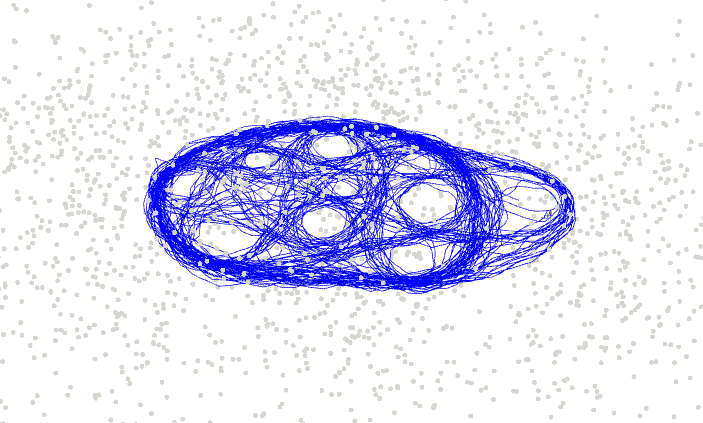}
\caption{Simulated 2.4km UD-ARL trajectory.
}
\label{fig:traj}
\end{figure}

\begin{figure}
\centering
\includegraphics[width=0.99\linewidth]{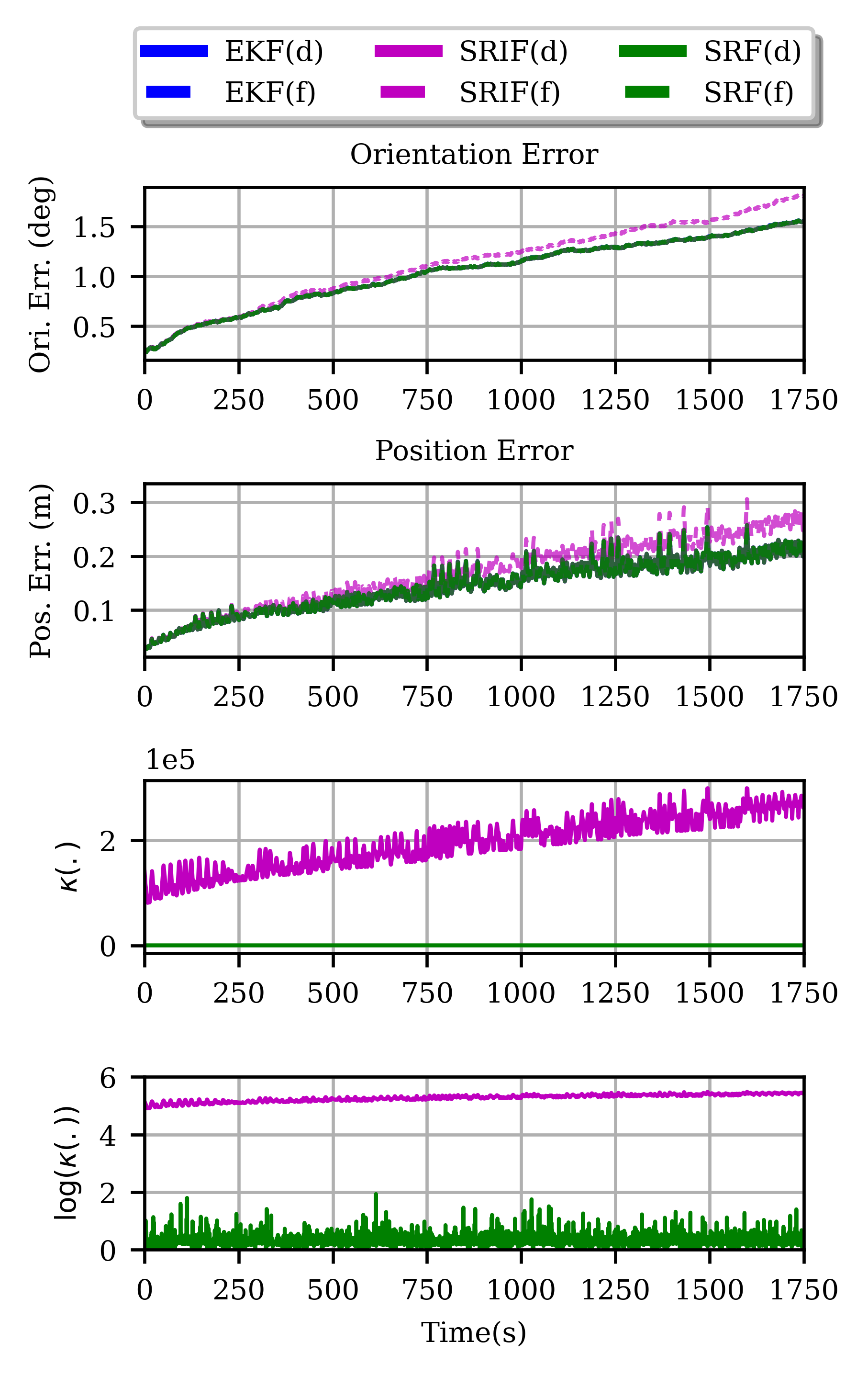}
\caption{
\textbf{Top:} Orientation/position errors of different estimators performed on UD-ARL dataset. `d' is for double; `f' is for float. 
While most estimators perform similarly and are hard to distinguish from the plot, SRIF(f) shows a clear drop in accuracy over time.
\textbf{Bottom:} 
Condition numbers of the square-root information matrix (purple line) and the Cholesky decomposed matrix $\mathbf{C}$ (green line, see Eq.~\eqref{eq:update}), presented in both standard (scientific) and logarithmic scales.
}
\label{fig:sim-condition}
\end{figure}

\section{Numerical Studies of LLT-based SRF}
\label{sec:exp_srf}
We now present the simulation results of the proposed LLT update method for SRF, showcasing its improved numerical stability and efficiency.
To ensure a fair comparison across estimators, we use OpenVINS~\cite{Geneva2019ICRA}, which implements an EKF-based approach as the baseline (denoted EKF(d)), along with a float version (EKF(f)). Additionally, we evaluate double- and float-precision versions of the square-root information filter (SRIF(d) and SRIF(f)) and the proposed square-root covariance $\sqrt{\mathrm{VINS}}$ {(SRF(d) and SRF(f))}.

\subsection{
Accuracy and Robustness of SRF
}

We first aim to demonstrate that $\sqrt{\mathrm{VINS}}$ offers improved numerical stability compared to other forms of estimators. 
To evaluate this, we use a 30-minute, 2.4 km UD-ARL trajectory (see Figure~\ref{fig:traj}) and generate realistic visual bearing and inertial measurements, as summarized in Table~\ref{tab:sim_params}.

Table~\ref{tab:sim} \color{black}reports~\color{black} the average Root Mean Square Error (RMSE) of different estimators based on 200 Monte Carlo runs. 
In Figure~\ref{fig:sim-condition}, the top two plots show the orientation and position errors of various estimators with both double and float precision. The bottom two plots the standard and logarithmic condition numbers of the square-root information matrix (purple) and the $\mathbf{C}$ matrix [see Eq.~\eqref{eq:update}] for the LLT-based update over time (green). 
Given the covariance matrix $\mathbf{P}$, the square-root information matrix $\mathbf{R}$ is $\mathbf{R}^{\top}\mathbf{R} = \mathbf{P}^{-1}$.
Note that this figure only presents the LLT-based SRF results, as the different update methods are expected to exhibit nearly identical accuracy performance, as shown in Table~\ref{tab:sim}.

Note that the authors have carefully reported the condition numbers of these two matrices to evaluate numerical stability.
For SRIF, the most challenging step is inverting the square-root information matrix $\mathbf{R}$, so we report its condition number. 
For the LLT-based SRF, two numerical challenges arise: i) the Cholesky decomposition of $\mathbf{C}$, which requires a positive definite matrix, and ii) the inversion of $\mathbf{F}$.
But $\mathbf{F}$ is the square root of $\mathbf{C}$, we plot the condition number of $\mathbf{C}$ as it also reflects that of $\mathbf{F}$.
From Figure~\ref{fig:sim-condition}, it is clear that the condition number of $\mathbf{C}$ (green line) remains stable, with a magnitude close to 1 and a maximum below $10^2$, demonstrating the improved numerical stability of the proposed SRF.
Intuitively, $\mathbf{F}^{-\top}$ serves as the transition matrix between $\mathbf{U}_{k|k-1}$ and $\mathbf{U}_{k|k}$ (the square-root covariance before and after propagation). As long as the measurement uncertainty (i.e., camera measurement noise) is not significantly smaller than the state uncertainty after propagation, we expect $\mathbf{F}^{-\top}$ to be close to the identity matrix and well-conditioned. Consequently, $\mathbf{C}$ will share the same benefit, which is almost always the case in practical VINS applications.

From the figure, we also observe as the condition number of $\mathbf{R}$ grows larger than $2e^{5}$, both orientation and position errors of SRIF(f) start showing a degraded performance compared to other estimators.
This can also be seen in Table~\ref{tab:sim}, the float SRIF is inaccurate with large RMSE values.
This is likely due to the numerical issue when performing inversion on ill-conditioned $\mathbf{R}$ to solve for state update under limited machine precision (see Chapter 3.5.1 in \cite{Golub2013Book}). 
In contrast, the covariance-form estimators, both EKF and the proposed SRF, no matter which update method is used (i.e., P-QR or LLT) demonstrated consistent performance regardless of using double or float. This is evident from the comparable RMSE values in Table~\ref{tab:sim}, as well as the consistent error trends in Figure~\ref{fig:sim-condition}.

\subsection{
Efficiency of LLT-based SRF Update
}

\begin{figure}
\centering
\includegraphics{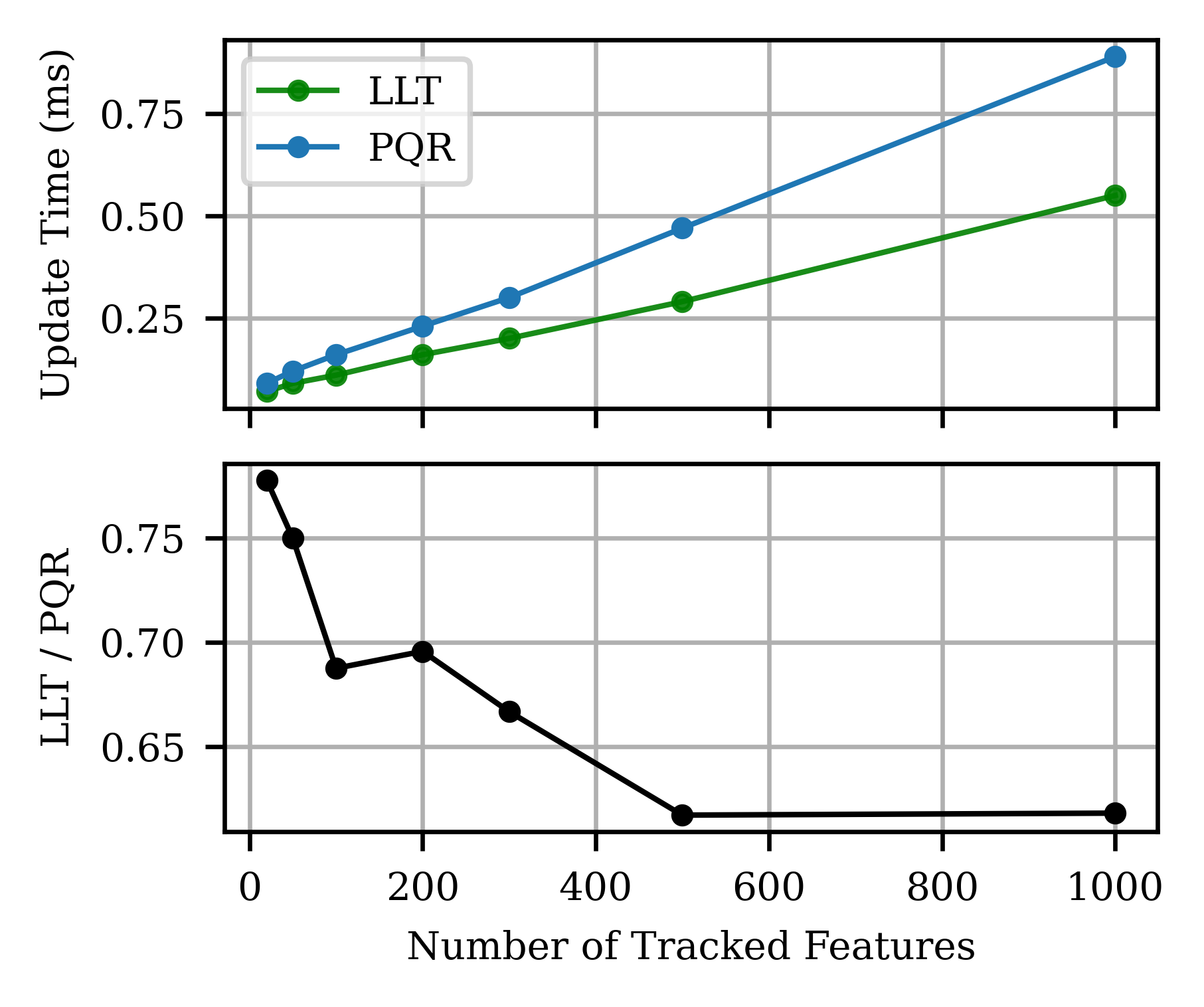}
\caption{Comparison of update efficiency between P-QR and LLT update methods.
The top plot illustrates the update times for the two methods: the \textcolor{black}{blue} line represents the original P-QR-based method, and the green line represents the proposed LLT-based method. The bottom plot shows the time ratio of LLT to P-QR update times (LLT/P-QR), indicating that the LLT-based method becomes increasingly efficient as the number of tracked features increases.
}
\label{fig:sim-time}
\end{figure}

In recent work~\cite{Peng2024ICRA}, the efficiency improvement of the P-QR-based SRF compared to other estimators was reported.
Here, we aim to thoroughly compare the novel LLT-based SRF with P-QR, demonstrating its efficiency improvements with theoretical guarantees, as shown in Table~\ref{tab:srf-flops}. 
To make our experimental evaluation more comprehensive, we will also evaluate and compare all forms of estimators in the following sections.

In this numerical study, we fix the state size and vary the number of tracked features to evaluate performance with different numbers of measurements. All features are treated as MSCKF features and the results are reported in Figure~\ref{fig:sim-time}. 

The top figure plots the update time (in ms) for both methods as the number of features changes, while the bottom figure shows the ratio of the update times (i.e., LLT/P-QR). The results clearly demonstrate that the LLT-based method is significantly more efficient, with the performance gap widening as the number of features increases. 
Moreover, the time ratio between the two methods (bottom figure) decreases progressively as the number of measurements becomes more dominating and, in the end, converges to roughly the theoretical ratio $\frac{2}{3}$, which proves the FLOPs analysis as shown in the Table~\ref{tab:srf-flops}.
\color{black}


\begin{figure}
\centering
\includegraphics[width=0.9\linewidth]{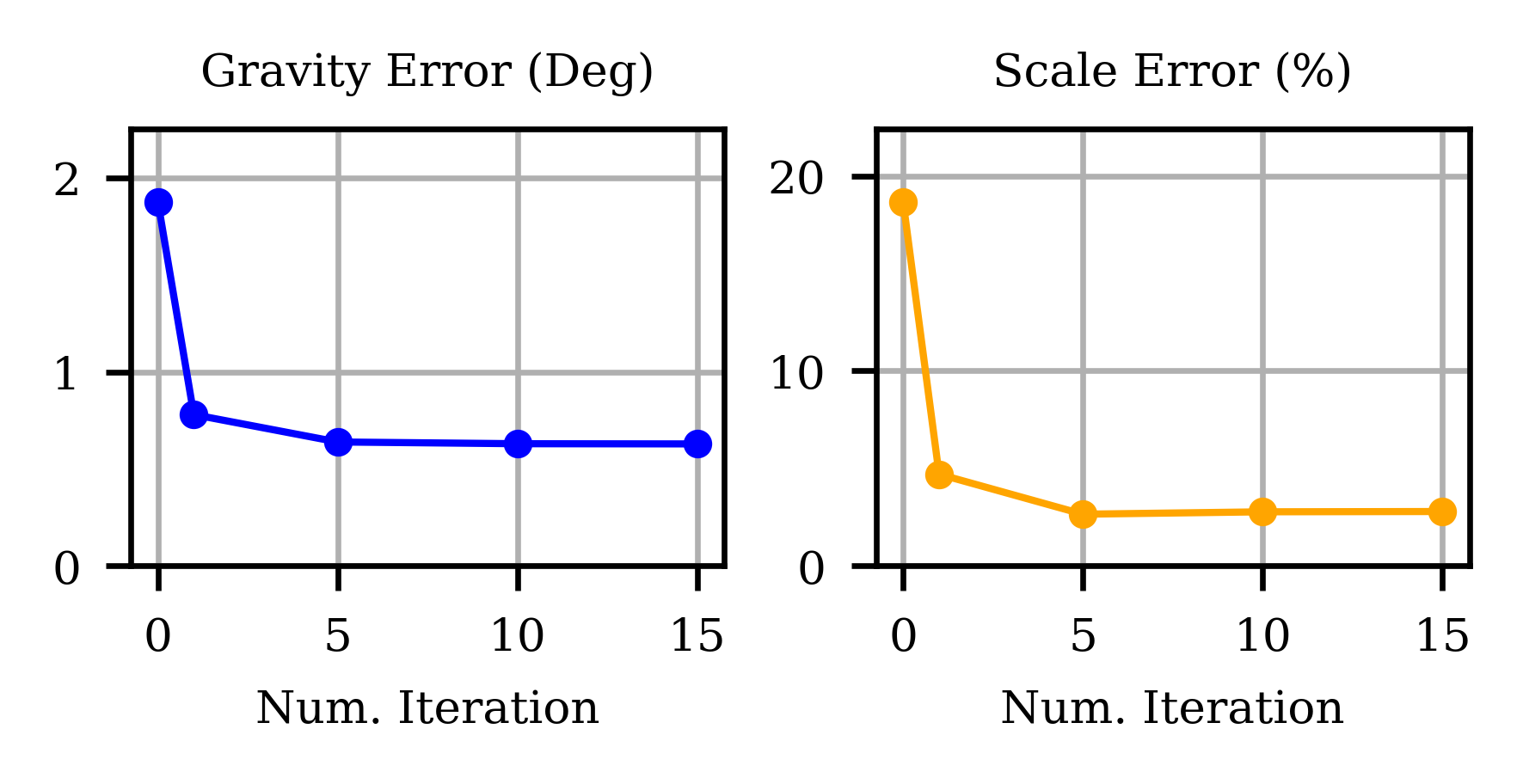}
\caption{ 
The impact of the number of iterations for initialization accuracy in monocular setup with the 0.5-second window in TUM-VI Dataset.
}
\label{fig:num_iter}
\end{figure}

\begin{figure*}
\centering
\includegraphics[width=0.9\linewidth]{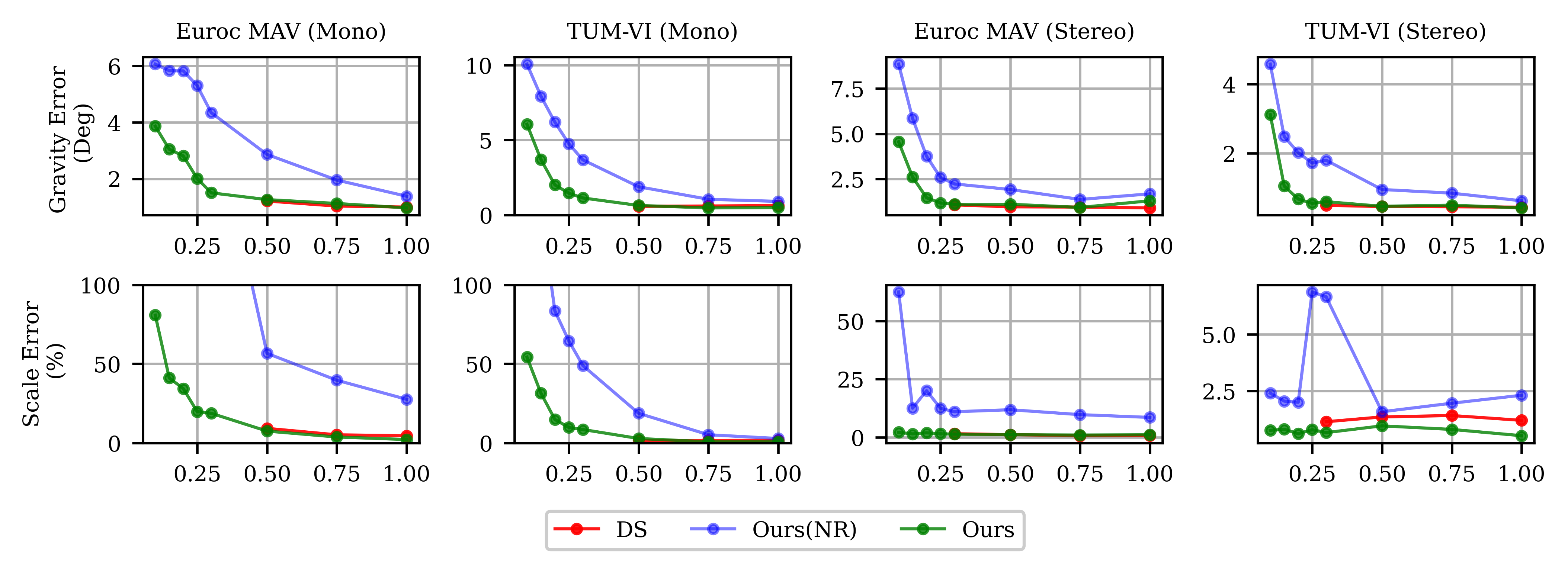}
\caption{
Initialization gravity error and scale error across different window sizes (time) and datasets are evaluated for both monocular and stereo setups. The results are compared using the DS method, our proposed method, and our method without iterative refinement, denoted as Ours (NR).
}
\label{fig:init_ACC}
\end{figure*}



\begin{figure*}[]
\centering
\includegraphics[width=0.8\textwidth]{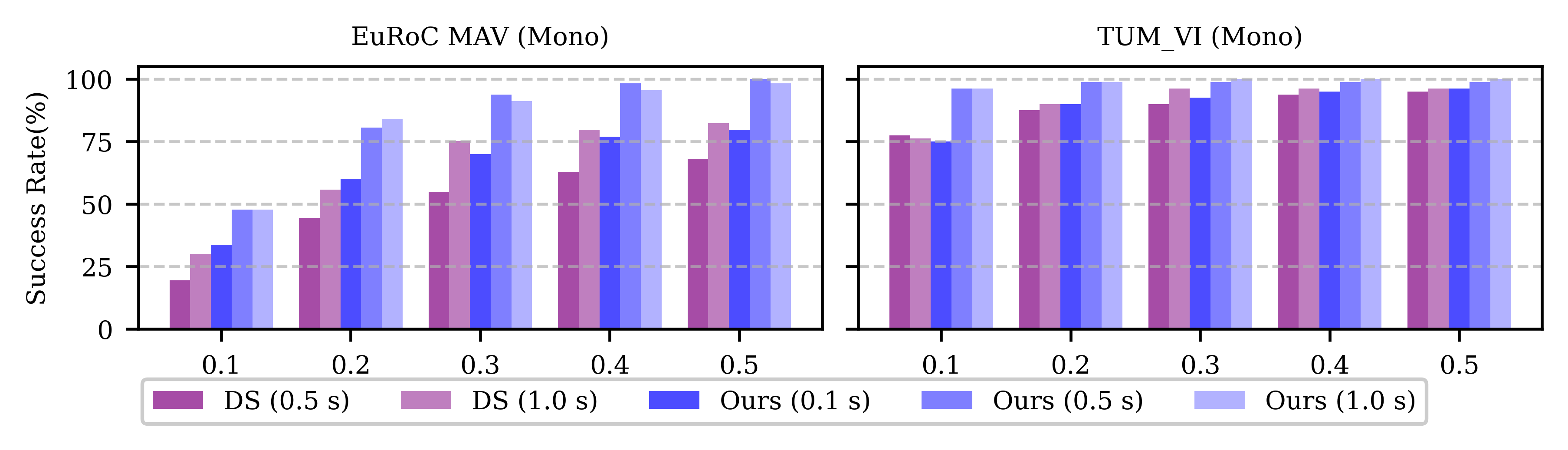}
\caption{ 
Successful initialization rates for the Euroc MAV dataset (left) and TUM VI dataset (right) with a 0.5m VIO position error threshold. The results compare the DS method with 0.5s and 1.0s windows to our method with 0.1s, 0.5s, and 1.0s windows, represented in different colors.
}
\label{fig:ate_init_ex}
\end{figure*}

\begin{table*}[]
\centering
\caption{\color{black}INITIALIZATION\color{black}~SUCCESS RATES ON THE Euroc MAV DATASET.}
\label{init_rate_mav}
\resizebox{0.95\linewidth}{!}{%
\begin{tabular}{@{}cc|ccccccc|ccccccc@{}}
\toprule
\multirow{2}{*}{} &
  \multirow{2}{*}{\textbf{Threshold (m)}} &
  \multicolumn{7}{c|}{
  \textbf{DS}
  } &
  \multicolumn{7}{c}{
  \textbf{Ours
  }} \\ \cmidrule(l){3-16} 
 &
   &
  0.1 s &
  0.15 s &
  0.2 s &
  0.3 s &
  0.5 s &
  0.75 s &
  1.0 s &
  0.1 s &
  0.15 s &
  0.2 s &
  0.3 s &
  0.5 s &
  0.75 s &
  1.0 s \\ \midrule
\multirow{3}{*}{Mono} &
  0.1 &
  \color{red}-\color{black} &
  \color{red}-\color{black} &
  \color{red}-\color{black} &
  \color{red}-\color{black} &
  \color{red}19.5\color{black} &
  \color{red}28.3\color{black} &
  \color{red}30.1\color{black} &
  \color{red}33.6\color{black} &
  \color{red}43.4\color{black} &
  \color{red}48.7\color{black} &
  \color{red}51.3\color{black} &
  \color{red}47.8\color{black} &
  \color{red}51.3\color{black} &
  \color{red}47.8\color{black} \\
 &
  0.3 &
  \color{red}-\color{black} &
  \color{red}-\color{black} &
  \color{red}-\color{black} &
  \color{red}-\color{black} &
  \color{red}54.9\color{black} &
  \color{red}63.7\color{black} &
  \color{red}75.2\color{black} &
  \color{red}69.9\color{black} &
  \color{red}83.2\color{black} &
  \color{red}82.3\color{black} &
  \color{red}86.7\color{black} &
  \color{red}93.8\color{black} &
  \color{red}86.7\color{black} &
  \color{red}91.2\color{black} \\
 &
  0.5 &
  \color{red}-\color{black} &
  \color{red}-\color{black} &
  \color{red}-\color{black} &
  \color{red}-\color{black} &
  \color{red}68.1\color{black} &
  \color{red}76.1\color{black} &
  \color{red}82.3\color{black} &
  \color{red}79.6\color{black} &
  \color{red}87.6\color{black} &
  \color{red}87.6\color{black} &
  \color{red}92.0\color{black} &
  \textbf{\color{ForestGreen}100.0}\color{black} &
  \textbf{\color{ForestGreen}96.5}\color{black} &
  \textbf{\color{ForestGreen}98.2}\color{black} \\ \midrule
\multirow{3}{*}{Stereo} &
  0.1 &
  \color{red}-\color{black} &
  \color{red}-\color{black} &
  \color{red}-\color{black} &
  \color{red}45.1\color{black} &
  \color{red}44.2\color{black} &
  \color{red}47.8\color{black} &
  \color{red}43.4\color{black} &
  \color{red}63.7\color{black} &
  \color{red}65.5\color{black} &
  \color{red}68.1\color{black} &
  \color{red}69.0\color{black} &
  \color{red}64.6\color{black} &
  \color{red}62.8\color{black} &
  \color{red}61.1\color{black} \\
 &
  0.3 &
  \color{red}-\color{black} &
  \color{red}-\color{black} &
  \color{red}-\color{black} &
  \color{red}84.1\color{black} &
  \color{red}86.7\color{black} &
  \color{red}82.3\color{black} &
  \color{red}85.8\color{black} &
  \color{red}94.7\color{black} &
  \textbf{\color{ForestGreen}96.5}\color{black} &
  \textbf{\color{ForestGreen}96.5}\color{black} &
  \textbf{\color{ForestGreen}96.5}\color{black} &
  \textbf{\color{ForestGreen}96.5}\color{black} &
  \textbf{\color{ForestGreen}95.6}\color{black} &
  \color{red}94.7\color{black} \\
 &
  0.5 &
  \color{red}-\color{black} &
  \color{red}-\color{black} &
  \color{red}-\color{black} &
  \color{red}90.3\color{black} &
  \color{red}92.0\color{black} &
  \color{red}90.3\color{black} &
  \color{red}93.8\color{black} &
  \textbf{\color{ForestGreen}96.5}\color{black} &
  \textbf{\color{ForestGreen}99.1}\color{black} &
  \textbf{\color{ForestGreen}99.1}\color{black} &
  \textbf{\color{ForestGreen}100.0}\color{black} &
  \textbf{\color{ForestGreen}99.1}\color{black} &
  \textbf{\color{ForestGreen}98.2}\color{black} &
  \textbf{\color{ForestGreen}98.2}\color{black} \\ \bottomrule
\end{tabular}%
}
\color{black}
\vspace{1.2em}
\noindent
\parbox{0.95\linewidth}{
\footnotesize
\textbf{Note:} This table compares the DS method and our proposed method across different initialization window sizes for monocular and stereo setups on the TUM VI dataset. Rates $\geq$ 95\% are shown in \textbf{\textcolor{ForestGreen}{green and bold}}, while rates $<$ 95\% are shown in \textcolor{red}{red}. Missing entries indicate failure to meet the defined success criteria.
}
\color{black}
\end{table*}

\begin{table*}[]
\centering
\caption{Initialization success rates on the TUM VI dataset.}
\label{init_rate_tum}
\resizebox{0.95\linewidth}{!}{%
\begin{tabular}{@{}cc|ccccccc|ccccccc@{}}
\toprule
\multirow{2}{*}{} &
  \multirow{2}{*}{\textbf{Threshold (m)}} &
  \multicolumn{7}{c|}{\textbf{DS}} &
  \multicolumn{7}{c}{\textbf{Ours}} \\ \cmidrule(l){3-16} 
 &
   &
  0.1 s &
  0.15 s &
  0.2 s &
  0.3 s &
  0.5 s &
  0.75 s &
  1.0 s &
  0.1 s &
  0.15 s &
  0.2 s &
  0.3 s &
  0.5 s &
  0.75 s &
  1.0 s \\ \midrule
\multirow{3}{*}{Mono} &
0.1 & \color{red}-\color{black} & \color{red}-\color{black} & \color{red}-\color{black} & \color{red}-\color{black} & \color{red}77.5\color{black} & \color{red}77.5\color{black} & \color{red}76.2\color{black} & \color{red}75.0\color{black} & \color{red}82.5\color{black} & \color{red}91.2\color{black} & \color{red}93.8\color{black} & \textbf{\color{ForestGreen}96.2}\color{black} & \textbf{\color{ForestGreen}100.0}\color{black} & \textbf{\color{ForestGreen}96.2}\color{black} 
\\
& 
0.3 & \color{red}-\color{black} & \color{red}-\color{black} & \color{red}-\color{black} & \color{red}-\color{black} & \color{red}90.0\color{black} & \color{red}90.0\color{black} & \textbf{\color{ForestGreen}96.2}\color{black} & \color{red}92.5\color{black} & \color{red}91.2\color{black} & \textbf{\color{ForestGreen}96.2}\color{black} & \textbf{\color{ForestGreen}97.5}\color{black} & \textbf{\color{ForestGreen}98.8}\color{black} & \textbf{\color{ForestGreen}100.0}\color{black} & \textbf{\color{ForestGreen}100.0}\color{black} \\
& 0.5 & \color{red}-\color{black} & \color{red}-\color{black} & \color{red}-\color{black} & \color{red}-\color{black} & \color{red}95.0\color{black} & \color{red}93.8\color{black} & \textbf{\color{ForestGreen}96.2}\color{black} & \textbf{\color{ForestGreen}96.2}\color{black} & \color{red}93.8\color{black} & \textbf{\color{ForestGreen}97.5}\color{black} & \textbf{\color{ForestGreen}100.0}\color{black} & \textbf{\color{ForestGreen}98.8}\color{black} & \textbf{\color{ForestGreen}100.0}\color{black} & \textbf{\color{ForestGreen}100.0}\color{black} 
\\
\midrule
\multirow{3}{*}{Stereo} &
0.1 & \color{red}-\color{black} & \color{red}-\color{black} & \color{red}-\color{black} & \color{red}91.2\color{black} & \color{red}86.2\color{black} & \color{red}92.5\color{black} & \textbf{\color{ForestGreen}98.8}\color{black} & \textbf{\color{ForestGreen}98.8}\color{black} & \textbf{\color{ForestGreen}100.0}\color{black} & \textbf{\color{ForestGreen}100.0}\color{black} & \textbf{\color{ForestGreen}100.0}\color{black} & \textbf{\color{ForestGreen}100.0}\color{black} & \textbf{\color{ForestGreen}98.8}\color{black} & \textbf{\color{ForestGreen}100.0}\color{black} \\
& 0.2 & \color{red}-\color{black} & \color{red}-\color{black} & \color{red}-\color{black} & \textbf{\color{ForestGreen}97.5}\color{black} & \textbf{\color{ForestGreen}98.8}\color{black} & \textbf{\color{ForestGreen}100.0}\color{black} & \textbf{\color{ForestGreen}100.0}\color{black} & \textbf{\color{ForestGreen}100.0}\color{black} & \textbf{\color{ForestGreen}100.0}\color{black} & \textbf{\color{ForestGreen}100.0}\color{black} & \textbf{\color{ForestGreen}100.0}\color{black} & \textbf{\color{ForestGreen}100.0}\color{black} & \textbf{\color{ForestGreen}98.8}\color{black} & \textbf{\color{ForestGreen}100.0}\color{black} \\
& 0.3 & \color{red}-\color{black} & \color{red}-\color{black} & \color{red}-\color{black} & \textbf{\color{ForestGreen}97.5}\color{black} & \textbf{\color{ForestGreen}100.0}\color{black} & \textbf{\color{ForestGreen}100.0}\color{black} & \textbf{\color{ForestGreen}100.0}\color{black} & \textbf{\color{ForestGreen}100.0}\color{black} & \textbf{\color{ForestGreen}100.0}\color{black} & \textbf{\color{ForestGreen}100.0}\color{black} & \textbf{\color{ForestGreen}100.0}\color{black} & \textbf{\color{ForestGreen}100.0}\color{black} & \textbf{\color{ForestGreen}98.8}\color{black} & \textbf{\color{ForestGreen}100.0}\color{black} 
\\
  \bottomrule
\end{tabular}%
}
\color{black}
\vspace{1.2em}
\noindent
\parbox{0.95\linewidth}{
\footnotesize
\textbf{Note:} This table compares the DS method and our proposed method across different initialization window sizes for monocular and stereo setups on the TUM VI dataset. Rates $\geq$ 95\% are shown in \textbf{\textcolor{ForestGreen}{green and bold}}, while rates $<$ 95\% are shown in \textcolor{red}{red}. Missing entries indicate failure to meet the defined success criteria.
}
\color{black}
\end{table*}

\begin{figure}[]
\centering
\includegraphics[width=0.8\linewidth]{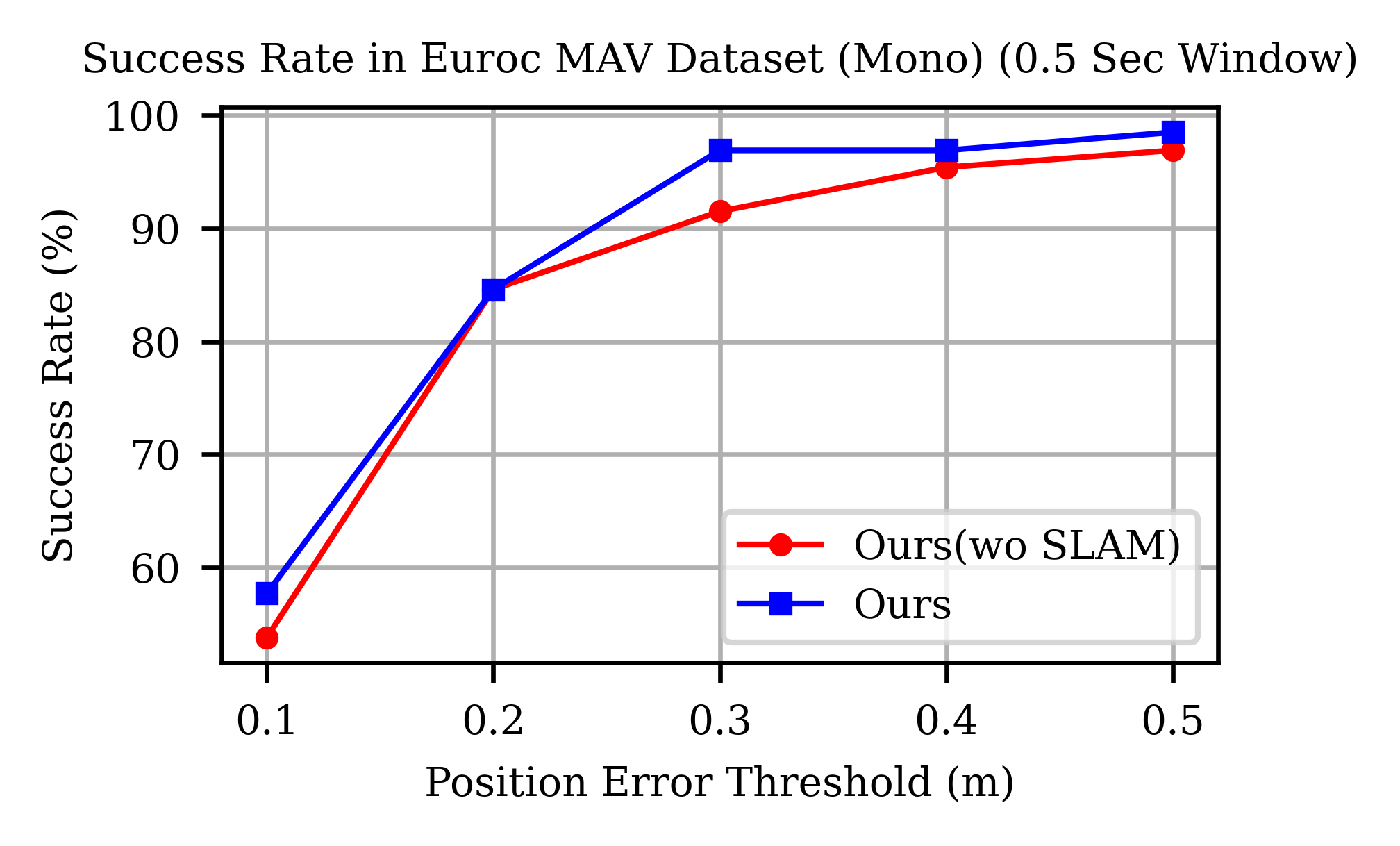}
\caption{ 
{Visual-inertial tracking successful rate over different position error
threshold using dynamic initialization in EurocMAV Dataset with 0.5-second
initialization window and monocular setup with and without initialization of
SLAM features in the state.
}
}
\label{fig:init_slam}
\end{figure}

\begin{figure*}[]
\centering
\includegraphics[width=0.42\linewidth]{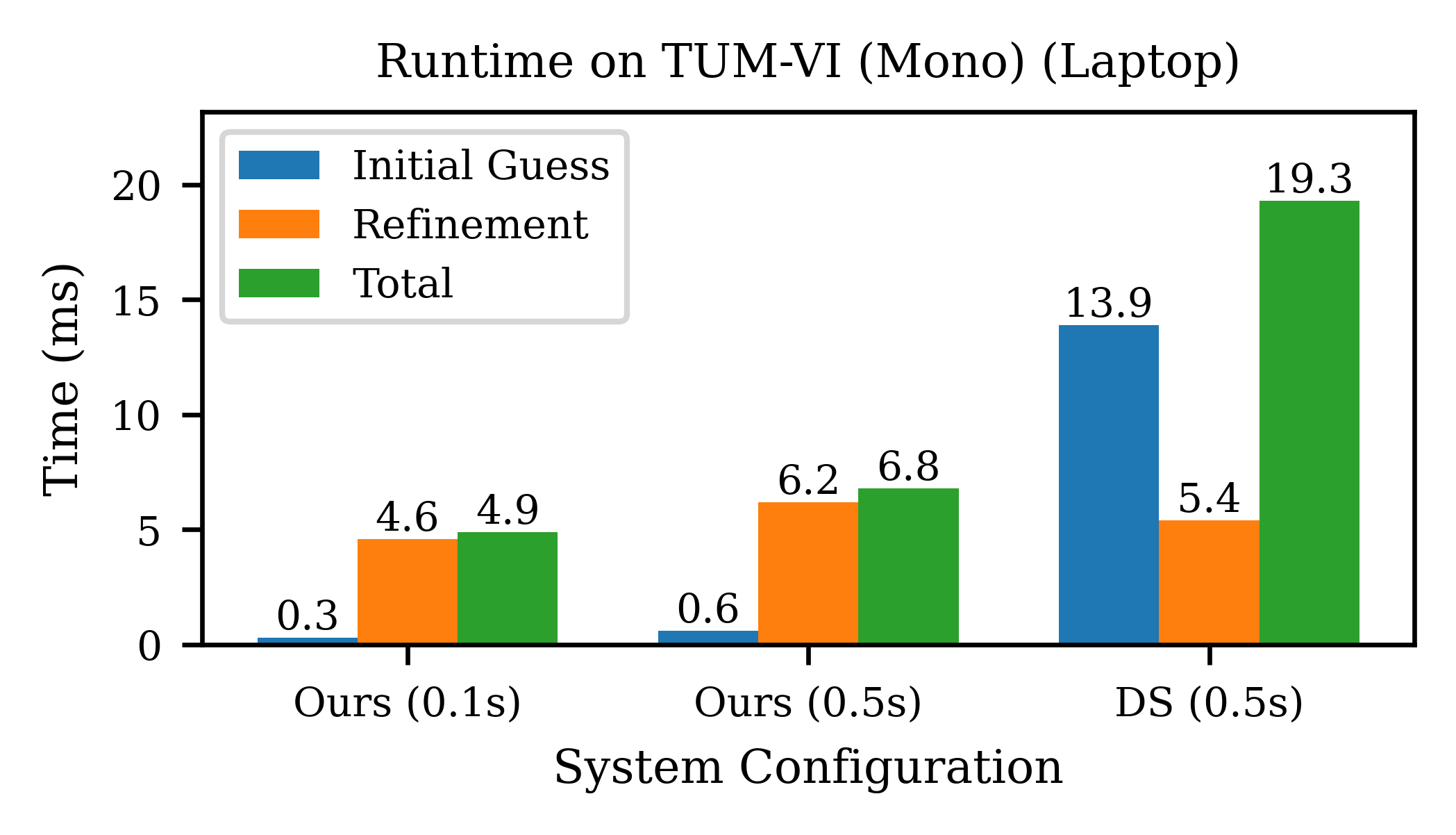}
\includegraphics[width=0.42\linewidth]{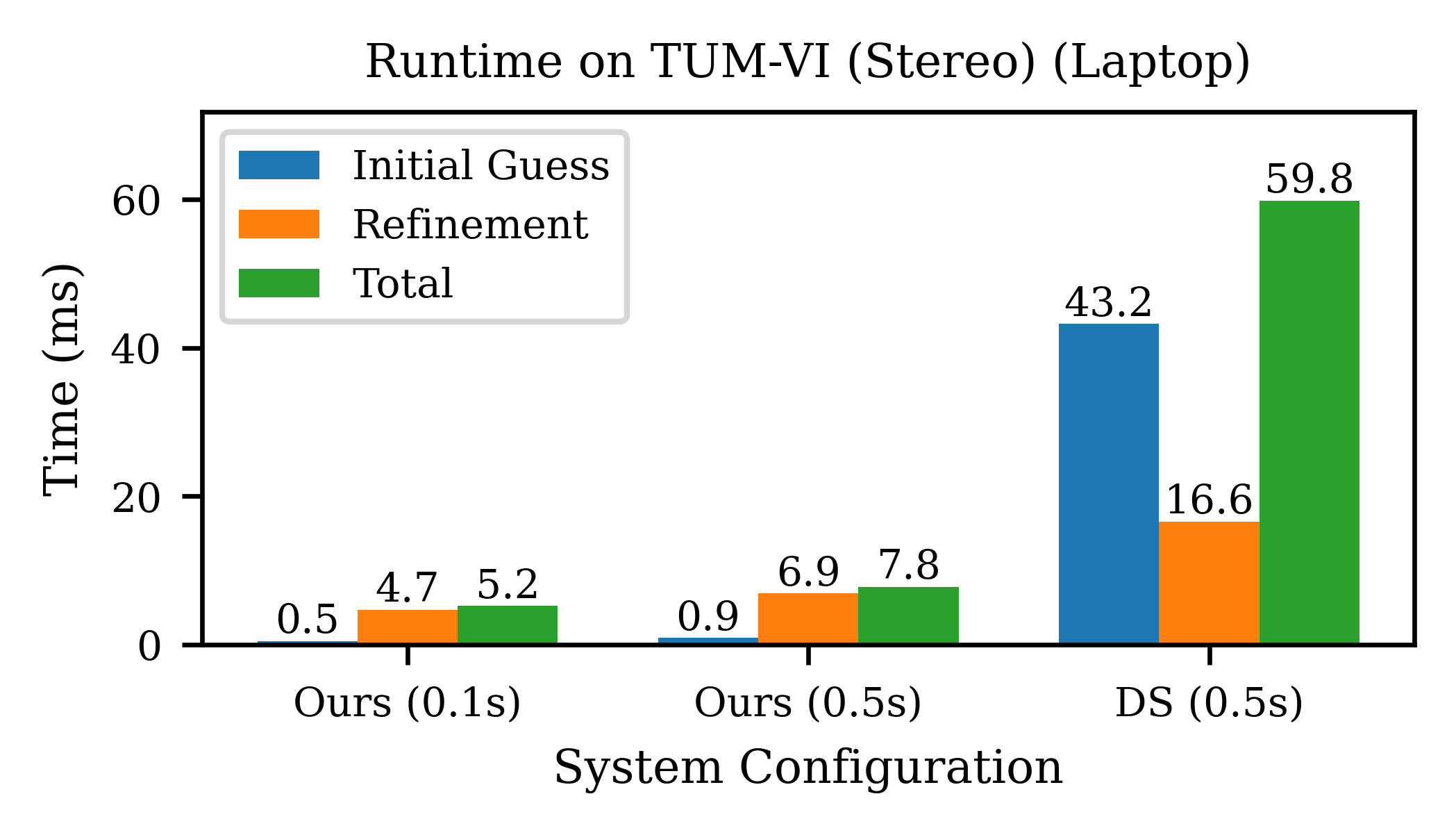}
\includegraphics[width=0.42\linewidth]{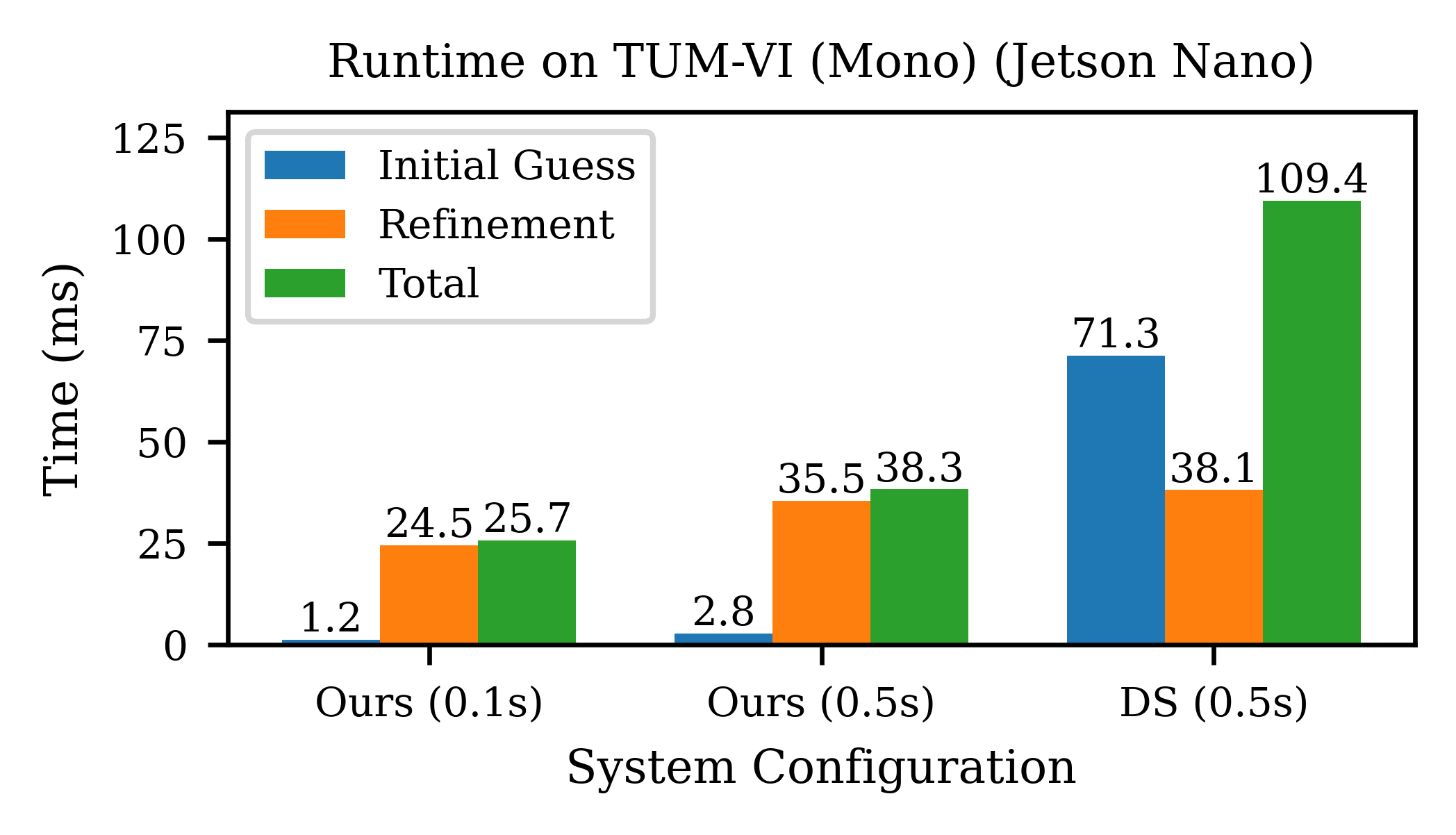}
\includegraphics[width=0.42\linewidth]{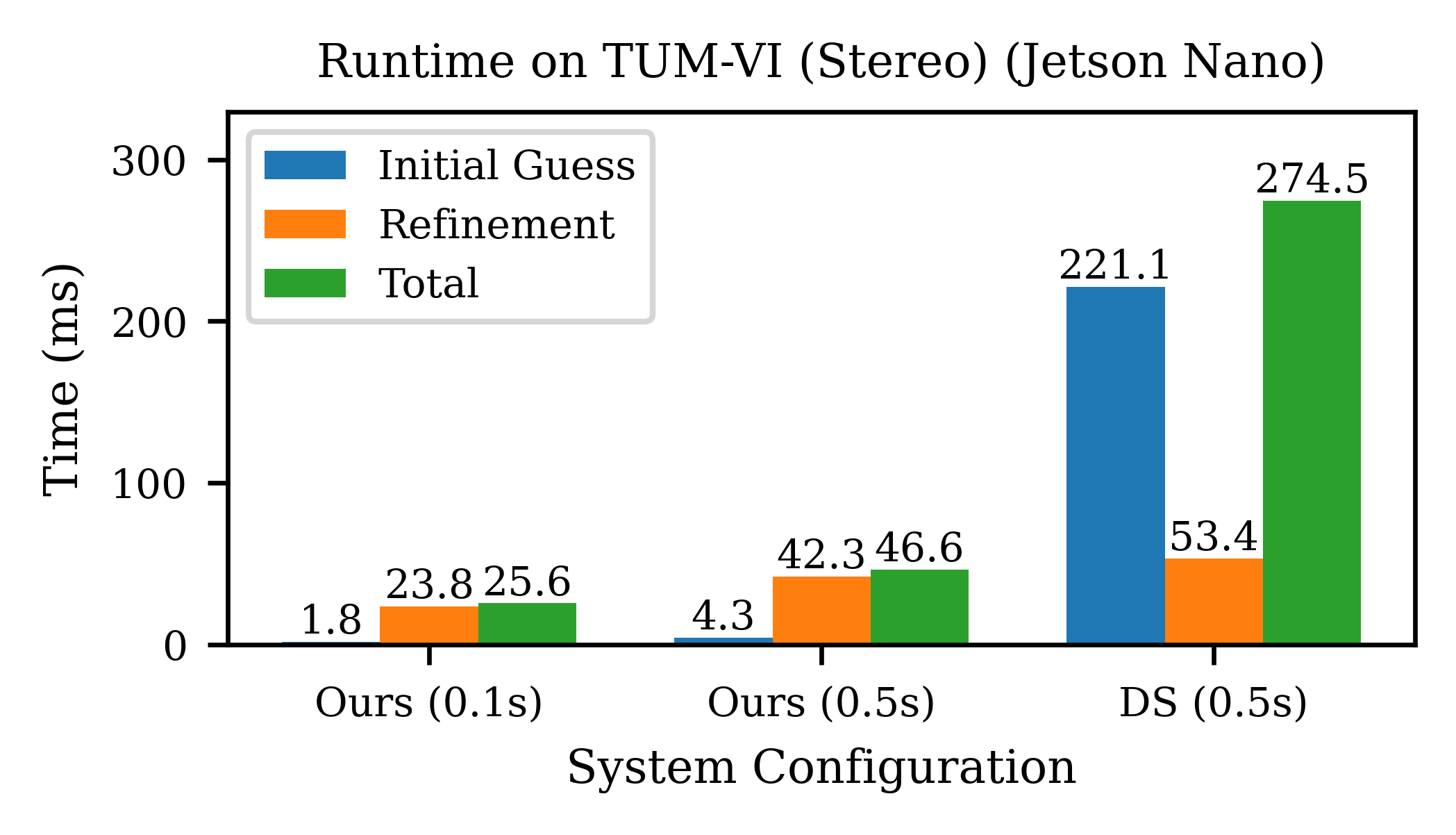}
\caption{ 
Initialization runtime on the TUM VI dataset evaluated on a laptop (top) and Jetson Nano (bottom) for different system configurations, comparing our method with the DS method.
}
\label{fig:init_time_lt}
\end{figure*}


\section{Experimental Evaluation of System Initialization}
\label{sec:exp_init}
To demonstrate the enhanced performance of the proposed novel dynamic initialization method, we validate it using two widely recognized and publicly available visual-inertial (VI) datasets: EuRoC MAV~\cite{Burri2016IJRR} and TUM-VI~\cite{Schubert2018IROS}. 
We compare our method (\textbf{Ours}) with the state-of-the-art dynamic initialization approach in OpenVINS, a reimplementation of Dongsi's method~\cite{Dongsi2012IROS}, referred to as \textbf{DS} in the following sections.
We also evaluate the performance with both monocular and stereo cameras.
{In stereo setup, except for independent KLT tracking of both cameras, we also perform tracking between each stereo pair to formulate stereo constraint. The parallax gained from stereo allows easy feature triangulation even in not fully excited motion.
}
To evaluate initialization performance, we divide each sequence into 10-second windows, run initialization at each entry point, and average the results across all runs. 
\color{black}
The keyframes are selected based on the average parallax, and all the features obtained from tracking are used to formulate the linear system constraints.
\color{black}

\subsection{Initialization Accuracy}
\label{sec:exp_init_acc}
We begin by reporting the accuracy of the initialized scale and gravity under various setups, as shown in Figure~\ref{fig:init_ACC}.
The figure presents the gravity error (top) and the scale error (bottom) for different initialization methods, including DS (red), our method without iterative SRF refinement (blue), and our method (green), evaluated with varying window sizes (in second) for both monocular and stereo setups.
Specifically, a Sim(3) transformation is fitted between the estimated and ground truth trajectories. The scale error is computed as $100 \times (\max(s, 1/s) - 1)$, where $s$ represents the scale obtained from the Sim(3) transformation.

From the figure, it is evident that the initialization errors decrease as the initialization time increases due to the availability of more measurements. However, the key advantage of the proposed initialization method lies in its ability to successfully initialize within a very short time frame (0.1 s), where DS method fails.
The results also highlight the improved performance achieved with the iterative update in SRF especially with small window sizes.
%
To further illustrate this, we compare the gravity and scale errors across different numbers of iterations, as shown in Figure~\ref{fig:num_iter}. 
It is evident that the errors consistently decrease with an increasing number of iterations and eventually converge.
Interestingly, we also observe that as the window size increases, the accuracy gain from refinement decreases.
Because with the increasing motion for inertial measurements and increasing parallax for visual measurements, the signal-noise-ratio increases.
Thanks to the robust novel constraint formulation, our method achieves desirable accuracy without further refinement.
This indicates the possibility to reduce or eliminate iterations for further efficiency gains, as discussed in the following section.
%

\subsection{Successful Initiation Rate}
\label{sec:result_init_success}
We now present the success rates based on our defined criteria for success under various system initialization setups. 
Traditionally, the success rate is defined by dividing each sequence into small windows, running initialization at each entry point, and calculating the percentage of successful dynamic initializations. 
It is important to note that we have applied a stricter criterion for success, given the strong performance of our proposed method.
Our criterion ensures initialization is tested at various points within each sequence, assessing both the accuracy of the initialization window poses and the subsequent VIO performance using the initialization results.
The criteria are as follows:
\begin{itemize}
    \item 
 We divide each sequence
into 10-second windows, run the initialization with a different initialization window (from 0.1 to 1s) at each entry
point.  
The initial states must be successfully solved for \textit{all} runs in the sequences, including ensuring that the linear system can be solved without issues. Additionally, the initial states must converge after refinement, and their covariance must be successfully obtained.
    \item 
    The position ATE of VIO within the first 10 seconds must remain below a predefined threshold.
\end{itemize}
Given these criteria, the results for the Euroc MAV dataset are shown in Table~\ref{init_rate_mav}, while Table~\ref{init_rate_tum} reports the results for the TUM VI dataset. 
In these tests, we vary the initialization window (from 0.1s to 1.0s) and report the success rates for different thresholds (i.e., 0.1m, 0.3m, and 0.5m).
In the tables, success rates higher than 95\% are highlighted in green, while those below 95\% are marked in red. 
Results are not reported if the defined success criteria are not met.

The results clearly demonstrate that the proposed SRF method consistently achieves a higher success rate compared to DS. For instance, with monocular setup initialization windows of 0.5s, 0.75s, and 1.0s, SRF significantly outperforms DS for both datasets.
Our method also consistently outperforms DS, especially in more challenging scenarios. For example, for the Euroc MAV dataset (shown in Table~\ref{init_rate_mav}), our method successfully initializes VIO with a 33\% success rate for a 0.1m position error threshold and a 79.6\% success rate for a 0.5m position error threshold using a 0.1s window. In contrast, the DS method fails to guarantee successful initialization in such a short time period.
These results demonstrate that the proposed method is well-constrained, robust to changes in window size, and significantly more efficient and reliable compared to DS.
To further illustrate this, Figure~\ref{fig:ate_init_ex} shows the success rates for both the DS method and our proposed method under 0.5s and 1.0s initialization windows. 
Additionally, we include results for our method with a 0.1s window.
From this plot, it is evident that, given the same initialization window, our method consistently outperforms DS (e.g., Ours (0.5s) achieves a higher success rate compared to DS (0.5s)). 
Impressively, Ours (0.1s) not only outperforms DS (0.5s) but also achieves comparable performance to DS (1.0s), demonstrating that our method is 
ultrafast and highly effective, even with minimal initialization windows.

We also investigate the inclusion of SLAM features and their impact on initialization, as shown in Figure~\ref{fig:init_slam}. 
Incorporating SLAM features significantly increases the success rate. 
As discussed in Section~\ref{sec:init_refine}, keeping SLAM features allows more information to be preserved after initialization, effectively enhancing the robustness and improving the performance of VIO.
Notably, despite the inclusion of additional states (i.e., SLAM features), the proposed method achieves remarkable efficiency, enabled by our streamlined linear system formulation and SRF update methods, as demonstrated in the following section.

\subsection{Initialization Timing Analysis}
We report the runtime of our initialization method across various window sizes on both a laptop and a Jetson Nano, compared with the DS method, as shown in Figure~\ref{fig:init_time_lt}.\footnote{Computational results were performed in a single thread on Laptop with an Intel(R) Core(TM) i7-11800H @ 2.30GHz and Jetson Nano with ARM Cortex-A57 4 Core @ 1.5GHz.}
Results for the DS initialization method are not reported for initialization windows shorter than 0.5s because it fails under these conditions. 
We should note that our refinement time also includes the time used for feature triangulation. But in DS's method this is calculated in initial guess step.
For covariance recovery, which is part of the refinement stage, DS only includes recovery of the 15-DoF IMU state, while we also allow for recovery of covariance of SLAM features.
The results clearly show that the proposed method with a 0.1s initialization window is the most efficient.
For a 0.5s initialization window, ours demonstrates significantly improved efficiency compared to DS. 
Notably, when running in Jetson Nano, DS's method can not be initialized before the next camera measurement comes (for a 20 Hz camera, dynamic initialization is required to finish within 50 ms to guarantee real-time), while our system can still achieve real-time performance.

This improvement stems from several key factors. 
First, the proposed method avoids the need to solve for feature positions in the linear system, significantly reducing the computational time required for generating the initial guess. 
Second, the efficient iterative update mechanism effectively minimizes errors while maintaining computational efficiency.
Furthermore, the proposed method provides direct access to the covariance matrix without requiring the inversion of the information matrix—a process that is computationally expensive and numerically unstable, often necessitating inflation of the initial covariance to stabilize the system. These advantages make ours method not only accurate and robust but also highly efficient.

We further discuss the 0.1-second window (3 keyframes) and 0.5-second window (5 keyframes) reported in Figure~\ref{fig:init_time_lt} for our method to provide a deeper understanding. 
The key difference lies in the number of keyframes and the total number of measurements.
As discussed in Section \ref{sec:dy_init}. at the initial guess stage, the computational complexity of our method is quadratic with respect to the number of keyframes and linear with respect to the number of tracked features per frame. 
Therefore, using only three keyframes requires approximately half the computation time compared to five keyframes.
In the refinement stage, the measurement size has a greater impact than the state size, as the latter remains relatively small.
Since our refinement method has linear complexity with respect to measurements, the computation time for 5 keyframes increases linearly compared to 3 keyframes.
As mentioned in Section~\ref{sec:exp_init_acc}, our method provides a good initial guess for large initialization windows. When computational resources are limited, it is possible to skip iterative refinement and use a conservative initial covariance, allowing the system to initialize in 0.6 ms on a laptop and 2.8 ms on a low-end embedded system.


\begin{table*}[]
\centering
\caption{Average Absolute Trajectory Error (ATE) in degrees/meters.}
\resizebox{1.0\textwidth}{!}{%
\begin{tabular}{@{}ccccccccccccccccccc@{}} \toprule
\textbf{Algo.} & 
\textbf{V101} & \textbf{V102} & \textbf{V103} & \textbf{V201} & \textbf{V202} & \textbf{V203} &
\textbf{MH01} & \textbf{MH02} & \textbf{MH03} & \textbf{MH04} & \textbf{MH05}  \\\toprule
\textbf{OpenVINS}(d) & 0.70 / 0.06 & 1.67 / 0.06 & 2.88 / 0.07 & 0.95 / 0.10 & 1.38 / 0.06 & 1.28 / 0.14 &
1.74 / 0.10 & 0.91 / 0.17 & 1.14 / 0.12 & 0.95 / 0.25 & 1.03 / 0.41 \\
\textbf{OpenVINS}(f) & 
0.71 / 0.06 & 1.66 / 0.06 & 2.87 / 0.06 & 0.94 / 0.10 & 1.40 / 0.06 & 1.25 / 0.14 &
1.76 / 0.10 & 0.91 / 0.17 & 1.18 / 0.13 & 0.94 / 0.25 & 1.04 / 0.41 \\
{$\sqrt{{\mathbf{VINS}}}$}(d) & 0.54 / 0.06 & 1.60 / 0.06 & 2.94 / 0.05 & 1.09 / 0.10 & 1.41 / 0.06 & 1.36 / 0.12 &
1.90 / 0.11 & 0.77 / 0.14 & 1.06 / 0.12 & 1.02 / 0.23 & 1.14 / 0.35 \\
{$\sqrt{{\mathbf{VINS}}}$}(f) & 
0.54 / 0.06 & 1.65 / 0.05 & 2.68 / 0.06 & 1.09 / 0.10 & 1.48 / 0.06 & 1.17 / 0.11 &
1.97 / 0.10 & 0.74 / 0.14 & 0.88 / 0.11 & 0.99 / 0.25 & 1.13 / 0.35 \\
\midrule
{$\sqrt{{\mathbf{VINS}}}$}(M) & 
0.63 / 0.07 & 1.73 / 0.08 & 1.76 / 0.07 & 0.80 / 0.07 & 1.39 / 0.09 & 1.48 / 0.14 & 
2.18 / 0.16 & 0.57 / 0.15 & 1.59 / 0.23 & 0.59 / 0.14 & 0.49 / 0.30  \\
{$\sqrt{{\mathbf{VINS}}}$}(L) & 0.73 / 0.05 & 1.83 / 0.10 & 2.75 / 0.06 & 0.71 / 0.06 & 1.22 / 0.07 & 1.53 / 0.15 & 
 1.32 / 0.13 & 0.73 / 0.15 & 1.57 / 0.23 & 0.86 / 0.21 & 0.74 / 0.40  \\
\textbf{RVIO2} & 0.88 / 0.09 & 2.27 / 0.10 & 2.02 / 0.10 & 2.19 / 0.13 & 1.90 / 0.11 & 1.50 / 0.15 & 2.60 / 0.17 & 1.00 / 0.15 & 1.08 / 0.19 & 1.10 / 0.24 & 0.95 / 0.32 \\
\textbf{EqVIO} & 0.66 / 0.05 & 2.64 / 0.14 & 3.37 / 0.19 & 1.31 / 0.10 & 1.67 / 0.18 & 1.71 / 0.20 & 2.14 / 0.14 & 0.89 / 0.15 & 1.11 / 0.09 & 2.09 / 0.35 & 1.29 / 0.23 \\
\textbf{VINS-Mono} & 
0.82 / 0.07 & 2.74 / 0.10 & 5.15 / 0.15 & 2.13 / 0.09 & 2.57 / 0.13 & 3.43 / 0.29 & 
0.78 / 0.20 & 0.86 / 0.18 & 1.82 / 0.23 & 2.51 / 0.41 & 0.94 / 0.29 \\
\bottomrule
\end{tabular}%
}
\color{black}
\noindent
\parbox{0.95\linewidth}{
\footnotesize
\textbf{Note:} ‘d’ and ‘f’ indicate the use of double and float precision, respectively. \color{black}$\sqrt{VINS}$(M)\color{black} uses float precision and MSCKF features for comparison with RVIO2. \color{black}$\sqrt{VINS}$(L)\color{black} is a lightweight configuration optimized for ultra-efficiency.
}
\color{black}
\label{tab:real}
\end{table*}



\begin{table} 
\centering
\caption{\textcolor{black}{Estimator runtime (ms) excluding feature tracking on EuRocMAV (Laptop).} 
}
\resizebox{0.95\linewidth}{!}{%
\label{tab:runtime}
\begin{tabular}{@{}cccccccc@{}}
\toprule
 & \textbf{OpenVINS} & \textbf{$\sqrt{\mathbf{VINS}}$} & \multicolumn{1}{l}{\textbf{$\sqrt{\mathbf{VINS}}$(M)}} & \textbf{$\sqrt{\mathbf{VINS}}$(L)} & \textbf{RVIO2}& \color{black}\textbf{EqVIO}\color{black}  & \textbf{VINS-Mono}  \\ \midrule
\textbf{Double}    & 4.2          & 2.4          & -      & -     & -                       & \color{black}0.9\color{black}  & 22.4      \\
\textbf{Float}     & 3.0         & 1.7 & 0.7          & 0.4                                 & 1.8     & \color{black}-\color{black}         & -                   \\ \bottomrule
\vspace{0.1em}
\end{tabular}%
}
\color{black}
\vspace{1.5em}
\noindent
\parbox{0.95\linewidth}{
\footnotesize
\textbf{Note:} Estimator runtime (ms) excluding feature tracking on EuRocMAV (Laptop). The efficiency of EqVIO primarily stems from its relatively compact state size compared to other methods. However, further speed improvements may be achieved by reformulating the system in SRF form.
}
\color{black}
\end{table}

\begin{figure}[]
\centering
\includegraphics[width=0.9\linewidth]{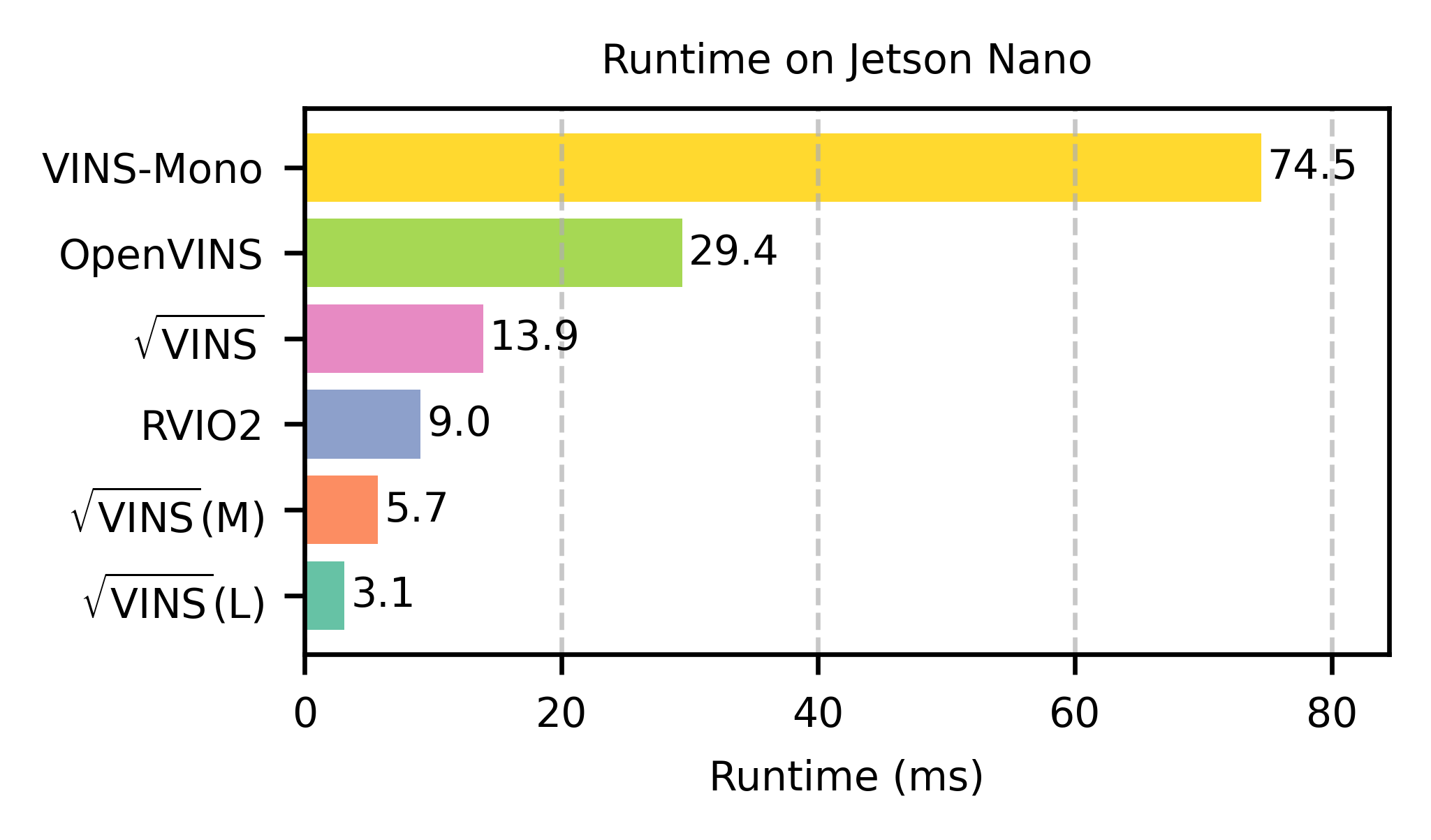}
\caption{ 
Estimator runtime (ms) excluding feature tracking on EuRocMAV (Jetson Nano).
}
\label{fig:init_time_jetson}
\end{figure}

\begin{table*}
\centering
\caption{Results on the Aria Everyday Activities dataset.}
\resizebox{0.8\textwidth}{!}{%
\begin{tabular}{@{}ccccccccccccccc@{}} 
\toprule
\textbf{Algo.} & 
\textbf{Location 1} & \textbf{Location 2} & \textbf{Location 3} & \textbf{Location 4} & \textbf{Location 5} & \textbf{Avg.} & \textbf{Runtime} \\ 
\midrule
\textbf{OpenVINS}* &  1.02 / 0.05 & 1.25 / 0.05 & 1.22 / 0.04 & 1.30 / 0.04 & 1.38 / 0.07 & 1.23 / 0.05 & 2.1 \\
{$\mathbf{\sqrt{{{VINS}}}}$} &  1.13 / 0.04 & 0.95 / 0.04 & 1.13 / 0.04 & 1.42 / 0.05 & 1.15 / 0.06 & 1.16 / 0.05 & 0.9 \\
\bottomrule\\
\end{tabular}%
}
\centering
\vspace{0.5em}
\parbox{0.8\textwidth}{
\footnotesize
\textcolor{black}{
    \textbf{Note:} Accuracy is reported as Average Absolute Trajectory Error (ATE) in degrees/meters; estimator runtime is in microseconds.} \\
\textsuperscript{*}Some OpenVINS runs diverged (position error $>$ 1m); only successful runs are reported, which may unfairly favor $\sqrt{\mathrm{VINS}}$.
}
\label{tab:ata_ate}
\end{table*}

\begin{figure}[]
\centering
\includegraphics[width=0.8\linewidth]{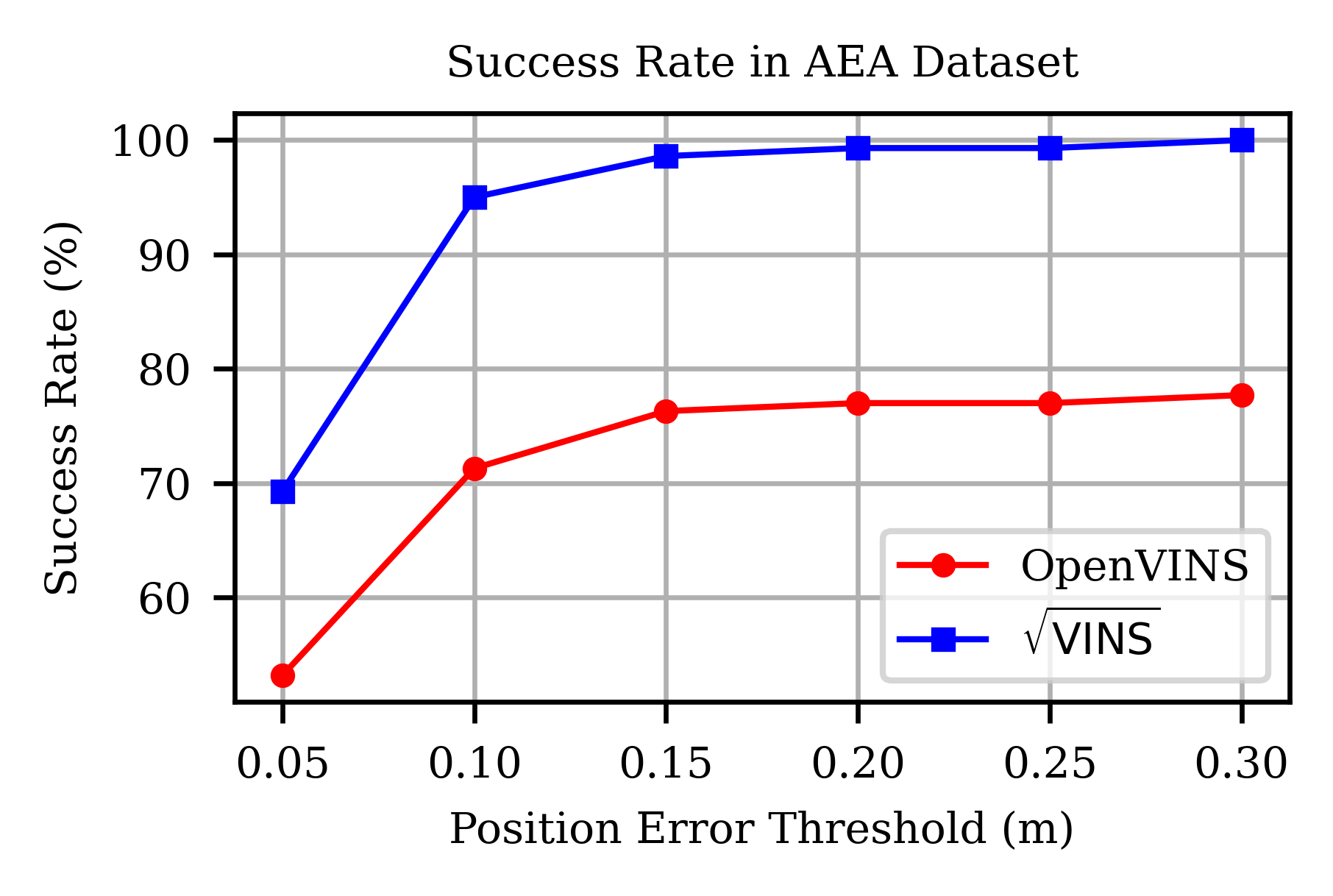}
\caption{ 
{Successful rate over different position error thresholds with dynamic initialization in Aria Everyday Activities (AEA) Dataset.
}
}
\label{fig:aea_init}
\end{figure}

\begin{figure}[]
\centering
\includegraphics[width=0.8\linewidth]{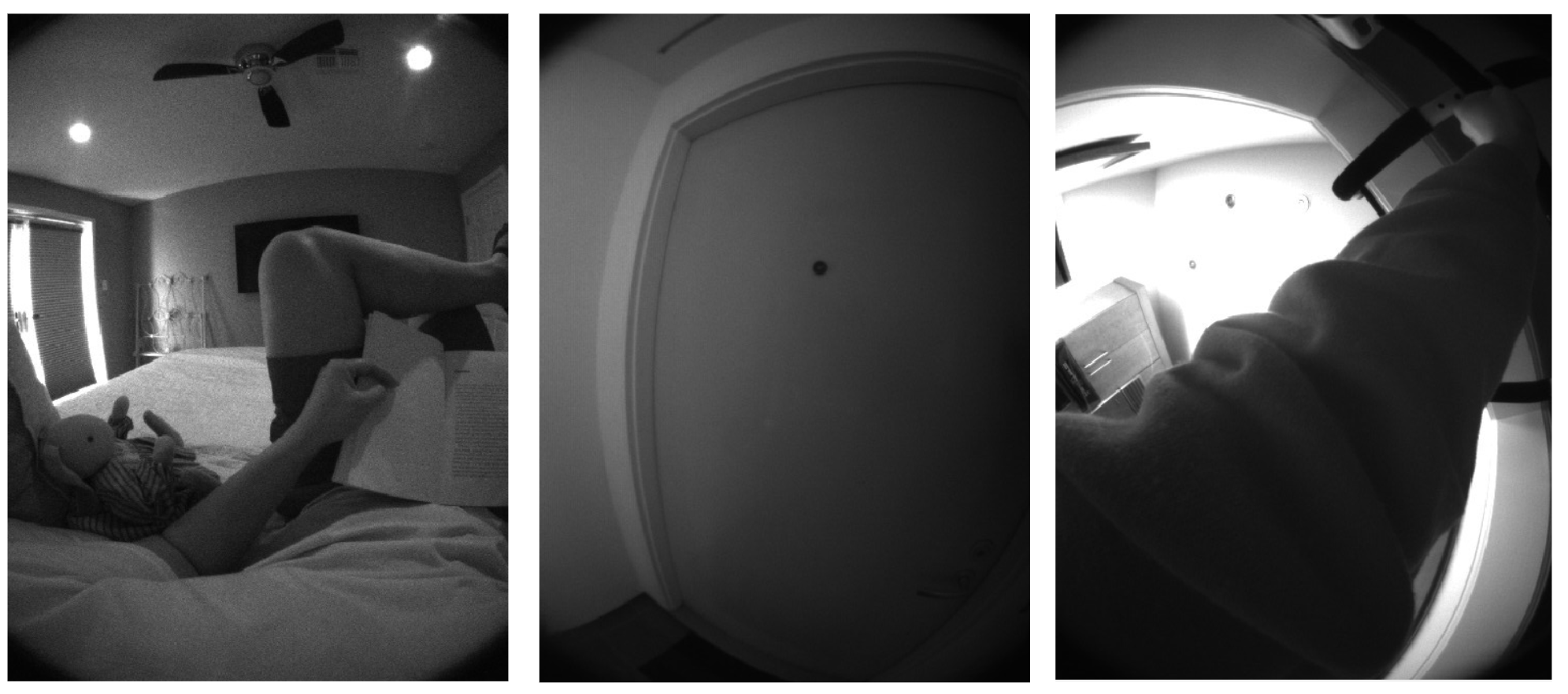}
\caption{ 
{Example images from the Aria Everyday Activities (AEA) Dataset.
}
}
\label{fig:aea_example}
\end{figure}

\section{Experimental Evaluation of $\sqrt{\mathrm{VINS}}$}
\label{sec:exp_vins}

In this section, we demonstrate the capabilities of $\sqrt{\mathrm{VINS}}$ in comparison with state-of-the-art VINS. The system is tested with both double-precision ($\sqrt{\mathrm{VINS}}$ (d)) and float-precision ($\sqrt{\mathrm{VINS}}$ (f)) versions. For comparison, we use OpenVINS~\cite{Geneva2019ICRA} in its original double-precision (OpenVINS(d)) and implement the float-precision version (OpenVINS(f)), which required tuning to avoid divergence due to negative covariance diagonals. Additionally, we compare with RVIO2~\cite{Huai2022RAL}, a square-root inverse filter VIO with a robocentric state formulation, and VINS-Mono~\cite{Qin2018TRO}, an optimization-based sliding-window VIO.

Since RVIO2 uses only MSCKF features by default, resulting in a smaller state size, we also test in a similar configuration (15 clones, 200 tracked features, all MSCKF), denoted as $\sqrt{\mathrm{VINS}}$(M), for a fair comparison.
Finally, $\sqrt{\mathrm{VINS}}$(L) is a lightweight setup that features ultra-efficiency, which tracks 150 features, keeps a maximum of 4 clones, and 15 SLAM features, and uses a maximum of 20 MSCKF features in the update. 
\color{black}
We also compared another state-of-the-art EKF-based system EqVIO\color{black}~\cite{Van2021ICRA,Van2023TRO}\color{black} 
that features equivariant formulation and impressive computational efficiency.
\color{black}
In the following, we report the performance of the above mentioned systems on the EurRoC MAV dataset~\cite{Burri2016IJRR} and the Aria Everyday Activities Dataset~\cite{Lv2024arXiv}.

\subsubsection{EurRoC MAV Dataset}
To evaluate this dataset, we use the default configuration of OpenVINS. This setup extracts 200 sparse features, keep 11 clones, and tracks up to 50 SLAM features and 40 MSCKF features. 
The system performs online calibration for camera-IMU extrinsics, time offsets, and camera intrinsics. 
Evaluation is conducted using only the left camera, with initialization performed from a static state.

The averaged ATE values are reported in Table~\ref{tab:real}. 
It is clear that the estimation accuracy of $\sqrt{\mathrm{VINS}}$(d), $\sqrt{\mathrm{VINS}}$(f), OpenVINS(d), and OpenVINS(f) are very similar as expected.
They are not exactly the same in the real world due to two reasons. 
First, $\chi^2$ test is adopted to reject outliers and robustify the estimator and might introduce randomness. 
For example, in certain cases, $\sqrt{\mathrm{VINS}}$(d) might reject measurements that pass $\chi^2$ test in $\sqrt{\mathrm{VINS}}$(f) because of slight numerical differences, this will cause different versions to use different measurements and have different performance. 
Second, OpenVINS performs a ``sequential" update, which first processes MSCKF features and then SLAM features for the consideration of efficiency, while \textcolor{black}{$\sqrt{\mathrm{VINS}}$} performs the update all at once. This also introduces differences in the state linearization points.
Compared with RVIO2, \textcolor{black}{EqVIO} and VINS-mono, \textcolor{black}{$\sqrt{\mathrm{VINS}}$} also achieves superior performance in almost all the sequences.
Surprisingly, even $\sqrt{\mathrm{VINS}}$(M) and $\sqrt{\mathrm{VINS}}$(L) achieves similar or even better performance than the other systems.

We then look into the efficiency of the estimators as reported in Table \ref{tab:runtime}. 
\color{black}
Feature tracking is also a crucial components in VINS, however, as it is not the focus of this work, we exclude its runtime in the efficiency analysis.
\color{black}
Clearly, $\sqrt{\mathrm{VINS}}$ is much faster than OpenVINS, reducing the runtime by half. 
Regardless of being in double or float format, $\sqrt{\mathrm{VINS}}$ consistently prevails over OpenVINS. 
Remarkably, the double precision $\sqrt{\mathrm{VINS}}$ even outperforms the float OpenVINS.
VINS-Mono runs the slowest as it performs iterative optimization. 
RVIO2 is also developed in float and shows excellent efficiency, but with a similar setup and a fair comparison, $\sqrt{\mathrm{VINS}}$(M) still prevails. 
\color{black}
EqVIO achieves strong efficiency, outperforming vanilla $\sqrt{\mathrm{VINS}}$ but remaining less efficient than $\sqrt{\mathrm{VINS}}(L)$ and $\sqrt{\mathrm{VINS}}(M)$. Its efficiency stems primarily from the SLAM-based state design: unlike MSCKF-based VINS, which maintains multiple clones, EqVIO keeps only a single clone while tracking a small number of SLAM features, thereby reducing both state and measurement dimensions, while compromising accuracy. Importantly, however, EqVIO’s key novelty lies in its use of Lie-group symmetry to improve estimator consistency. This contribution is orthogonal to our work, implying that a VIO system can, in principle, combine Lie-group symmetry with square-root filtering. In EqVIO, the Riccati matrix plays a role analogous to the covariance in the EKF. 
Tracking it in square-root form allows the use of SRF-based propagation and update methods, enabling robust operation under lower floating-point precision, exploiting triangular and symmetric structures for additional speedups, while simultaneously benefiting from the improved consistency offered by the equivariant formulation.
\color{black}
Finally, $\sqrt{\mathrm{VINS}}$(L) achieves the best efficiency with 0.4 ms in estimator runs, which means it can run over 2.5kHz, especially suitable for running on a computation-constrained platform. 
The efficiency gain mainly
comes from the proposed LLT-based SRF update method, which fully explored the problem structure (state order, upper-triangular covariance, Jacobian structure, avoid redundent computation) as discussed in Section~\ref{sec:vins}.

We also report the estimator runtime on the Jetson Nano, as shown in Figure~\ref{fig:init_time_jetson}. VINS-Mono fails to meet the real-time requirement (50 ms), $\sqrt{\mathrm{VINS}}$ demonstrates the highest efficiency among systems with features in the state (e.g., OpenVINS and VINS-Mono) and achieves comparable efficiency to RVIO2, which uses only MSCKF features. $\sqrt{\mathrm{VINS}}$(L) requires only 3.1 ms in an estimator run, which is almost 10 times faster than the default baseline OpenVINS, achieving ultra-efficiency.

\subsubsection{Aria Everyday Activities Dataset}

Aria Everyday Activities (AEA) dataset~\cite{Lv2024arXiv} provides an ego-centric perspective view recorded from Aria AR glasses, as shown in Figure~\ref{fig:aea_example}.
It contains 143 daily activity sequences recorded by multiple users in 5 indoor locations. 
The Aria glasses have two 10 Hz cameras and two IMUs. 
In the evaluation, both cameras are used, and only the right 1000 Hz IMU is used in our evaluation. 
The challenge for this dataset is that the two tracking cameras have limited view overlap so that does not provide good stereo depth. 
Also, the dataset is recorded mostly within motion at the beginning and requires dynamic initialization. 
In certain sequences, the cameras face a white textureless wall or door or have a large portion of dynamic objects (e.g. the user carries an object in front of the glass) further challenges feature tracking and the overall system robustness.

In our eveluation, OpenVINS and $\sqrt{\mathrm{VINS}}$ use the same setup, which tracks 200 features, uses a maximum of 6 clones, 15 SLAM features, and 40 MSCKF features in the update. 
Factory calibrations parameters from the dataset are used. 
For dynamic initialization, both systems use a 0.5-second window with 5 keyframes. 

We first report the success rate for this challange dataset, reported in Figure~\ref{fig:aea_init}.
In this test, after initialization, the subsequent VINS is run for the full trajectory, and the position error is evaluated at the end. 
The figure illustrates the percentage of runs that result in VIO achieving a position error below a certain threshold.
It is easy to observe that $\sqrt{\mathrm{VINS}}$ significantly outperforms in this challenging AR/VR scenarios, achieving over 95\% success with VIO errors under 0.1m at the end, compared to just 70\% for OpenVINS. 

We also report the VIO accuracy and estimator runtime on the AEA dataset in Table~\ref{tab:ata_ate}. 
For OpenVINS, only successful runs with position errors under 1 meter are included, whereas full results are presented for $\sqrt{\mathrm{VINS}}$, as all runs were successful. Despite this seemingly `unfair' comparison favoring OpenVINS, $\sqrt{\mathrm{VINS}}$ demonstrates superior accuracy. 
Runtime performance is also highlighted, with $\sqrt{\mathrm{VINS}}$ taking 0.9 ms compared to OpenVINS's 2.1 ms. Together with previous results, this evaluation further confirms the superior efficiency of $\sqrt{\mathrm{VINS}}$.
\section{Conclusions and Future Work}
\label{sec:conclu}

In this paper, we have developed a complete, robust and fast square-root VINS ($\sqrt{\mathrm{VINS}}$) with a faster-than-ever dynamic initialization which is able to robustly initialize the system even in just 100 ms with 3 keyframes. 
Our system sets a new VINS benchmark, offering over twice the speed, improved numerical stability, and enhanced robustness compared to SOTA algorithms of 3D motion tracking.
The square-root covariance filter (SRF) offers substantial benefits for VINS, especially in embedded systems, thanks to its improved numerical stability and efficiency. However, exploiting these advantages has been hindered by inefficiencies in the update process, particularly with large measurements. 
Building on a recent work~\cite{Peng2024ICRA}, this work proposed a novel LLT-based SRF update method, which leverages the structure of the VINS problem to achieve high efficiency and operational effectiveness in $\sqrt{\mathrm{VINS}}$.
Additionally, the proposed dynamic initialization in $\sqrt{\mathrm{VINS}}$ ensures rapid and reliable initialization under minimal conditions, marking a {\em first} in the literature. 
This capability is critical for practical deployments, where quick reinitialization is often required following system resets or failures.
Extensive numerical studies and real-world experiments have validated the robustness and efficiency of $\sqrt{\mathrm{VINS}}$. 
Notably, our system achieves twice the speed of SOTA methods while maintaining high accuracy, even under challenging conditions such as 32-bit single-precision float operations,
and the real-world results 
further demonstrate its superior performance across diverse scenarios for edge computing platforms.
\textcolor{black}{
Future work will focus on extending $\sqrt{\mathrm{VINS}}$ to support multi-sensor fusion and operate in dynamic environments, further improving scalability and robustness. We hope this work—and its open-source release—opens new possibilities for fast, reliable, and resource-efficient state estimation across a wide range of applications.
}


{
\bibliographystyle{IEEEtran}  

\begin{thebibliography}{10}
\providecommand{\url}[1]{#1}
\csname url@samestyle\endcsname
\providecommand{\newblock}{\relax}
\providecommand{\bibinfo}[2]{#2}
\providecommand{\BIBentrySTDinterwordspacing}{\spaceskip=0pt\relax}
\providecommand{\BIBentryALTinterwordstretchfactor}{4}
\providecommand{\BIBentryALTinterwordspacing}{\spaceskip=\fontdimen2\font plus
\BIBentryALTinterwordstretchfactor\fontdimen3\font minus \fontdimen4\font\relax}
\providecommand{\BIBforeignlanguage}[2]{{%
\expandafter\ifx\csname l@#1\endcsname\relax
\typeout{** WARNING: IEEEtran.bst: No hyphenation pattern has been}%
\typeout{** loaded for the language `#1'. Using the pattern for}%
\typeout{** the default language instead.}%
\else
\language=\csname l@#1\endcsname
\fi
#2}}
\providecommand{\BIBdecl}{\relax}
\BIBdecl

\bibitem{Wu2017ICRA}
K.~J. Wu, C.~X. Guo, G.~Georgiou, and S.~I. Roumeliotis, ``{VINS} on wheels,'' in \emph{Proc. of the {IEEE} International Conference on Robotics and Automation}, May 2017, pp. 5155--5162.

\bibitem{Chen2022IROS}
C.~Chen, Y.~Yang, P.~Geneva, W.~Lee, and G.~Huang, ``Visual-inertial-aided online mav system identification,'' in \emph{Proc. of the IEEE/RSJ International Conference on Intelligent Robots and Systems}, Kyoto, Japan., 2022.

\bibitem{Chen2023ICRA}
C.~Chen, P.~Geneva, Y.~Peng, W.~Lee, and G.~Huang, ``Monocular visual-inertial odometry with planar regularities,'' in \emph{Proc. of the IEEE International Conference on Robotics and Automation}, London, UK., 2023.

\bibitem{Peng2024ICRAb}
Y.~Peng, C.~Chen, and G.~Huang, ``Quantized visual-inertial odometry,'' in \emph{Proc. International Conference on Robotics and Automation}, Yokohama, Japan, May 2024.

\bibitem{Huang2019ICRA}
G.~Huang, ``Visual-inertial navigation: A concise review,'' in \emph{Proc. International Conference on Robotics and Automation}, Montreal, Canada, May 2019.

\bibitem{Mourikis2007ICRA}
A.~I. Mourikis and S.~I. Roumeliotis, ``A multi-state constraint {K}alman filter for vision-aided inertial navigation,'' in \emph{Proceedings of the IEEE International Conference on Robotics and Automation}, Rome, Italy, Apr. 10--14, 2007, pp. 3565--3572.

\bibitem{Leutenegger2015IJRR}
S.~Leutenegger, S.~Lynen, M.~Bosse, R.~Siegwart, and P.~Furgale, ``Keyframe-based visual--inertial odometry using nonlinear optimization,'' \emph{The International Journal of Robotics Research}, vol.~34, no.~3, pp. 314--334, 2015.

\bibitem{Bierman2006Book}
G.~J. Bierman, \emph{Factorization methods for discrete sequential estimation}.\hskip 1em plus 0.5em minus 0.4em\relax Courier Corporation, 2006.

\bibitem{Peng2024ICRA}
Y.~Peng, C.~Chen, and G.~Huang, ``Ultrafast square-root filter-based {VINS},'' in \emph{Proc. International Conference on Robotics and Automation}, Yokohama, Japan, May 2024.

\bibitem{Martinelli2014IJCV}
A.~Martinelli, ``Closed-form solution of visual-inertial structure from motion,'' \emph{International Journal of Computer Vision}, vol. 106, no.~2, pp. 138--152, Jan 2014.

\bibitem{Kneip2011IROS}
L.~Kneip, S.~Weiss, and R.~Siegwart, ``Deterministic initialization of metric state estimation filters for loosely-coupled monocular vision-inertial systems,'' in \emph{IEEE/RSJ International Conference on Intelligent Robots and Systems}, Sept 2011, pp. 2235--2241.

\bibitem{Mur2017RAL}
R.~Mur-Artal and J.~D. Tard{\'o}s, ``Visual-inertial monocular slam with map reuse,'' \emph{IEEE Robotics and Automation Letters}, vol.~2, no.~2, pp. 796--803, 2017.

\bibitem{Qin2017IROS}
T.~Qin and S.~Shen, ``Robust initialization of monocular visual-inertial estimation on aerial robots,'' in \emph{IEEE/RSJ International Conference on Intelligent Robots and Systems (IROS)}, 2017, pp. 4225--4232.

\bibitem{Qin2018TRO}
T.~Qin, P.~Li, and S.~Shen, ``{VINS-Mono}: A robust and versatile monocular visual-inertial state estimator,'' \emph{IEEE Transactions on Robotics}, vol.~34, no.~4, pp. 1004--1020, 2018.

\bibitem{Von2018ICRA}
L.~Von~Stumberg, V.~Usenko, and D.~Cremers, ``Direct sparse visual-inertial odometry using dynamic marginalization,'' in \emph{IEEE International Conference on Robotics and Automation (ICRA)}, 2018, pp. 2510--2517.

\bibitem{Campos2020ICRA}
C.~Campos, J.~M. Montiel, and J.~D. Tard{\'o}s, ``Inertial-only optimization for visual-inertial initialization,'' in \emph{IEEE International Conference on Robotics and Automation (ICRA)}, 2020, pp. 51--57.

\bibitem{Zuniga2021RAL}
D.~Zu{\~n}iga-No{\"e}l, F.-A. Moreno, and J.~Gonzalez-Jimenez, ``An analytical solution to the imu initialization problem for visual-inertial systems,'' \emph{IEEE Robotics and Automation Letters}, vol.~6, no.~3, pp. 6116--6122, 2021.

\bibitem{Wei2022TIM}
H.~Wei, T.~Zhang, and L.~Zhang, ``A fast analytical two-stage initial-parameters estimation method for monocular-inertial navigation,'' \emph{IEEE Transactions on Instrumentation and Measurement}, vol.~71, pp. 1--12, 2022.

\bibitem{He2023CVPR}
Y.~He, B.~Xu, Z.~Ouyang, and H.~Li, ``A rotation-translation-decoupled solution for robust and efficient visual-inertial initialization,'' in \emph{Proceedings of the IEEE/CVF Conference on Computer Vision and Pattern Recognition}, 2023, pp. 739--748.

\bibitem{He2024arXiv}
J.~He, M.~Li, Y.~Wang, and H.~Wang, ``Ple-slam: A visual-inertial slam based on point-line features and efficient imu initialization,'' \emph{arXiv preprint arXiv:2401.01081}, 2024.

\bibitem{Wang2024arXiv}
W.~Wang, C.~Chou, G.~Sevagamoorthy, K.~Chen, Z.~Chen, Z.~Feng, Y.~Xia, F.~Cai, Y.~Xu, and P.~Mordohai, ``Stereo-nec: Enhancing stereo visual-inertial slam initialization with normal epipolar constraints,'' \emph{arXiv preprint arXiv:2403.07225}, 2024.

\bibitem{Martinelli2011thesis}
A.~Martinelli, ``Closed-form solutions for attitude, speed, absolute scale and bias determination by fusing vision and inertial measurements,'' Ph.D. dissertation, INRIA, 2011.

\bibitem{Dong2012IROS}
T.-C. Dong-Si and A.~I. Mourikis, ``Estimator initialization in vision-aided inertial navigation with unknown camera-imu calibration,'' in \emph{IEEE/RSJ International Conference on Intelligent Robots and Systems}, 2012, pp. 1064--1071.

\bibitem{Li2014RSS}
M.~Li and A.~I. Mourikis, ``A convex formulation for motion estimation using visual and inertial sensors,'' in \emph{Proceedings of the Workshop on Multi-View Geometry, held in conjunction with RSS. Berkeley, CA}, 2014.

\bibitem{Kaiser2016RAL}
J.~Kaiser, A.~Martinelli, F.~Fontana, and D.~Scaramuzza, ``Simultaneous state initialization and gyroscope bias calibration in visual inertial aided navigation,'' \emph{IEEE Robotics and Automation Letters}, vol.~2, no.~1, pp. 18--25, 2016.

\bibitem{Dominguez2018ISMAR}
J.~Dom{\'\i}nguez-Conti, J.~Yin, Y.~Alami, and J.~Civera, ``Visual-inertial slam initialization: A general linear formulation and a gravity-observing non-linear optimization,'' in \emph{IEEE International Symposium on Mixed and Augmented Reality (ISMAR)}, 2018, pp. 37--45.

\bibitem{Campos2019ICRA}
C.~Campos, J.~M. Montiel, and J.~D. Tard{\'o}s, ``Fast and robust initialization for visual-inertial slam,'' in \emph{International Conference on Robotics and Automation (ICRA)}, 2019, pp. 1288--1294.

\bibitem{Evangelidis2021RAL}
G.~Evangelidis and B.~Micusik, ``Revisiting visual-inertial structure-from-motion for odometry and slam initialization,'' \emph{IEEE Robotics and Automation Letters}, vol.~6, no.~2, pp. 1415--1422, 2021.

\bibitem{Zhou2022ECCV}
Y.~Zhou, A.~Kar, E.~Turner, A.~Kowdle, C.~X. Guo, R.~C. DuToit, and K.~Tsotsos, ``Learned monocular depth priors in visual-inertial initialization,'' in \emph{European conference on computer vision}.\hskip 1em plus 0.5em minus 0.4em\relax Springer, 2022, pp. 552--570.

\bibitem{Merrill2023RSS}
N.~W. Merrill, P.~Geneva, S.~Katragadda, C.~Chen, and G.~Huang, ``Fast monocular visual-inertial initialization leveraging learned single-view depth.'' in \emph{Robotics: Science and Systems}, 2023.

\bibitem{Campos2021TRO}
C.~Campos, R.~Elvira, J.~J.~G. Rodr{\'\i}guez, J.~M. Montiel, and J.~D. Tard{\'o}s, ``Orb-slam3: An accurate open-source library for visual, visual--inertial, and multimap slam,'' \emph{IEEE Transactions on Robotics}, vol.~37, no.~6, pp. 1874--1890, 2021.

\bibitem{Usenko2019RAL}
V.~Usenko, N.~Demmel, D.~Schubert, J.~St{\"u}ckler, and D.~Cremers, ``Visual-inertial mapping with non-linear factor recovery,'' \emph{IEEE Robotics and Automation Letters}, vol.~5, no.~2, pp. 422--429, 2019.

\bibitem{Chen2023IROS}
C.~Chen, P.~Geneva, Y.~Peng, W.~Lee, and G.~Huang, ``Optimization-based vins: Consistency, marginalization, and fej,'' in \emph{Proc. of the IEEE/RSJ International Conference on Intelligent Robots and Systems}, Detroit, MI., 2023.

\bibitem{Bloesch2017IJRR}
M.~Bloesch, M.~Burri, S.~Omari, M.~Hutter, and R.~Siegwart, ``Iterated extended kalman filter based visual-inertial odometry using direct photometric feedback,'' \emph{The International Journal of Robotics Research}, vol.~36, no.~10, pp. 1053--1072, 2017.

\bibitem{Huai2019IJRR}
Z.~Huai and G.~Huang, ``Robocentric visual-inertial odometry,'' \emph{International Journal of Robotics Research}, Apr. 2019.

\bibitem{Geneva2019ICRA}
P.~Geneva, K.~Eckenhoff, and G.~Huang, ``A linear-complexity {EKF} for visual-inertial navigation with loop closures,'' in \emph{Proc. International Conference on Robotics and Automation}, Montreal, Canada, May 2019.

\bibitem{Huai2021ICRA}
Z.~Huai and G.~Huang, ``Markov parallel tracking and mapping for probabilistic slam,'' in \emph{Proc. of the IEEE International Conference on Robotics and Automation}, Xi'an, China, 2021.

\bibitem{Van2021ICRA}
P.~van Goor and R.~Mahony, ``An equivariant filter for visual inertial odometry,'' in \emph{IEEE International Conference on Robotics and Automation (ICRA)}, 2021, pp. 14\,432--14\,438.

\bibitem{Van2023TRO}
------, ``Eqvio: An equivariant filter for visual-inertial odometry,'' \emph{IEEE Transactions on Robotics}, 2023.

\bibitem{Chen2024WAFR}
C.~Chen, Y.~Peng, and G.~Huang, ``Visual-inertial state estimation with decoupled error and state representations,'' in \emph{Proc. of International Workshop on the Algorithmic Foundations of Robotics}, Chicago, IL, 2024, (submitted).

\bibitem{Li2013IJRR}
M.~Li and A.~Mourikis, ``High-precision, consistent {EKF}-based visual-inertial odometry,'' \emph{International Journal of Robotics Research}, vol.~32, no.~6, pp. 690--711, 2013.

\bibitem{Geneva2019CVPR}
P.~Geneva, J.~Maley, and G.~Huang, ``An efficient schmidt-ekf for {3D} visual-inertial {SLAM},'' in \emph{Proc. Conference on Computer Vision and Pattern Recognition (CVPR)}, Long Beach, CA, Jun. 2019.

\bibitem{Yang2022RAL}
Y.~Yang, C.~Chen, W.~Lee, and G.~Huang, ``Decoupled right invariant error states for consistent visual-inertial navigation,'' \emph{IEEE Robotics and Automation Letters}, vol.~7, no.~2, pp. 1627--1634, 2022.

\bibitem{Chen2022ICRA}
C.~Chen, Y.~Yang, P.~Geneva, and G.~Huang, ``{FEJ2}: A consistent visual-inertial state estimator design,'' in \emph{International Conference on Robotics and Automation (ICRA)}, Philadelphia, USA, 2022.

\bibitem{Huang2011IROS}
G.~Huang, A.~I. Mourikis, and S.~I. Roumeliotis, ``An observability constrained sliding window filter for {SLAM},'' in \emph{Proc. of the IEEE/RSJ International Conference on Intelligent Robots and Systems}, San Francisco, CA, Sep. 2011, pp. 65--72.

\bibitem{Chen2024ICRA}
C.~Chen, Y.~Peng, and G.~Huang, ``Fast and consistent covariance recovery for sliding-window optimization-based vins,'' in \emph{Proc. International Conference on Robotics and Automation}, Yokohama, Japan, May 2024.

\bibitem{Kottas2015ICRA}
D.~G. Kottas and S.~I. Roumeliotis, ``An iterative kalman smoother for robust 3d localization on mobile and wearable devices,'' in \emph{IEEE International Conference on Robotics and Automation (ICRA)}, 2015, pp. 6336--6343.

\bibitem{Wu2015RSS}
K.~J. Wu, A.~M. Ahmed, G.~A. Georgiou, and S.~I. Roumeliotis, ``A square root inverse filter for efficient vision-aided inertial navigation on mobile devices,'' in \emph{Robotics: Science and Systems Conference (RSS)}, 2015.

\bibitem{Golub2013Book}
G.~H. Golub and C.~F. Van~Loan, \emph{Matrix computations}.\hskip 1em plus 0.5em minus 0.4em\relax JHU press, 2013.

\bibitem{Wu2016TR}
K.~J. Wu and S.~I. Roumeliotis, ``Inverse schmidt estimators,'' \emph{Multiple Autonomous Robotic System Laboratory, Department of Computer Science \& Engineering, University of Minnesota, Tech. Rep. Number-2016-003}, 2016.

\bibitem{Caruso2017ICIPIN}
D.~Caruso, A.~Eudes, M.~Sanfourche, D.~Vissiere, and G.~Le~Besnerais, ``An inverse square root filter for robust indoor/outdoor magneto-visual-inertial odometry,'' in \emph{International Conference on Indoor Positioning and Indoor Navigation (IPIN)}, 2017, pp. 1--8.

\bibitem{Ke2019IROS}
T.~Ke, K.~J. Wu, and S.~I. Roumeliotis, ``Rise-slam: A resource-aware inverse schmidt estimator for slam,'' in \emph{IEEE/RSJ International Conference on Intelligent Robots and Systems (IROS)}, 2019, pp. 354--361.

\bibitem{Huai2022RAL}
Z.~Huai and G.~Huang, ``Square-root robocentric visual-inertial odometry with online spatiotemporal calibration,'' \emph{IEEE Robotics and Automation Letters}, vol.~7, no.~4, pp. 9961--9968, 2022.

\bibitem{Dellaert2006IJRR}
F.~Dellaert and M.~Kaess, ``Square root {SAM}: Simultaneous localization and mapping via square root information smoothing,'' \emph{International Journal of Robotics Research}, vol.~25, no.~12, pp. 1181--1203, Dec. 2006.

\bibitem{Kaess2007ICRA}
M.~Kaess, A.~Ranganathan, and F.~Dellaert, ``isam: Fast incremental smoothing and mapping with efficient data association,'' in \emph{Proceedings IEEE international conference on robotics and automation}, 2007, pp. 1670--1677.

\bibitem{Kaess2012IJRR}
M.~Kaess, H.~Johannsson, R.~Roberts, V.~Ila, J.~Leonard, and F.~Dellaert, ``{iSAM2}: Incremental smoothing and mapping using the {B}ayes tree,'' \emph{International Journal of Robotics Research}, vol.~31, pp. 217--236, Feb. 2012.

\bibitem{Demmel2021ICCV}
N.~Demmel, D.~Schubert, C.~Sommer, D.~Cremers, and V.~Usenko, ``Square root marginalization for sliding-window bundle adjustment,'' in \emph{Proceedings of the IEEE/CVF International Conference on Computer Vision}, 2021, pp. 13\,260--13\,268.

\bibitem{Givens2023JGCD}
M.~W. Givens and J.~W. McMahon, ``Square-root extended information filter for visual-inertial odometry for planetary landing,'' \emph{Journal of Guidance, Control, and Dynamics}, vol.~46, no.~2, pp. 231--245, 2023.

\bibitem{Wu2024Thesis}
\BIBentryALTinterwordspacing
K.~Wu, ``On the efficiency and consistency of visual-inertial localization and mapping,'' Ph.D. dissertation, Department of Computer Science and Engineering, University of Minnesota, 2024. [Online]. Available: \url{https://hdl.handle.net/11299/262002}
\BIBentrySTDinterwordspacing

\bibitem{Chauchat2020TCST}
P.~Chauchat, A.~Barrau, and S.~Bonnabel, ``Factor graph-based smoothing without matrix inversion for highly precise localization,'' \emph{IEEE Transactions on Control Systems Technology}, vol.~29, no.~3, pp. 1219--1232, 2020.

\bibitem{Chauchat2022CDC}
P.~Chauchat, S.~Bonnabel, and A.~Barrau, ``Invariant smoothing with low process noise,'' in \emph{IEEE 61st Conference on Decision and Control (CDC)}, 2022, pp. 4758--4763.

\bibitem{Ke2024aXiv}
T.~Ke, P.~Agrawal, Y.~Zhang, W.~Zhen, C.~X. Guo, T.~Sharp, and R.~C. Dutoit, ``Pc-srif: Preconditioned cholesky-based square root information filter for vision-aided inertial navigation,'' \emph{arXiv preprint arXiv:2409.11372}, 2024.

\bibitem{Potter1963GCC}
J.~Potter and R.~Stern, ``Statistical filtering of space navigation measurements,'' in \emph{Guidance and Control Conference}, 1963, p. 333.

\bibitem{Battin1964Book}
R.~H. Battin, ``Astronautical guidance,'' 1964.

\bibitem{Dyer1969JOTA}
P.~Dyer and S.~McReynolds, ``Extension of square-root filtering to include process noise,'' \emph{Journal of Optimization Theory and Applications}, vol.~3, pp. 444--458, 1969.

\bibitem{Bellantoni1967AIAA}
J.~Bellantoni and K.~Dodge, ``A square root formulation of the kalman-schmidt filter.'' \emph{AIAA journal}, vol.~5, no.~7, pp. 1309--1314, 1967.

\bibitem{Andrews1968AIAA}
A.~Andrews, ``A square root formulation of the kalman covariance equations.'' \emph{Aiaa Journal}, vol.~6, no.~6, pp. 1165--1166, 1968.

\bibitem{Kaminski1971TAC}
P.~Kaminski, A.~Bryson, and S.~Schmidt, ``Discrete square root filtering: A survey of current techniques,'' \emph{IEEE Transactions on automatic control}, vol.~16, no.~6, pp. 727--736, 1971.

\bibitem{Agee1972WSMRT}
W.~S. Agee and R.~H. Turner, ``Triangular decomposition of a positive definite matrix plus a symmetric dyad with application to kalman filtering,'' \emph{White Sands Missile Range Tech. Rep}, vol.~38, 1972.

\bibitem{Carlson1973AIAA}
N.~A. Carlson, ``Fast triangular formulation of the square root filter.'' \emph{AIAA journal}, vol.~11, no.~9, pp. 1259--1265, 1973.

\bibitem{Dongarra1990TOMS}
J.~J. Dongarra, J.~Du~Croz, S.~Hammarling, and I.~S. Duff, ``A set of level 3 basic linear algebra subprograms,'' \emph{ACM Transactions on Mathematical Software (TOMS)}, vol.~16, no.~1, pp. 1--17, 1990.

\bibitem{Martinelli2011TRO}
A.~Martinelli, ``State estimation based on the concept of continuous symmetry and observability analysis: The case of calibration,'' \emph{IEEE Transactions on Robotics}, vol.~27, no.~2, pp. 239--255, 2011.

\bibitem{Kneip2013ICCV}
L.~Kneip and S.~Lynen, ``Direct optimization of frame-to-frame rotation,'' in \emph{Proceedings of the IEEE International Conference on Computer Vision}, 2013, pp. 2352--2359.

\bibitem{Merrill2024IJRR}
N.~Merrill, P.~Geneva, S.~Katragadda, C.~Chen, and G.~Huang., ``Fast and robust learned single-view depth-aided monocular visual-inertial initialization,'' \emph{International Journal of Robotics Research}, May 2024.

\bibitem{Cai2021TPAMI}
Q.~Cai, L.~Zhang, Y.~Wu, W.~Yu, and D.~Hu, ``A pose-only solution to visual reconstruction and navigation,'' \emph{IEEE Transactions on Pattern Analysis and Machine Intelligence}, vol.~45, no.~1, pp. 73--86, 2021.

\bibitem{Maybeck1982Book}
P.~S. Maybeck, \emph{Stochastic models, estimation, and control}.\hskip 1em plus 0.5em minus 0.4em\relax Academic press, 1982.

\bibitem{sola2017quaternion}
J.~Sola, ``Quaternion kinematics for the error-state kalman filter,'' \emph{arXiv preprint arXiv:1711.02508}, 2017.

\bibitem{Chatfield1997}
A.~B. Chatfield, \emph{Fundamentals of High Accuracy Inertial Navigation}.\hskip 1em plus 0.5em minus 0.4em\relax AIAA, 1997.

\bibitem{Lupton2012TRO}
T.~Lupton and S.~Sukkarieh, ``Visual-inertial-aided navigation for high-dynamic motion in built environments without initial conditions,'' \emph{{IEEE} Transactions on Robotics}, vol.~28, no.~1, pp. 61 --76, Feb. 2012.

\bibitem{Forster2015RSS}
C.~Forster, L.~Carlone, F.~Dellaert, and D.~Scaramuzza, ``{IMU} preintegration on manifold for efficient visual-inertial maximum-a-posteriori estimation,'' in \emph{Proc. of the Robotics: Science and Systems Conference}, Rome, Italy, Jul. 13--17, 2015.

\bibitem{Eckenhoff2019IJRR}
K.~Eckenhoff, P.~Geneva, and G.~Huang, ``Closed-form preintegration methods for graph-based visual-inertial navigation,'' \emph{International Journal of Robotics Research}, vol.~38, no.~5, pp. 563--586, 2019.

\bibitem{Roumeliotis2002ICRAa}
S.~I. Roumeliotis and J.~W. Burdick, ``Stochastic cloning: A generalized framework for processing relative state measurements,'' in \emph{Proceedings of the IEEE International Conference on Robotics and Automation}, Washington, DC, May 11-15, 2002, pp. 1788--1795.

\bibitem{Geneva2020ICRA}
\BIBentryALTinterwordspacing
P.~Geneva, K.~Eckenhoff, W.~Lee, Y.~Yang, and G.~Huang, ``Openvins: A research platform for visual-inertial estimation,'' in \emph{Proc. of the IEEE International Conference on Robotics and Automation}, Paris, France, 2020. [Online]. Available: \url{https://github.com/rpng/open_vins}
\BIBentrySTDinterwordspacing

\bibitem{Peng2023TR}
\BIBentryALTinterwordspacing
Y.~Peng, C.~Chen, and G.~Huang, ``Technical report: Ultrafast square-root filter-based vins,'' University of Delaware, Tech. Rep., 2024. [Online]. Available: \url{https://udel.edu/~ghuang/papers/tr_srf.pdf}
\BIBentrySTDinterwordspacing

\bibitem{Hartley2003book}
R.~Hartley and A.~Zisserman, \emph{Multiple view geometry in computer vision}.\hskip 1em plus 0.5em minus 0.4em\relax Cambridge university press, 2003.

\bibitem{Li2014}
M.~Li, ``Visual-inertial odometry on resource-constrained systems,'' Ph.D. dissertation, UC Riverside, 2014.

\bibitem{Li2014thesis}
------, \emph{Visual-inertial odometry on resource-constrained systems}.\hskip 1em plus 0.5em minus 0.4em\relax University of California, Riverside, 2014.

\bibitem{Li2014IJRR}
M.~Li and A.~I. Mourikis, ``Online temporal calibration for camera--imu systems: Theory and algorithms,'' \emph{The International Journal of Robotics Research}, vol.~33, no.~7, pp. 947--964, 2014.

\bibitem{Hartley2004}
R.~Hartley and A.~Zisserman, \emph{Multiple View Geometry in Computer Vision}.\hskip 1em plus 0.5em minus 0.4em\relax Cambridge University Press, 2004.

\bibitem{Burri2016IJRR}
M.~Burri, J.~Nikolic, P.~Gohl, T.~Schneider, J.~Rehder, S.~Omari, M.~W. Achtelik, and R.~Siegwart, ``The euroc micro aerial vehicle datasets,'' \emph{The International Journal of Robotics Research}, 2016.

\bibitem{Schubert2018IROS}
D.~Schubert, T.~Goll, N.~Demmel, V.~Usenko, J.~St{\"u}ckler, and D.~Cremers, ``The tum vi benchmark for evaluating visual-inertial odometry,'' in \emph{IEEE/RSJ International Conference on Intelligent Robots and Systems (IROS)}, 2018, pp. 1680--1687.

\bibitem{Dongsi2012IROS}
T.~Dong-Si and A.~I. Mourikis, ``Initialization in vision-aided inertial navigation with unknown camera-imu calibration,'' in \emph{Proc. of the {IEEE/RSJ} International Conference on Intelligent Robots and Systems}, Vilamoura, Portugal, Oct. 2012, pp. 1064--1071.

\bibitem{Lv2024arXiv}
Z.~Lv, N.~Charron, P.~Moulon, A.~Gamino, C.~Peng, C.~Sweeney, E.~Miller, H.~Tang, J.~Meissner, J.~Dong \emph{et~al.}, ``Aria everyday activities dataset,'' \emph{arXiv preprint arXiv:2402.13349}, 2024.

\end{thebibliography}

\begin{IEEEbiography}[{\includegraphics[width=1in,height=1.25in,clip,keepaspectratio]{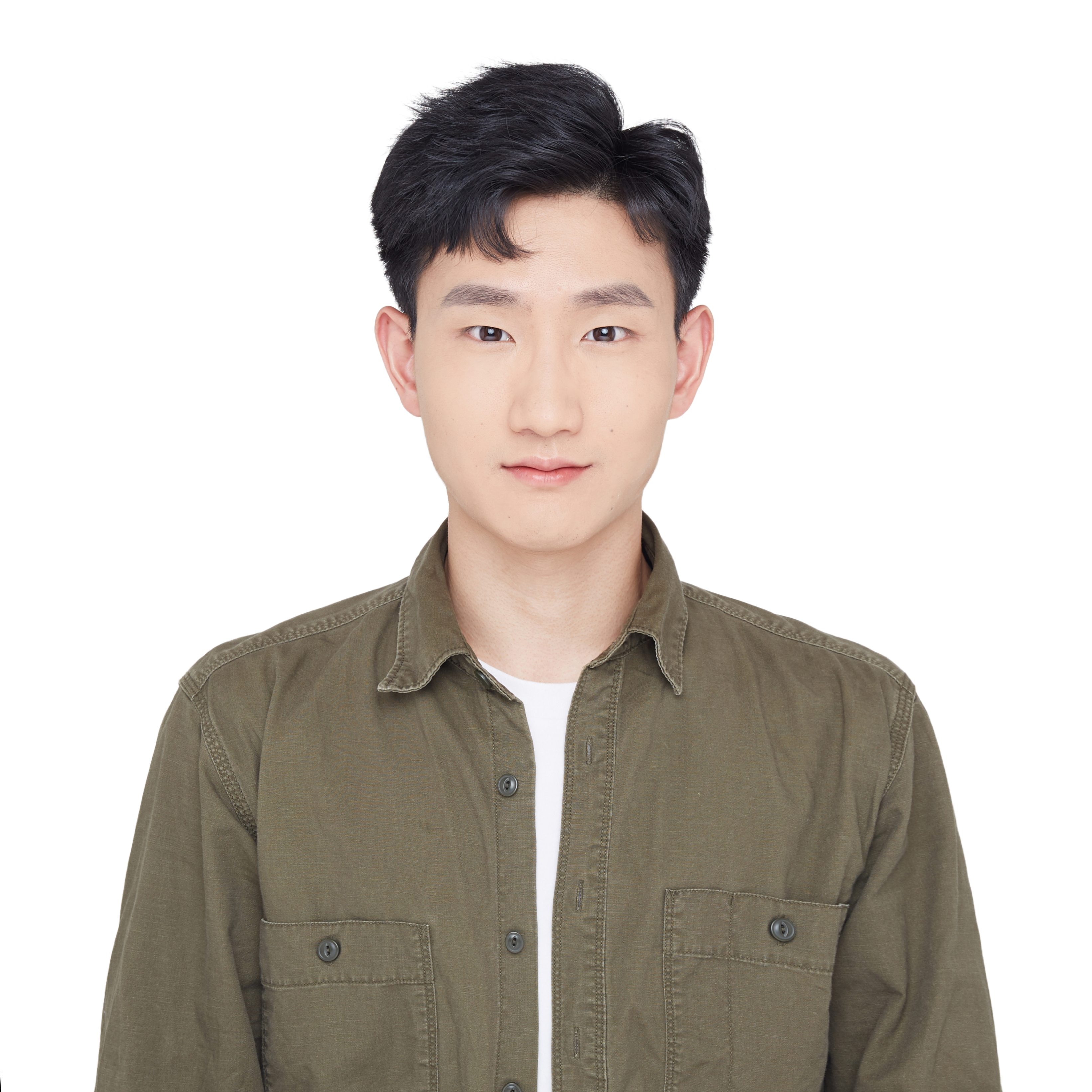}}]{Yuxiang Peng}
(Student Member, IEEE) received the Bachelor of Engineering degree in Mechatronics Engineering from Zhejiang University, Hangzhou, China, in 2019. Now
he is a Ph.D. candidate in Mechanical Engineering at the University of Delaware, Newark, DE, USA. 
He is a member of the Robot Perception and Navigation Group (RPNG) at the University of Delaware.  His research interests focus on efficient state estimation and spatial computing for embedded and resource-constrained robotic and XR platforms. He was a Finalist for the Best Paper Award on Robot Vision at the ICRA 2024. 
\end{IEEEbiography}
\vspace{-4.5\baselineskip}

\begin{IEEEbiography}[{\includegraphics[width=1in,height=1.25in,clip,keepaspectratio]{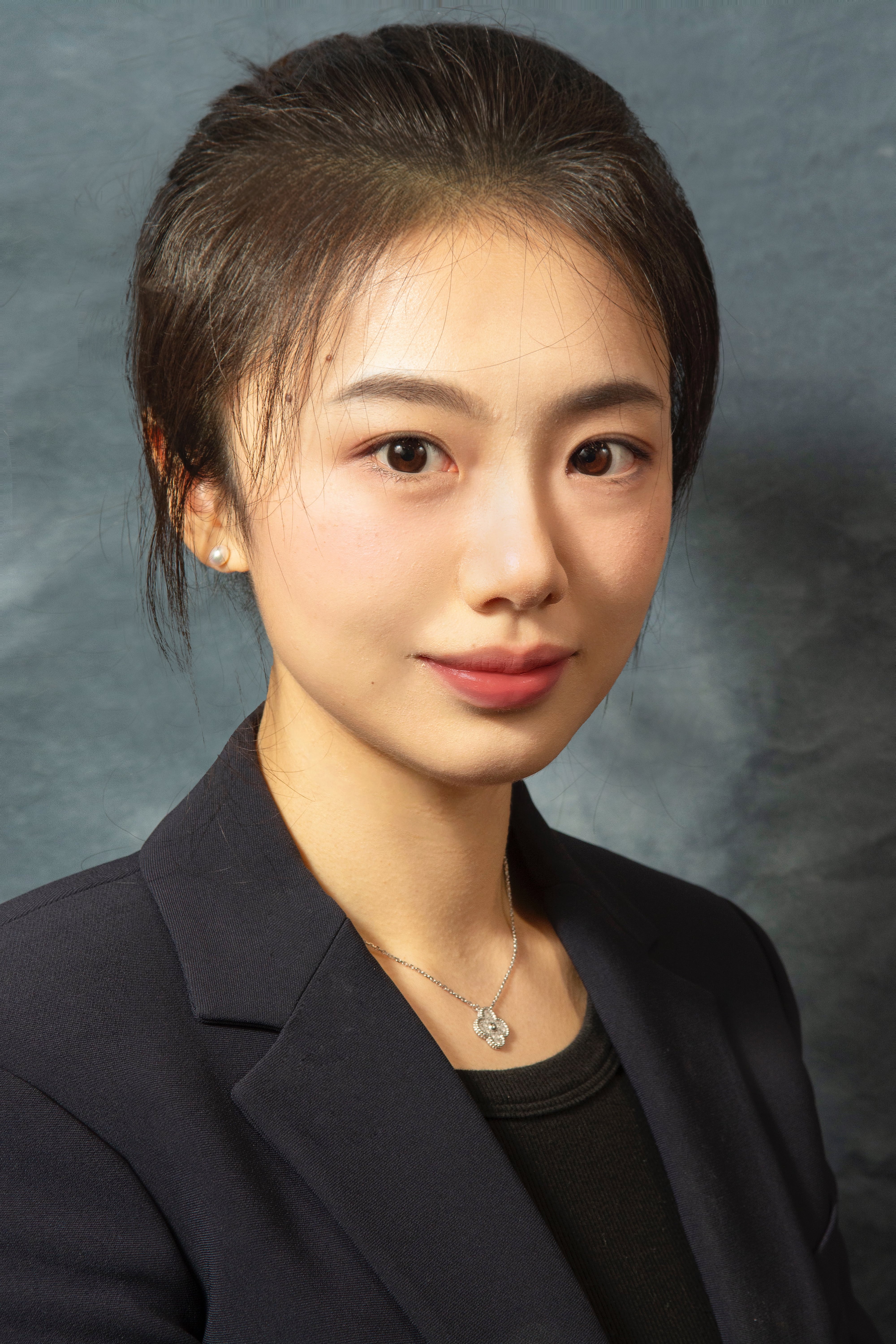}}]{Chuchu Chen} (Member, IEEE) 
received the B.Eng. degree in Automation from Harbin Engineering University (HEU), China, in 2017, and the Ph.D. degree in Mechanical Engineering from the University of Delaware (UD), Newark, DE, USA, in 2025, where she worked in the Robot Perception and Navigation Group (RPNG). She is currently an Assistant Professor in the Department of Mechanical and Aerospace Engineering at The George Washington University (GWU), Washington, DC, USA, where she directs the Estimation, Perception, and Intelligent Computing (EPIC) Lab. Her research interests lie at the intersection of state estimation, spatial perception, and embodied intelligence.
She has received multiple recognitions, including the Doctoral Fellowship for Excellence at the University of Delaware, the Best Student Paper Award Finalist at RSS 2023, and the Best Paper Award Finalist at ICRA 2024 (Robot Vision).
\end{IEEEbiography}
\vspace{-4.5\baselineskip}

\begin{IEEEbiography}[{\includegraphics[width=1in,height=1.25in,clip,keepaspectratio]{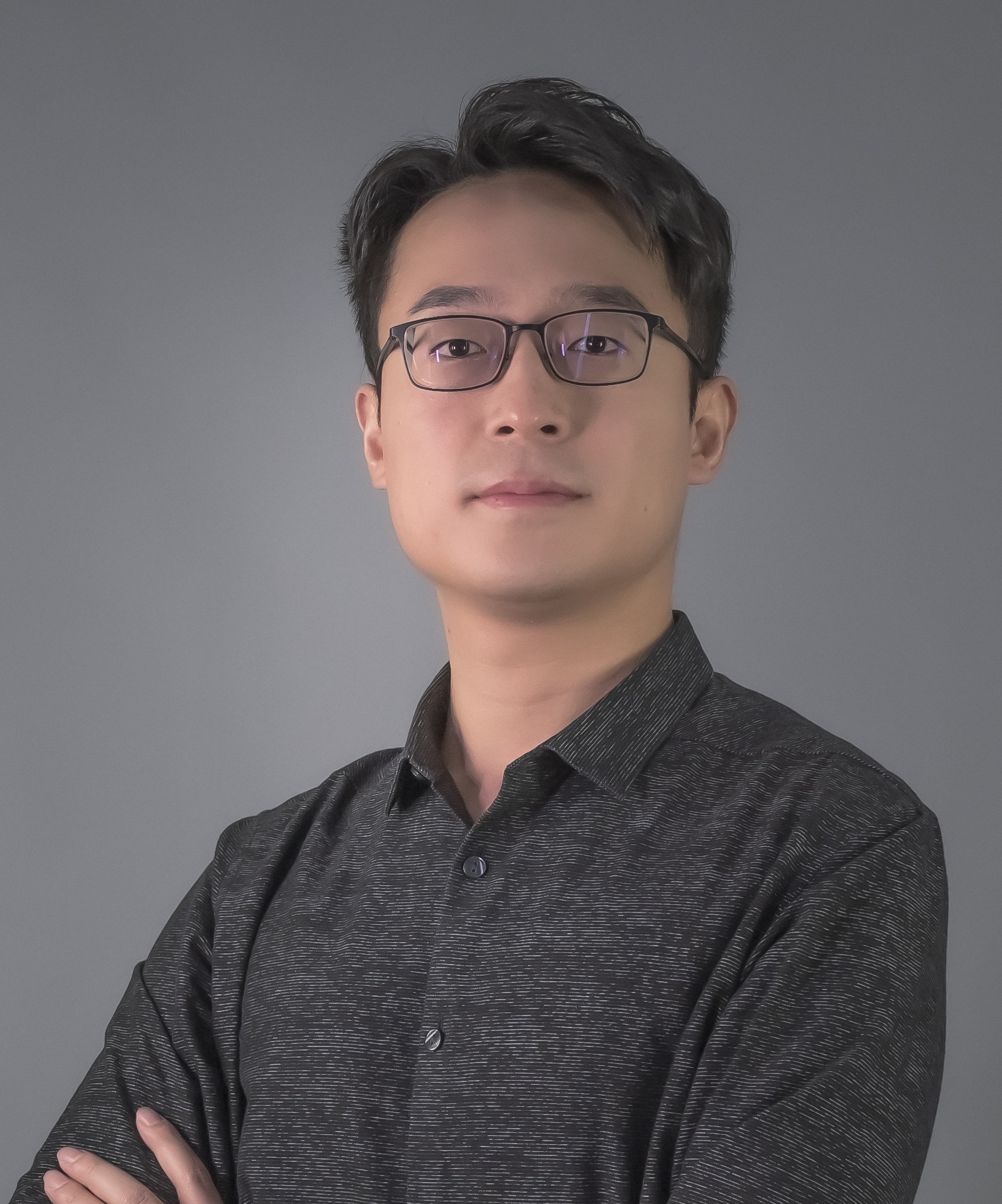}}]{Kejian Wu}
received a Bachelor of Engineering in Information and Communication Technology from Zhejiang University, Hangzhou, China, in 2010, and a Ph.D. in Electrical and Computer Engineering from University of Minnesota - Twin Cities, Minneapolis, MN, USA, in 2020. His research interests focus on simultaneous localization and mapping (SLAM), visual-inertial odometry (VIO), computer vision, estimation theory, sensor fusion and calibration, and artificial intelligence. From 2013 to 2018, he was a member of UMN MARS Lab’s partnership with Project Tango, Google, and an intern visual-inertial system researcher, Daydream, Google. 
He is currently the co-founder and head of perception at XREAL Inc, where he leads the perception and algorithm team at the company, building technical foundations of AR glasses.
\end{IEEEbiography}
\vspace{-4.5\baselineskip}

\begin{IEEEbiography}[{\includegraphics[width=1in,height=1.25in,clip,keepaspectratio]{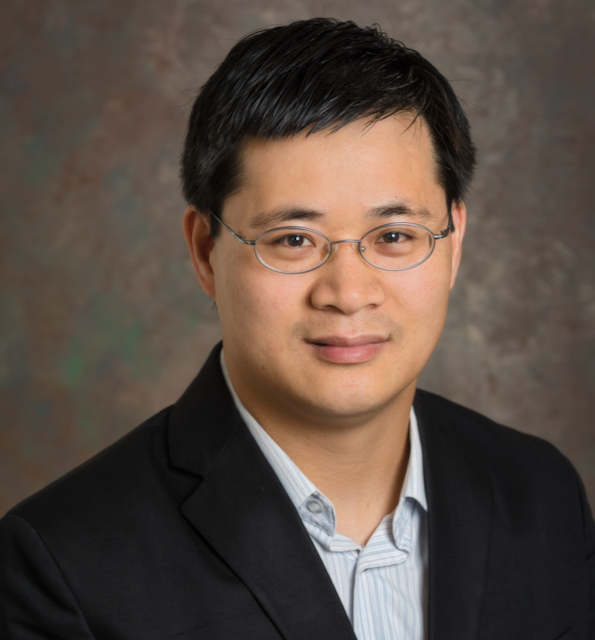}}]{Guoquan (Paul) Huang}
(Senior Member, IEEE) received his BS in
automation (electrical engineering) from the University of Science and
Technology Beijing, China, in 2002, and MS and PhD in computer science
from the University of Minnesota--Twin Cities, in 2009 and 2012,
respectively. He currently is a Professor of Mechanical Engineering
(ME) and Computer and Information Sciences (CIS) at the University of
Delaware (UD), where he is leading the Robot Perception and Navigation
Group (RPNG).  From 2012 to 2014, he was a Postdoctoral Associate with
the MIT CSAIL (Marine Robotics). His research interests focus on state
estimation and spatial computing for autonomous robots and mobile
devices, including sensing, calibration, localization, mapping,
perception, and navigation of ground, aerial and underwater vehicles.
He serves as an Associate Editor for the IEEE Transactions on Robotics
(T-RO) and IET Cyber-Systems and Robotics (CSR). Dr. Huang has
received multiple honors and awards including the ICRA 2022 Best Paper
Award (Navigation), 2022 GNC Journal Best Paper Award, and the
Finalists of the ICRA 2024 Best Paper Award (Robot Vision), RSS 2023
Best Student Paper Award, ICRA 2021 Best Paper Award (Robot Vision),
RSS 2009 Best Paper Award, and Mid-Career Faculty Excellence in
Scholarship Award, as well as several faculty research awards from
Google and Meta Reality Labs, and the NSF and NASA Research Initiation
Awards.
\end{IEEEbiography}


\end{document}